\documentclass[accepted]{uai2026} % after acceptance, for a revised version; 
\usepackage[american]{babel}
\usepackage{natbib} % has a nice set of citation styles and commands
\usepackage{mathtools} % amsmath with fixes and additions
\usepackage{booktabs} % commands to create good-looking tables
\usepackage{tikz} % nice language for creating drawings and diagrams
\usepackage{multirow}

\usepackage{amssymb}
\usepackage[utf8]{inputenc} % allow utf-8 input
\usepackage[T1]{fontenc}    % use 8-bit T1 fonts
\usepackage{hyperref}       % hyperlinks
\usepackage{url}            % simple URL typesetting
\usepackage{amsfonts}       % blackboard math symbols
\usepackage{nicefrac}       % compact symbols for 1/2, etc.
\usepackage{microtype}      % microtypography
\usepackage{graphicx}
\usepackage{doi}
\usepackage{amsmath}
\usepackage{physics}
\usepackage{wrapfig}
\usepackage{bm}
\usepackage{algorithm,algpseudocode}
\usepackage{xcolor}
\usepackage{forest}
\usepackage{tcolorbox}
\usepackage{tikz-cd}
\usepackage{enumitem}
\usepackage{graphicx}
\usepackage{subcaption}

\newenvironment{proof}{\paragraph{Proof:}}{\hfill$\square$}

\newtheorem{theorem}{Theorem}

\newtheorem{lemma}{Lemma}

\newtheorem{assumption}{Assumption}
\newtheorem{remark}{Remark}

\title{Provably Efficient Reinforcement Learning in Continuous-Time Episodic MDPs with Poisson Decision Epochs}

\author[1]{Kenny~Guo}
\author[2]{Valentio~Iverson}
\author[3]{Sahan~Wijetunga}
\author[3]{William~Chang}
\affil[1]{%
    Department of Economics\\
    Yale University\\
    New Haven, Connecticut, USA
}
\affil[2]{%
    Department of Computer Science\\
    Massachusetts Institute of Technology\\
    Cambridge, Massachusetts, USA
}
\affil[3]{%
    Department of Mathematics\\
    University of California, Los Angeles\\
    Los Angeles, California, USA
}
  
\begin{document}
\maketitle
\begin{abstract}
Many real-world reinforcement learning (RL) problems evolve in continuous time, where decisions occur at irregular, event-driven intervals rather than at fixed discrete steps. We study episodic continuous-time Markov Decision Processes (MDPs) in which decision epochs are governed by a homogeneous Poisson process and the reward and transition dynamics vary smoothly over time. We consider both a fixed number of jumps per episode and a fixed time budget with a random number of Poisson decision epochs. Under a Lipschitz continuity assumption in time, we exploit local smoothness through discretization and extend both UCRL~\cite{auer2006logarithmic} and Q-learning~\cite{jin2018q} to this setting, proving $\tilde{\mathcal{O}}(T^{2/3})$ regret bounds for both model-based and model-free algorithms. Finally, we establish matching $\tilde{\Omega}(T^{2/3})$ minimax lower bounds, showing that the rate is optimal up to logarithmic factors. These results provide the first tight regret guarantees for Lipschitz-smooth continuous-time episodic MDPs with Poisson decision epochs.
\end{abstract}

\section{Introduction} 
Markov decision processes (MDPs) provide a foundational framework for sequential decision-making under uncertainty. Classical analyses assume finite state and action spaces, as well as discrete time jumps across episodes, enabling the design of reinforcement learning algorithms with provable regret guarantees (e.g. UCRL \cite{auer2006logarithmic, auer2008near}, Q-learning \cite{jin2018q}). However, many real-world applications inherently involve continuous domains. For example, in robotics, control actions lie in continuous ranges \cite{devo2021enhancing, talbot2025continuous}; in recommendation systems, states may be represented in high-dimensional continuous features \cite{steck2021deep, rim2025cyclic}; and in queuing or resource allocation problems \cite{fawaz2021deep, zhang2021deep}, time is often treated as a continuous variable. Extending reinforcement learning algorithms to these continuous settings presents both conceptual and technical challenges.

The principal difficulty lies in the exploration-exploitation tradeoff in continuous domains. Uniform exploration is infeasible, yet exploiting structure is essential to achieve nontrivial guarantees. One wide studied structural assumption is a linear structure on the MDPs parameters \cite{jin2020provably}. This assumes that there is a feature vector $\phi(x,a) \in \mathbb{R}^d$ for every state action pair that is known and that reward and transition dynamics are given by an inner product of $\phi(x,a)$ with some unknown vector that only depends on the step. This, however, can be restrictive in certain applications. 

A more general condition often times studied in the context of bandits is Lipschitz continuity \cite{bubeck2011lipschitz, wang2020towards, feng2022lipschitz}. The study of Lipschitz continuity in online decision-making has a long history in multi-armed bandits (MABs), where the actions are indexed by a metric space and the reward function is Lipschitz continuous with respect to this metric, with strong regret guarantees on the order of $\mathcal{O}(T^{\frac{2}{3}})$ possessed by the zooming algorithm and related approaches \cite{podimata2021adaptive, kleinberg2019banditsexpertsmetricspaces} which (adaptively) refine discretizations of the action space. Extending these ideas to MDPs introduces additional layers of complexity: state-dependent Lipschitz structure, nontrivial horizon dynamics, and the need to propagate estimation errors through Bellman recursions. Recent work has begun to address these challenges through model-based and model-free algorithms on Lipschitz state and action spaces \cite{pirotta2015policy, metelli2024performance, asadi2018lipschitz}, but many open questions remain regarding tight theoretical regret bounds, adaptive discretization strategies, continuous-time formulations, and handling unknown smoothness parameters. 

Our work builds on several key research directions including reinforcement learning with indirect feedback models, advances in linear MDPs, randomization and active learning techniques for exploration, and Lipschitz continuous reinforcement learning. A comprehensive overview of the broader related work is in Appendix \ref{sec:related}.

\paragraph{Our Contributions}
In this paper, we formulate and analyze a \emph{continuous-time} extension of episodic tabular MDPs in which the episode horizon is indexed by a continuous time variable, decision epochs are induced by a homogeneous Poisson process with rate $\lambda$, and the reward and transition dynamics are Lipschitz-smooth with respect to time.

The homogeneous Poisson process is a natural abstraction for \emph{event-driven} decision-making. In many queueing, service, and resource-allocation systems, decisions are not made at fixed deterministic times, but are instead triggered by exogenous events such as arrivals, service completions, or user requests. Over short time horizons, these events are often well approximated as occurring independently at a constant rate, leading to exponentially distributed inter-event times and a memoryless decision structure.\footnote{If the homogeneous rate $\lambda$ is unknown, it can be estimated from observed inter-arrival times $\tau_1,\ldots,\tau_N$ via the maximum likelihood estimator $\hat{\lambda}=N/\sum_{i=1}^N \tau_i$. Standard concentration bounds for sums of exponential RVs can then be used to construct high-probability confidence intervals for $\lambda$. Extending the analysis to non-homogeneous or state-dependent arrival rates is an important direction for future work.}

The smoothness assumption complements this event-driven model by enabling local generalization across time. When rewards and transition dynamics vary smoothly with the decision epoch, observations collected at one time remain informative about nearby times. This allows the learner to discretize the continuous-time horizon adaptively, rather than treating each time point independently, while still preserving statistical efficiency.

Our contributions are four-fold.

First, in Section~\ref{sec:preliminary}, we formalize the continuous episode-horizon MDP framework under Lipschitz continuity, in which each episode consists of a \emph{fixed number of decision epochs} $H>0$ generated by a homogeneous Poisson process.

Second, in Section~\ref{sec4.1:ucrl}, we present a model-based algorithm extending UCRL \cite{auer2006logarithmic} via time discretization and confidence sets adapted to the fixed-jump continuous setting, and prove an $\mathcal{O}(T^{2/3})$ regret bound. The analysis controls the randomness in when jumps occur, by deriving a high-probability bound on the physical time of the $H$-th jump across all episodes, which reduces the infinite horizon to a bounded window suitable for discretization.

Third, in Section~\ref{subsec:q-learning}, we develop a simpler, model-free continuous-time extension of Q-learning \cite{jin2018q} for the fixed-jump setting, proving the same $\mathcal{O}(T^{2/3})$ regret rate and showing that smoothness-based discretization can be combined with optimistic Q-learning updates in continuous domains. In Section~\ref{subsec:fixed-time}, we introduce a complementary formulation in which the episode instead ends once a \emph{fixed time budget} $H>0$ is exhausted, so that the number of jumps is random, and highlight how this changes the value-function representation once the time axis is discretized into bins: the fixed-jump setting keeps a separate table of value estimates for each combination of jump count and time bin, whereas here a single table indexed by state and time bin suffices. We adapt the Q-learning algorithm to this setting by replacing the random number of jumps with a high-probability effective horizon, and the analysis goes through as before, again yielding an $\mathcal{O}(T^{2/3})$ rate.

Finally, we establish matching $\widetilde\Omega(T^{2/3})$ minimax lower bounds for \emph{both} formulations, using an information-theoretic construction adapted from the discrete-time literature that reduces the problem to a Lipschitz contextual bandit over time bins. For the fixed-jump formulation (Appendix~\ref{appendix:fixed_jump_lower_bound}) the bound is $\widetilde\Omega\big(L^{1/3}\lambda^{-1/3} H T^{2/3}\big)$, and for the fixed-time formulation (Appendix~\ref{sec:lower_bound_construction}) it is $\widetilde\Omega\big(S L^{1/3}(\lambda H T)^{2/3}\big)$, where the two differ through the effective number of decisions available in each episode ($H$ vs.\ $\lambda H$), together with an additional $S$ factor in the fixed-time bound. Together, these show the $T^{2/3}$ exponent is unavoidable in both settings.

 \section{Preliminary and Problem Statement}\label{sec:preliminary}
 We study a learner facing a $T$-episodic, tabular, continuous-time MDPs, where $\mathcal{S}$ is the (finite) state space and $\mathcal{A}$ is the (finite) action space of the leader. We let $|\mathcal{S}| = S$ and $|\mathcal{A}| = A$. We let the episode time layers range from $h \in [0, \infty)$. In this way, we can view each time layer, state, action triple as $(h, x, a) \in [0, \infty) \times \mathcal{S} \times \mathcal{A}$. 
 
 Each episode, we suppose the learner starts at layer $h =0$ and some fixed state $x_0$. At some state $x$ in layer $h$, the learner takes an action $a$, receives reward $r_h(x,a) \in [0,1]$, and transitions to some next state governed by the distribution $P_h(\cdot|x,a)$. Each episode, the learner makes a \emph{fixed} number $H$ jumps to different states in layers across $[0,\infty)$ where the \emph{layer} of the next state is sampled via a homogeneous Poisson process. Let $h(n)$ be the time at the $n$-th jump, so that $h(n+1) - h(n) \sim \mathrm{Exp}(\lambda)$ for some fixed, known $\lambda$.

 To make learning a continuous space from sampling tractable, we make the following Lipschitz assumption on the rewards and transitions for corresponding state action pairs across distinct layer values of $h$. 
\begin{assumption}\label{assumption:lipschitz}
     There exists some constant $L>0$ such that for all $(x,a) \in \mathcal{S}\cross\mathcal{A}$ and for some $h, h' \in [0, \infty)$, it holds that
      \begin{align}
     |r_h(x, a) - r_{h'}(x,a)| & \leq L|h - h'|,\\
     \norm{P_h(\cdot | x, a) - P_{h'}(\cdot | x, a)}_1 &\leq L|h-h'|.
 \end{align}
 \end{assumption}
 \paragraph{Bellman Equations.} Let $\pi:[0, \infty) \times \mathcal{S} \rightarrow \mathcal{A}$ be a policy of the learner that maps states and layers to actions. For some policy $\pi$, we recursively define the state value, or $V$-value, function and the state-action, or $Q$-value, function using the following Bellman equations for some $(h, x, a)$ and $n\in \{0, 1, \ldots, H-1\}$:
\begin{align}
    &Q_{n, h}^\pi (x, a) \\
    &= r_h(x,a) + \underset{\substack{h' \sim \mathrm{Exp}(\lambda) \\ x' \sim P_h(\cdot \mid x, a)}}{\mathbb{E}}[V_{n+1, h+h'}^\pi (x')]\\
     &= r_h(x,a) + \int_0^\infty \lambda e^{-\lambda h'}\underset{x' \sim P_h(\cdot |x ,a)}{\mathbb{E}}[V_{n+1, h+h'}^\pi (x')]dh'\label{eq:timebellman}\\
 % =  r_h(x,a) + &\int_0^\infty \lambda e^{-\lambda h'}\underset{x' \sim P_h(\cdot |x ,a)}{\mathbb{E}}[\max_{a'} Q_{n+1, h+h'}^\star (x', a')]dh' \\
     &V_{n,h}^\pi(x) = \underset{{a\sim \pi(\cdot|h,x)}}{\mathbb{E}}Q_{n,h}^\pi(x,a),
\end{align}
where we initialize the last step $V^\pi_{H, h}(x) :=0$ for any $h, x$. Note that based on the Lipschitz condition, for each $a' \in A$, $Q_{n+1, h'}^\pi (x', a')$ is bounded and continuous in $h'$ (induction can show this for all $n$). Thus, the integral in the Bellman equation is integrable and the definitions are valid. From the definition, $V^{\pi}_{0,0}(x_0)$ intuitively captures the expected sum of rewards following $\pi$ starting at the initial state $x_0$ at layer $h=0$.

For each episode $t \in [T]$, the learner decides on a policy $\pi_t$, with the goal of minimizing the \emph{regret}, $R_T$, defined as:
$$
R_T := \sum_{t=1}^T  V^{\pi_\star}_{0,0}(x_0) - V^{\pi_t}_{0,0}(x_0),
$$
where we define $\pi_\star \in \text{argmax}_\pi V^{\pi}_{0,0}(x_0)$ to be (one of) the optimal policies.

\section{Main Algorithms and Results}

This section presents our two main algorithms and their regret guarantees. The construction and analysis follow a common three-step roadmap, which we outline here.

\begin{enumerate}[leftmargin=*, label=\textbf{Step \arabic*.}]
  \item \textbf{Bounding the relevant time interval.} Since each jump length is $\mathrm{Exp}(\lambda)$, a union bound shows all $HT$ jumps across all episodes lie in $[0, M]$ with high probability, for $M = \frac{H}{\lambda}\ln\frac{HT}{\delta}$. This reduces the infinite horizon to a finite domain.
  \item \textbf{Discretization of time.} Partition $[0,M]$ into $K = M/\gamma$ bins of width $\gamma$. We can thus pool samples within same bins. By Lipschitz continuity, pooled samples represent the true kernel at any point in the bin up to an $O(L\gamma)$ approximation error.
  \item \textbf{Balancing estimation error against Lipschitz bias.} Within each bin, statistical estimation error decays as $O(m^{-1/2})$ with the number of samples $m$. The Lipschitz approximation bias is $O(L\gamma)$. Choosing $\gamma \sim T^{-1/3}$ equalizes both terms and yields the $T^{2/3}$ rate.
\end{enumerate}

For the fixed-time formulation (Section~\ref{subsec:fixed-time}), the analysis additionally requires a high-probability bound on the random number of Poisson jumps within each episode, provided by Lemma~\ref{lemma:jumpcount_bound}.

\subsection{Continuous-Time UCRL Algorithm}\label{sec4.1:ucrl}

Upper Confidence Reinforcement Learning (UCRL) \cite{auer2006logarithmic, auer2008near} is a model-based algorithm for the finite-horizon, tabular MDP that employs the "optimism principle'' by constructing high-confidence estimates of the MDP parameters. For transitions, UCRL computes empirical distributions from samples and uses them as centers of shrinking confidence sets $C_{t,\delta}$, guaranteeing that the true transition kernel $P^\star$ lies in $C_{t,\delta}$ for all $t$ with probability at least $1-\delta$.

From this confidence set, for each episode $t$, UCRL chooses the optimistic policy
\begin{equation}\label{eq:ucrlpolicy}
    \pi_t = \text{argmax}_\pi \max_{P \in C_{t, \delta}} V^\pi_P(x_0)
\end{equation}
where the (tabular) value functions are computed using an optimistic transition model
$\widetilde{P}_t = \arg \max_{P\in C_{t, \delta}} V^\star_P(x_0)$ (and upper-confidence reward estimates if stochastic). While computationally difficult, UCRL achieves an optimal $\mathcal{O}(\sqrt{T})$ regret in traditional finite-horizon MDPs, and provides a foundation for our analysis of the more implementation-friendly Q-learning algorithm.

We now analyze our continuous-time extension of UCRL. The full algorithm is given in Algorithm \ref{algo:time}. In the next sections, we describe the key details that yield the $\mathcal{O}(T^{2/3})$ guarantee in Theorem \ref{thm:ucrlregret}.

\textbf{Step 1.} As aforementioned, our first step is to show that with high probability it suffices to restrict the analysis to a finite interval $[0,M]$, so that across all $T$ episodes the endpoint of the $H$-th jump lies in $[0,M]$.

\begin{lemma}\label{lemma:horizon_M}
When $M = \frac{H}{\lambda} \ln \left( \frac{HT}{\delta} \right)$, then all $T$ episodes will terminate, i.e. have the $H$-th jump, within $[0,M]$ with probability at least $1 - \delta$. 
\end{lemma}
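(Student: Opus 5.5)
The plan is to avoid working with the distribution of the $H$-th jump time directly (a Gamma$(H,\lambda)$ variable). Instead I will bound every individual inter-jump gap and then sum. Fix an episode $t\in[T]$ and write its jump times as $h(0)=0\le h(1)\le\cdots$. The time of the $H$-th jump is a sum of at most $H$ i.i.d.\ gaps $\tau_{t,1},\dots,\tau_{t,H}$ with $\tau_{t,i}=h(i)-h(i-1)\sim\mathrm{Exp}(\lambda)$. Whether the first decision sits at time $0$, so that only $H-1$ gaps matter, does not affect the argument, since $H$ is an upper bound on the gap count in either case. So it suffices to show that, with probability at least $1-\delta$, all $HT$ gaps $\{\tau_{t,i}\}_{t\in[T],\,i\in[H]}$ are at most $s:=\frac{1}{\lambda}\ln\frac{HT}{\delta}$. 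On that event, every episode has $h(H)\le \sum_{i=1}^H \tau_{t,i}\le Hs = M$.

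The key steps, in order, are as follows.
\begin{enumerate}
\item Use the exponential tail $\Pr[\tau>s]=e^{-\lambda s}$. With the chosen $s$ this is exactly $\delta/(HT)$.
\item Take a union bound over the $HT$ gaps across all episodes. This gives $\Pr[\exists\,(t,i):\tau_{t,i}>s]\le HT\cdot \frac{\delta}{HT}=\delta$. Independence is not needed here; the union bound handles everything.
\item On the complementary event, sum the per-gap bound within each episode to get that the $H$-th jump of every episode lies in $[0,M]$.
\end{enumerate}
This yields the claim with probability at least $1-\delta$.

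There is no serious obstacle. The only points needing care are bookkeeping: counting at most $H$ gaps per episode, and noting that the jump process is exogenous. The gaps are $\mathrm{Exp}(\lambda)$ regardless of the learner's policy or state, so the bound holds uniformly over whatever algorithm is run. I would also remark that a sharper bound via a Chernoff bound on the Gamma$(H,\lambda)$ sum, of order $\frac{1}{\lambda}(H+\ln\frac{T}{\delta})$, is available. The stated $M$ is weaker, but the crude per-gap union bound suffices for the lemma as stated.
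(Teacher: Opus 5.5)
Your proposal is correct and follows the paper's proof. Both bound each of the $HT$ inter-jump gaps by $\frac{1}{\lambda}\ln\frac{HT}{\delta}$ via the exponential tail, take a union bound over all gaps, and sum the at most $H$ gaps per episode to place every $H$-th jump in $[0,M]$; your side remark that a Gamma-tail Chernoff bound would give a tighter horizon of order $\frac{1}{\lambda}\left(H+\ln\frac{T}{\delta}\right)$ is also correct, though not needed for the lemma as stated.
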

\begin{proof}
It suffices to show that with high probability each of the $HT$ jumps across all episodes has length at most $\frac{1}{\lambda}\ln\left(\frac{HT}{\delta}\right)$. Since $h' \sim \mathrm{Exp}(\lambda)$,
\[
P\left(h' > \frac{1}{\lambda} \ln \left(\frac{HT}{\delta}  \right)\right) = \frac{\delta}{HT}.
\]
A union bound over the $HT$ jumps implies the probability that any jump exceeds this length is at most $\delta$; taking the complement gives the result.
\end{proof}

\textbf{Step 2.} With a high-probability finite horizon, we can apply discretization. Partition $[0,M]$ into $K = \frac{H}{\lambda \gamma}\log\frac{HT}{\delta}$ disjoint bins of width $\gamma$, and call the set of bins $\mathcal{K}$. Demarcate each bin by its midpoint; let $k$ denote a midpoint and, abusing language, we refer to the corresponding bin by $k$. For any $h \in [0,M]$, let $k(h)$ be the bin containing $h$. We then form empirical estimates of rewards and transitions on this discretized set.

\paragraph{Confidence set for transitions.}
At each episode, for each bin-state-action triple $(k,x,a)$ and next state $x'$, we compute the maximum likelihood estimate
\begin{equation}
\widehat{P}_{t,k}(x'|x,a) = \frac{N_t(k,x,a,x')}{\max\{1, N_t(k,x,a)\}},
\end{equation}
where $N_t(k,x,a,x')$ is the number of times up to episode $t$ that $(x,a)$ \emph{in bin $k$} is visited and transitions to $x'$ in the next jump, and $N_t(k,x,a)$ is the number of times up to episode $t$ that $(x,a)$ in bin $k$ is visited. We then define, for $\delta \in (0,1)$,
\begin{equation}\label{eq:transconfset}
\begin{split}
     C_{t, \delta} := \{P_h(\cdot|x,a) : ||\widehat{P}_{t,k(h)}(\cdot|x,a)
- P_h(\cdot|x,a)||_1 \\
\leq B_\delta(N_t(k,x,a))\}
\end{split}
\end{equation}
where $B_\delta(N_t(k,x,a)) = S\sqrt{\frac{2}{N_t(k,x,a)}\log\frac{1}{\delta}}+L\gamma.$

\begin{lemma}\label{lemma:confidenceset}
    With probability at least $1-\delta$, the true transitions $P^\star_h(\cdot|x,a) \in C_{t, \delta}$ for all $t \in [T]$.
\end{lemma}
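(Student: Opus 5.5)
The plan is to split the deviation $\|\widehat{P}_{t,k(h)}(\cdot|x,a)-P^\star_h(\cdot|x,a)\|_1$ into a statistical part and a discretization bias, and to bound each by one of the two terms of $B_\delta$. Fix a bin $k$, a pair $(x,a)$, and a time $h$ in bin $k$. Let $h_1,\dots,h_m$ be the (random) times at which $(x,a)$ was visited in bin $k$, with $m=N_t(k,x,a)$. Define the averaged kernel
\[
\bar P_{t,k}(\cdot|x,a) := \frac{1}{m}\sum_{i=1}^m P^\star_{h_i}(\cdot|x,a).
\]
By the triangle inequality,
\[
\|\widehat{P}_{t,k}-P^\star_h\|_1 \le \|\widehat{P}_{t,k}-\bar P_{t,k}\|_1 + \|\bar P_{t,k}-P^\star_h\|_1 .
\]
For the bias term, every $h_i$ lies in the same bin as $h$, so $|h_i-h|\le\gamma$. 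Assumption~\ref{assumption:lipschitz} and convexity of the norm then give $\|\bar P_{t,k}-P^\star_h\|_1\le \frac1m\sum_i L|h_i-h|\le L\gamma$. This is the $L\gamma$ summand of $B_\delta$.

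For the statistical term, write $\|\widehat{P}_{t,k}-\bar P_{t,k}\|_1=\sum_{x'}|\widehat P_{t,k}(x'|x,a)-\bar P_{t,k}(x'|x,a)|$ and treat each next state $x'$ separately. The $i$-th visit produces the indicator $Z_i=\mathbf 1\{x'_i=x'\}$. Conditional on the history up to that visit, including $h_i$, its mean is $P^\star_{h_i}(x'|x,a)$. So $D_i := Z_i-P^\star_{h_i}(x'|x,a)$ is a martingale difference sequence bounded in $[-1,1]$. The sample means are not i.i.d., because the visit times are random and depend on the learner. For this reason I will use Azuma--Hoeffding rather than plain Hoeffding, which gives $\big|\frac1m\sum_i D_i\big|\le\sqrt{\frac{2}{m}\log\frac1{\delta'}}$ with probability $1-\delta'$ for fixed $m$. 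Summing over the $S$ next states gives the $S\sqrt{\frac2m\log\frac1{\delta'}}$ form, which matches the first term of $B_\delta$. The case $m=0$ is trivial, since then $B_\delta=\infty$.

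The main obstacle is uniformity. We need the bound simultaneously for all $t\in[T]$, all $(k,x,a,x')$, and the random count $m$. I would handle the random count with the standard stopping-time device: index the samples of each $(k,x,a)$ by their visit order, so that the first $m$ samples form a fixed-length martingale for each $m\le HT$. Then I union bound over $m\in[HT]$, the $K$ bins, $S^2A$ triples, and the next states, taking $\delta'=\delta/(KS^2AHT)$. Two further points need to be handled:
\begin{itemize}
\item The bins only cover $[0,M]$, so I would intersect with the event of Lemma~\ref{lemma:horizon_M}; on that event all visits lie in $[0,M]$, and I split $\delta$ between the two events.
\item The resulting logarithm is $\log(KS^2AHT/\delta)$ rather than $\log(1/\delta)$. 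I would either absorb it by redefining $\delta$ inside $B_\delta$, which is the reading the lemma's "$\log\frac1\delta$" implicitly uses, or note that it only changes logarithmic factors.
\end{itemize}
Combining the two terms on the intersection of these events yields $P^\star_h\in C_{t,\delta}$ for all $t$ with probability at least $1-\delta$.
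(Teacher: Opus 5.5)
Your proof is correct and follows the same route as the paper. You split $\|\widehat P_{t,k}-P^\star_h\|_1$ by the triangle inequality into a per-next-state concentration term, summed over the $S$ next states to give the $S\sqrt{\frac{2}{n}\log\frac{1}{\delta}}$ term, plus a within-bin Lipschitz bias bounded by $L\gamma$. Your version is also more careful than the paper's, which applies the independent-sample sub-Gaussian bound (Corollary 5.5) to adaptively collected samples and skips the union bound over counts, bins, state-action pairs and next states. So your remark that the logarithm must really be $\log(KS^2AHT/\delta)$, or equivalently that $\delta$ must be rescaled inside $B_\delta$, is a needed correction to the statement rather than a cosmetic one.
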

\begin{proof}
Pick any bin and fix $(x,a)$ within that bin. For each next state $x'$, let $y_1,\dots,y_n\in\{0,1\}$ be indicators of whether $x'$ is visited on the next jump, with means $P_1(x'),\dots,P_n(x')$. Then $y_i-P_i(x')$ is $1$-subgaussian with mean $0$. By \cite{lattimore2020bandit} (Corollary 5.5), for $\epsilon=\sqrt{\frac{2}{n}\log\frac{1}{\delta}}$,
\[
P\left(\frac{1}{n}\sum_{i=1}^n (y_i-P_i(x')) \ge \epsilon\right)
\le
\exp\left(-\frac{n\epsilon^2}{2}\right)=\delta,
\]
and hence with probability at least $1-\delta$,
\[
\left|\frac{1}{n}\sum_{i=1}^n (y_i-P_i(x'))\right|
\le
\sqrt{\frac{2}{n}\log\frac{1}{\delta}}.
\]
Writing $\widehat{P}(x'|x,a)=\frac{1}{n}\sum_{i=1}^n y_i$, we obtain
\begin{align*}
&\left|\widehat{P}(x'|x,a)-P_h(x'|x,a)\right| \\
&\le
\left|\frac{1}{n}\sum_{i=1}^n (y_i-P_i(x'))\right|
+
\left|\frac{1}{n}\sum_{i=1}^n P_i(x')-P_h(x'|x,a)\right| \\
&\le
\sqrt{\frac{2}{n}\log\frac{1}{\delta}}
+
\frac{1}{n}\sum_{i=1}^n |P_i(x')-P_h(x'|x,a)|.
\end{align*}
Summing over $x'$ and using Lipschitz continuity within the bin yields
\[
\norm{\widehat{P}(\cdot| x, a) - P_h(\cdot | x, a)}_1
\le
S\sqrt{\frac{2}{n}\log\frac{1}{\delta}} + L\gamma,
\]
which implies $P^\star_h(\cdot|x,a)\in C_{t,\delta}$.
\end{proof}

\paragraph{Confidence set for rewards.}\label{lemma:confsetrewards}
We estimate rewards analogously. Fix a bin and a state-action pair, and let $X_1,\dots,X_n$ be the observed rewards with corresponding means $r_1,\dots,r_n$. Since $X_i-r_i$ is $1$-subgaussian,  \cite{lattimore2020bandit} (Corollary 5.5) gives
\[
\left| \frac{1}{n} \sum_{i = 1}^n (X_i - r_i) \right|
\le
\sqrt{\frac{2}{n} \log \frac{1}{\delta}}.
\]
For any reward $r$ realized in this bin,
\begin{align*}
\left| \frac{1}{n} \sum_{i = 1}^n X_i - r \right|
&\le
\left| \frac{1}{n} \sum_{i = 1}^n (X_i - r_i) \right|
+
\left| \frac{1}{n} \sum_{i = 1}^n r_i - r \right| \\
&\le
\sqrt{\frac{2}{n} \log \frac{1}{\delta}}
+
\frac{1}{n} \sum_{i = 1}^n |r_i - r| \\
&\le
\sqrt{\frac{2}{n} \log \frac{1}{\delta}} + L \gamma,
\end{align*}
so with probability at least $1-\delta$ the true reward $r$ lies in
\begin{equation}\label{eq:rewconfset}
\left[ \frac{1}{n} \sum_{i = 1}^n X_i - \Delta,\ \frac{1}{n} \sum_{i = 1}^n X_i + \Delta \right],
\end{equation}

where $\Delta = \sqrt{\frac{2}{n} \log \frac{1}{\delta}} + L \gamma.$ With these confidence sets, we can construct an optimistic model on the discretized space and compute the optimistic policy as in UCRL. We state the regret guarantee below.

\begin{theorem}[Continuous-Time UCRL Regret]\label{thm:ucrlregret}
    With probability at least $1-\mathcal{O}(\delta)$, using Algorithm \ref{algo:time} in the continuous-time MDP setting yields regret,
    
        \begin{align*}
            R_T &\leq 2H\sqrt{\frac{HT}{2}\log\!\left(\frac{1}{\delta}\right)} \\
            &\quad + 4(H\alpha_\delta+ \beta_\delta) \sqrt{SAHT\cdot\frac{H}{\lambda \gamma}\log\!\left(\frac{HT}{\delta}\right)} \\
            &\quad +  2(H\alpha_\delta + \beta_\delta) \frac{H}{\lambda \gamma}\log\!\left(\frac{HT}{\delta}\right) HSA \\
            &\quad + 2(H+1)THL\gamma
        \end{align*}

    where $\alpha_\delta = \sqrt{2S^2\log(1/\delta)}$ and $\beta_\delta = \sqrt{2\log(1/\delta)}$. In particular, for large $T$, balancing the leading $\sqrt{T/\gamma}$ term against the $T\gamma$ term (both dominate at large $T$) by choosing
\[
\gamma
=
\widetilde{\Theta}
\!\left(
\frac{
C_\delta^{2/3}(SA)^{1/3}
}{
(H+1)^{2/3}L^{2/3}\lambda^{1/3}T^{1/3}
}
\right),
\]
where \(C_\delta=H\alpha_\delta+\beta_\delta = \widetilde{\mathcal{O}}(HS+1)\), yields
\[
R_T
\leq
\widetilde{\mathcal{O}}
\!\left(
C_\delta^{2/3}
(SA)^{1/3}
H(H+1)^{1/3}
L^{1/3}
\lambda^{-1/3}
T^{2/3}
\right)
\]
up to lower-order terms in \(T\).
In particular, since \(C_\delta=\widetilde{\mathcal{O}}(HS+1)\), for \(H,S\ge 1\),
\[
R_T
\leq
\widetilde{\mathcal{O}}
\!\left(
S A^{1/3} H^2 L^{1/3}\lambda^{-1/3}T^{2/3}
\right).
\]
\end{theorem}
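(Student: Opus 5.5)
We follow the standard UCRL optimism argument of \cite{auer2008near}, carried out on the discretized model and with one extra bias term coming from discretization. First we fix a good event $\mathcal{E}$. It is the intersection of three events: the event of Lemma~\ref{lemma:horizon_M} (all jumps lie in $[0,M]$); the event of Lemma~\ref{lemma:confidenceset} (the true $P_h$ lies in $C_{t,\delta}$ for all $t$); and the reward confidence event \eqref{eq:rewconfset}. We take a union bound over bins, states, actions and episodes, absorbing the resulting logarithmic factors into $\log(1/\delta)$. This gives $\Pr(\mathcal{E})\ge 1-\mathcal{O}(\delta)$. On $\mathcal{E}$, the optimistic value $\widetilde V_t$ computed by Algorithm~\ref{algo:time} satisfies $\widetilde V_{t,0,0}(x_0)\ge V^{\pi_\star}_{0,0}(x_0)$. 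To see this, run backward induction over the jump index $n=H-1,\dots,0$. At every $(n,h,x,a)$ the true pair $(r_h,P_h)$ is feasible for the inner maximization in \eqref{eq:ucrlpolicy}, since both confidence radii already contain the $L\gamma$ slack. The regret is therefore bounded by $\sum_t \big(\widetilde V_{t,0,0}(x_0)-V^{\pi_t}_{0,0}(x_0)\big)$.

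Next we decompose this optimistic gap along the realized trajectory of episode $t$, using the Bellman equation \eqref{eq:timebellman} for both the optimistic and the true model. At jump $n$, with state $x_n$ and time $h_n$, the one-step difference has three parts. The first is a reward error of at most $2\Delta(N_t(k(h_n),x_n,a_n))$. The second is a transition error bounded by $\|\widetilde P-P_{h_n}\|_1\,\|V\|_\infty\le 2H\,B_\delta(N_t)$, because values lie in $[0,H]$. The third is a martingale difference term $\xi_{t,n}$. Here $\xi_{t,n}$ is the expectation over $(h',x')$ minus the realized next time and next state, applied to $\widetilde V-V^{\pi_t}$. Since the Poisson time increment is independent of the transition, it appears only inside this martingale term. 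Summing the $\xi_{t,n}$ over $HT$ steps with increments bounded by $2H$, Azuma--Hoeffding gives the term $2H\sqrt{\tfrac{HT}{2}\log(1/\delta)}$. The $L\gamma$ parts of $\Delta$ and $B_\delta$ contribute at most $2(H+1)L\gamma$ per step, and over the $HT$ steps this is $2(H+1)THL\gamma$. One further bias can arise if the optimistic model is evaluated at bin midpoints rather than exactly at $h$. We expect the Lipschitz property of $Q$ in $h$, obtained by induction, to absorb it into the same term, possibly at the cost of a constant.

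The remaining statistical terms are $\sum_{t,n}(H\alpha_\delta+\beta_\delta)\big/\sqrt{\max\{1,N_t(k,x,a)\}}$. We group them by the triple $(k,x,a)$; there are $KSA$ such triples. For triples with $N_t=0$ we pay the trivial bound. Counts within one episode can increase by up to $H$ before the estimates are refreshed, so the first few visits cost at most $2(H\alpha_\delta+\beta_\delta)\,KHSA$. This gives the third term once we substitute $K=\frac{H}{\lambda\gamma}\log\frac{HT}{\delta}$. For the rest we use the pigeonhole inequality $\sum_{j\le N}j^{-1/2}\le 2\sqrt N$ together with Cauchy--Schwarz over the $KSA$ triples, whose total count is $HT$. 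This gives $4(H\alpha_\delta+\beta_\delta)\sqrt{SAHT\cdot K}$, the second term. Finally, the choice of $\gamma$ follows from balancing $\sqrt{T/\gamma}$ against $T\gamma$, which yields $\gamma\propto T^{-1/3}$ and the stated $T^{2/3}$ rate.

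We expect the main obstacle to be the trajectory decomposition in continuous time. The optimistic model is defined on bins, but the comparison is made at the exact random times $h_n$. We must check two things carefully. First, the optimistic kernel chosen for bin $k$ must be feasible and must induce a well-defined, bounded value function on $[0,\infty)$; beyond $M$ we can simply cap values on the complement of $\mathcal{E}$. Second, the integral over $h'$ must be handled entirely inside a martingale difference, with no extra discretization error. We would first try to define the optimistic model as piecewise constant in $h$ on each bin, so that the comparison is exact at every $h$ and only the $L\gamma$ slack enters.
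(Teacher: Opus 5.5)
Your proposal is correct and follows essentially the same route as the paper: on the good event (horizon, transition and reward confidence), optimism reduces regret to the optimistic gap $\sum_t\bigl(\widetilde V^{\pi_t}_{\widetilde P_t}-V^{\pi_t}_{P^\star}\bigr)(x_0)$, which the paper phrases as Terms I and II being nonpositive. The gap then splits into the $2\Delta_\delta$ reward error, the $2HB_\delta$ transition error via H\"older, and a martingale term handled by Azuma--Hoeffding; the bonus sums are controlled by pigeonhole with the $H$-visit refresh delay plus Cauchy--Schwarz (the paper uses Jensen), and $\gamma$ is then balanced. Your treatment of the Poisson time increment inside the martingale difference, and of the exact-$h$ comparison against bin-indexed confidence sets, is more careful than the paper's, which places $\delta^t_{n+1}$ and $\xi^t_{n+1}$ inside the $\tau$-integral and never addresses the midpoint or beyond-$M$ issues.
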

\begin{remark}
    When $L$ is unknown, overestimating $L$ by a factor $c>1$ (i.e.\ using $\bar L = cL$) keeps the optimism of the confidence sets valid and replaces $L^{1/3}$ by $(cL)^{1/3}$, worsening constants by $c^{1/3}$. Underestimating $L$ may invalidate confidence sets and is not recommended.
\end{remark}
The proof of Theorem \ref{thm:ucrlregret} is in Appendix \ref{appendix:ucrl}.

\begin{algorithm}[H]
\caption{Continuous-Time UCRL Algorithm}\label{algo:time}
\begin{algorithmic}[1]
\State \textbf{Input:} Confidence $\delta$, episodes $T$, jump horizon $H$, Poisson rate $\lambda$, Lipschitz constant $L$
\State Set $M \gets \frac{H}{\lambda} \ln \left( \frac{HT}{\delta} \right)$
\State Partition $[0, M]$ into $K = \frac{H}{\lambda\gamma} \ln \left( \frac{HT}{\delta} \right)$ intervals of length $\gamma$. Initialize uniform policy $\pi_1$.
\For{each episode $t = 1$ to $T$}
        \State Execute $\pi_t$, receive trajectory $\{h(n), x(n), a(n), r_{h(n)}(x(n), a(n))\}_{n=0}^{H-1}$ and sort rewards and transitions into respective bins $k(n)$.
        \State Compute confidence sets for transitions ($C_{t, \delta}$ as in Equation \ref{eq:transconfset}) and rewards (as in Equation \ref{eq:rewconfset}). \
        \State Compute optimal policy $\pi_{t+1}$ via dynamic programming using Bellman equations on discretized MDP
        $$\pi_{t+1} = \text{argmax}_{\pi} \max_{\substack{P, r}} V^\pi_{(P,r)}(x_0).$$
  \EndFor
\end{algorithmic}
\end{algorithm}

\subsection{Q-learning Approach}\label{subsec:q-learning}

Now, we present our main result which is a model-free, continuous-time extension of the highly-efficient Q-learning algorithm with UCB bonus \cite{jin2018q}, where value functions are directly updated with feedback from the environment. In the traditional tabular setting, Q-learning maintains Q-values, $Q_h(x,a)$, for all $(h, x, a)$, initialized at their maximum $H$, and corresponding V-values $V_h(x) = \min\{H, \max_a Q_h(x,a)\}$. At a given state and layer, the algorithm selects $a$ that maximizes the current $Q$-value estimates, and with the reward feedback $r_h(x,a)$ and next state received $x'$, makes the following update to the Q estimate:
\begin{equation}
Q_h(x,a) \leftarrow (1-\alpha_s) Q_h(x,a) 
+ \alpha_s \bigl[ r_h(x,a) + V_{h+1}(x') + b_s \bigr] ,
\end{equation}
where $s$ is the count for how many times the learner has visited 
the $(x,a)$ at step $h$, $b_s$ is a UCB style bonus that decreases with more observations, and $\alpha_k:= \frac{H+1}{H+k}$ is a learning rate.

Our novel extension for the continuous time case follows many of the same algorithmic principles as UCRL (Algorithm \ref{algo:time}), namely, setting a finite-horizon $[0,M]$ that holds with high probability and discretizing it into $K$ bins with length $\gamma$. From there, we maintain Q-estimates and V-estimates for each state-action pair and for each step $n$ and \emph{bin} $k$, $Q_{n,k}(x,a)$ and $V_{n,k}(x)$.

\begin{algorithm}[H]
\caption{ Continuous-Time Q-Learning with UCB–Hoeffding}
\label{alg:ctqlearning}
\begin{algorithmic}[1]
\Require 
\(\delta\in(0,1)\) confidence, episodes \(T\), jump horizon $H$, Poisson rate \(\lambda\), Lipschitz constant \(L\)
\State Set \(M = \frac{H}{\lambda} \ln \left(\frac{HT}{\delta} \right)\),
$Q_{n, k}(x,a)\leftarrow H$, $N_{n,k}(x,a)\leftarrow 0\quad \forall(k,x,a,n) \in \mathcal{K} \times \mathcal{S} \times \mathcal{A} \times [H-1]$.

\State Partition $[0, M]$ into $K = \frac{H}{\lambda\gamma} \ln \left( \frac{HT}{\delta} \right)$ intervals of length $\gamma$. Initialize uniform policy $\pi_1$.
\For{episode \(t=1,\dots,T\)}
  \For{$n = 0,1,...,H-1$}
    \State Observe current state $x$ and layer $h$ in bin $k$
    \State Play \(a \leftarrow \arg\max_{a'\in A}Q_{n,k}(x,a')\) and receive reward 
    $r_{h}(x,a)$
    \State Observe next state \(x'\), layer $h'$ in bin $k'$
    \State \(s \gets N_{n,k}(x,a)+1\)
    \State \(\displaystyle b_{s}\gets c\sqrt{\frac{H^3\log\bigl(SAKT/\delta\bigr)}{s}} +L\gamma\)
    \State \(\displaystyle \alpha_{s}\;\leftarrow\;\frac{H+1}{H+s}\)
    \State Update Q-estimates according to Equation \ref{eq:qlearningupdate}
  \EndFor
\EndFor
\end{algorithmic}
\end{algorithm}

From there, when we take an action in state-layer-bin $(x,h,k)$ and receive reward $r_h(x,a)$ and next state $x'$ in layer $h'$ in bin $k'$, we perform the following update on our Q-estimates:
\begin{equation}\label{eq:qlearningupdate}
    \begin{split}
Q_{n,k}&(x,a) \leftarrow (1-\alpha_s)Q_{n,k}(x,a) \\
&+ \alpha_s \bigl[ r_h(x,a) + V_{n,k'}(x') + b_s \bigr] ,
\end{split}
\end{equation}
where $s=N_{n,k}(x,a)$ is the count of times we visit state $(x,a)$ in bin $k$ on step $n$, and where we add to our exploratory bonus term to compensate for discretization: $b_s := c\sqrt{H^3\iota/s} + L\gamma$ (for some constant $c>0$), where $\iota = \log(SAKT/\delta)$.

The complete pseudocode of our continuous-time Q-learning algorithm is given in Algorithm \ref{alg:ctqlearning}. In the following theorem, we show that Algorithm \ref{alg:ctqlearning} achieves the same regret dependence in $T$ as the UCRL algorithm, $\mathcal{O}(T^{\frac{2}{3}})$, with a more computationally-efficient protocol.

\begin{theorem}[Continuous-Time Q-Learning Regret]\label{theorem:qlearning}
There exists an absolute constant $c>0$ such that the following holds.
Set
\[
    b_s
    :=
    c\sqrt{\frac{H^3\log(SAKT/\delta)}{s}}
    +
    L\gamma .
\]
For any $\delta\in(0,1)$, with probability at least $1-\mathcal O(\delta)$,
the regret of Algorithm~\ref{alg:ctqlearning} in the fixed-jump setting satisfies
\[
    R_T
    \le
    \widetilde{\mathcal O}
    \left(
        H^2KSA
        +
        \sqrt{H^5KSAT}
        +
        H(H+1)LT\gamma
    \right),
\]
where $K=\frac{H}{\lambda\gamma}
    \log\!\left(\frac{HT}{\delta}\right).$
Equivalently,
\[
    R_T
    \le
    \widetilde{\mathcal O}
    \left(
        \frac{H^3SA}{\lambda\gamma}
        +
        \sqrt{\frac{H^6SAT}{\lambda\gamma}}
        +
        H(H+1)LT\gamma
    \right).
\]

Choosing
\[
    \gamma
    = \Theta\left(    \left(
        \frac{H^4SA}
        {\lambda (H+1)^2L^2T}
    \right)^{1/3}\right)
\]
yields
\[
    R_T
    \le
    \widetilde{\mathcal O}
    \left(
        H^{7/3}(H+1)^{1/3}
        (SA)^{1/3}
        L^{1/3}
        \lambda^{-1/3}
        T^{2/3}
    \right)
\]
up to lower-order terms in $T$. 
\end{theorem}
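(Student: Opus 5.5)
We follow the template of \cite{jin2018q}, adapted to the tables $Q_{n,k}$ indexed by jump count $n$ and time bin $k$. First we condition on the event of Lemma~\ref{lemma:horizon_M}, so that every visited layer lies in $[0,M]$ and every visit maps to a bin $k(h)\in\mathcal{K}$. This costs $\delta$ in failure probability. Next we define the discretized comparator. For each bin $k$ and each $h\in k$, Assumption~\ref{assumption:lipschitz} gives
\[
|Q^\star_{n,h}(x,a)-Q^\star_{n,h''}(x,a)|\le (H-n)L|h-h''|
\]
for any $h,h''$ in the same bin. We prove this by backward induction on $n$: the reward changes by at most $L\gamma$, and the transition kernel changes by at most $L\gamma$ in $\ell_1$ against a value bounded by $H$. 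The continuation integral over $h'\sim\mathrm{Exp}(\lambda)$ is a shift, so the induction hypothesis applies pointwise inside it. Hence treating $Q^\star$ as bin-constant incurs error $O(HL\gamma)$ per step.

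The main technical step is the recursion for $Q^t_{n,k}-Q^\star_{n,h}$. As in \cite{jin2018q}, unrolling the update (\ref{eq:qlearningupdate}) with weights $\alpha^i_s=\alpha_i\prod_{j>i}(1-\alpha_j)$ writes $Q^t_{n,k}(x,a)$ as a weighted average of past targets $r_{h_i}+V_{n+1,k_i'}(x_i')+b_i$ plus an initialization term. Subtracting the Bellman equation at the query time $h$ yields three pieces. The first is the initialization term $\alpha^0_s H$. The second is a martingale term $\sum_i\alpha^i_s\bigl(V^\star_{n+1,h_i'}(x_i')-\mathbb{E}[V^\star_{n+1,h_i+h'}(x')]\bigr)$, handled by Azuma--Hoeffding with a union bound over $(n,k,x,a,s)$; this is the source of $\iota=\log(SAKT/\delta)$. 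The third is a bias from $h_i\neq h$ inside the bin, at most $O(HL\gamma)$ by the Lipschitz induction above. Choosing $c$ large, the $c\sqrt{H^3\iota/s}$ part of $b_s$ dominates the martingale term, and the $L\gamma$ part dominates a per-step share of the bias. We then get optimism, $Q^t_{n,k}\ge Q^\star_{n,h}$ for all $h\in k$, by induction on $n$. We also get an upper bound $Q^t_{n,k}-Q^\star_{n,h}\le \alpha^0_sH+\sum_i\alpha^i_s(V^{t_i}_{n+1}-V^\star_{n+1})(x'_i)+O(\beta_s)+O(HL\gamma)$. A subtle point is that $V_{n+1,k'}$ is evaluated at the realized next bin; we absorb the mismatch between $k'$ and the true continuation time into the martingale term.

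We then bound regret by $\sum_t (V^t_{0,k(0)}-V^{\pi_t}_{0,0})(x_0)$ and define $\delta^t_n=(V^t_{n,k}-V^{\pi_t}_{n,h})(x^t_n)$ along the trajectory. The recursion $\delta^t_n\le \alpha^0 H+\sum\alpha^i\phi^{t_i}_{n+1}+\beta+\delta^t_{n+1}-\phi^t_{n+1}+\xi^t_{n+1}+O(HL\gamma)$ is summed over $t$ using $\sum_{s\ge i}\alpha^i_s\le 1+1/H$. The $(1+1/H)^H\le e$ factor then telescopes the recursion over $n$. The resulting terms are handled as follows. The initialization terms give $H\cdot HKSA=H^2KSA$, since each $(n,k,x,a)$ has one initialization. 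The bonus sum $\sum_{t,n}\sqrt{H^3\iota/s}$ is maximized by equal visit counts over $HKSA$ cells, giving $\sqrt{H^3\iota}\sqrt{HKSA\cdot HT}=\sqrt{H^5KSAT\iota}$. The martingale $\sum\xi$ contributes $O(H\sqrt{HT\iota})$. The discretization terms accumulate as $O(HL\gamma)$ per step over $H$ steps and $T$ episodes, giving $H(H+1)LT\gamma$. Substituting $K$ gives the second form, and balancing $\sqrt{H^6SAT/(\lambda\gamma)}$ against $H^2LT\gamma$ gives the stated $\gamma$ and the $T^{2/3}$ rate.

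We expect the main obstacle to be the optimism induction together with the handling of the random continuation time. The target uses $V_{n+1,k'}$ at a bin drawn through $h'\sim\mathrm{Exp}(\lambda)$, while $Q^\star$ integrates over $h'$ from the exact $h$. We will first try to write $V^\star_{n+1,h_i'}(x_i')$ with $h_i'=h_i+\tau_i$ as an unbiased sample of the continuation at $h_i$. We would then move from $h_i$ to $h$ using the Lipschitz property of $h\mapsto\mathbb{E}_{\tau}V^\star_{n+1,h+\tau}$, which holds with constant $O(HL)$ by the same induction. This reduces the argument to the discrete-time proof.
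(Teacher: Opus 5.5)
Your route is the same as the paper's: tables indexed by $(n,k)$, optimism by backward induction on $n$, the $\delta/\phi$ recursion with the $1+1/H$ regrouping, pigeonhole on the bonuses, then balancing $\gamma$. However, the discretization step does not go through as written. First, your own induction does not give the constant $(H-n)L$. At step $n$ the kernel perturbation is tested against $V^\star_{n+1}$, whose range is $H-n-1$, and the shift carries $\mathrm{Lip}(V^\star_{n+1})$ forward. So $\mathrm{Lip}(Q^\star_n)\le L+\tfrac{L}{2}(H-n-1)+\mathrm{Lip}(V^\star_{n+1})$, which is $O(L(H-n)^2)$. This order is attained. Take a reward-$1$ state that leaks into an absorbing zero-reward state with probability $p(h)=\tfrac{L}{2}h$, with $LH/\lambda$ small so that $p$ stays small along the $H$ jumps. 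Then $|\partial_h V^\star_{n,h}|\approx\tfrac{L}{4}(H-n)^2$ near $h=0$. The same correction applies to your $O(HL)$ constant for $h\mapsto\mathbb{E}_\tau V^\star_{n+1,h+\tau}$. So the within-bin bias per step is $O(H^2L\gamma)$, not $O(HL\gamma)$.

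Second, and independently of the constant, the $L\gamma$ part of $b_s$ cannot ``dominate a per-step share'' of that bias. In the optimism induction the propagated term $(V^{t_i}_{n+1,k_i'}-V^\star_{n+1,h_i'})(x_i')$ is only known to be nonnegative. Hence the entire step-$n$ bias $\sum_i\alpha_s^i(Q^\star_{n,h_i}-Q^\star_{n,h})$ must be covered by $\sum_i\alpha_s^i b_i$ at step $n$. Once $c\sqrt{H^3\iota/s}$ has decayed, an $L\gamma$ term cannot do this; this happens, e.g., for fixed $\gamma$ and large $T$, and the first display of the theorem is claimed for every $\gamma$. Carrying positive slack through the induction does not rescue it. The bonuses accumulate only about $(H-n)L\gamma$ of slack. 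In the example above, for large visit counts, $Q^t_{n,k}$ is close to a within-bin average of $Q^\star_n$ plus that slack, while $\sup_{h\in k}Q^\star_{n,h}$ exceeds the average by order $L(H-n)^2\gamma$. So optimism, and with it $R_T\le\sum_t\delta_0^t$, is not established for the stated bonus.

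The repair is to put the true oscillation into the bonus, i.e.\ use $C\Lambda\gamma$ with $\Lambda\ge\max_n\mathrm{Lip}(Q^\star_n)=O(LH^2)$. Your argument then goes through verbatim, but it produces a discretization term $O(H^3LT\gamma)$. After balancing this gives $\widetilde{\mathcal O}(H^3(SA)^{1/3}L^{1/3}\lambda^{-1/3}T^{2/3})$. The $T^{2/3}$ rate survives, but the stated $H(H+1)LT\gamma$ term and the $H^{7/3}(H+1)^{1/3}$ prefactor do not follow. The paper's appendix passes over the same point. It bounds the kernel bias by $L\gamma$ without the $\|V^\star\|_\infty$ factor and calls the time-shift bias $O(L\gamma)$. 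It then inflates the accumulated term to $H(H+1)LT\gamma$ by ``enlarging the constant.'' So matching its bound is not evidence that this step is sound. Your concern about $k'$, by contrast, is a non-issue: optimism at step $n+1$ holds at every time in bin $k'$, including the realized $h_i'$.
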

The full proof of Theorem \ref{theorem:qlearning} is deferred to Appendix \ref{appendix:qlearning}. 

\paragraph{Proof Sketch.} Here, we rebuild the technical details and highlight the key adjusted lemmas from \cite{jin2018q} that build up to our regret bound. Let $(x^t_n, a^t_n)$ denote the state-action pair visited on episode $t$ at step $n$. Let $Q^t, V^t, N^t$ denote the respective functions at the beginning of episode $t$. We can thus write the Q-value update for a given episode as follows:
\begin{equation}
\begin{split}
&Q_{n, k}^{t+1}(x,a) = \\
&\begin{cases}
(1-\alpha_s) Q_{n,k}^{t}(x,a)  \\ \quad\quad + \alpha_s \bigl[ r_h(x,a) + V_{n+1, k(n+1)}^t(x_{h(n+1)}^t) + b_s \bigr], \\ \text{if } (x,a) = (x_n^t, a_n^t), \\[6pt]
Q_{n,k}^{t}(x,a), \\ \text{otherwise}.
\end{cases}
\end{split}
\end{equation}
We also introduce the quantities:
\begin{equation}
\alpha_s^0 = \prod_{j=1}^s (1-\alpha_j), 
\qquad 
\alpha_s^i = \alpha_i \prod_{j=i+1}^s (1-\alpha_j).
\end{equation}
From here, we see $\sum_{i=1}^t \alpha_t^i = 1$ and $\alpha_t^0 = 0$ for $t \geq 1$, and $\sum_{i=1}^t \alpha_0^i = 0$ and $\alpha_0^0 = 1$ for $t = 0$. Now, for any episode $t$ and for any $(k,x,a,n)$, set $s = N^t_{n,k}(x,a)$ and suppose $(k,x,a)$ was previously visited at step $n$ episodes $t_1, \ldots, t_s < t$. We can thus write the Q value at episode $t$ as a weighted-average of the V-values of the subsequent states as such:
\begin{equation}
\begin{split}
     &Q_{n,k}^t(x,a) = \alpha_{s}^0 H \\&+\sum_{i=1}^{s}\alpha_{s}^i\left[r_{n,h(t_i)}(x,a)
    + V_{n+1,h(n+1,t_i)}^{t_i}(x_{n+1}^{t_i}) + b_i\right].
\end{split}
\end{equation}
With this, we can prove lemmas that bound the difference between our Q-values for the bins and the Q-values under the optimal policy $\pi_\star$.
\begin{lemma}[recursion on $Q$] \label{lemma:qrecursion_time} For any $(k,x, a, n) \in \mathcal{K}\times\mathcal{S} \times \mathcal{A} \times[H]$ and episode $t \in[T]$, set $s = N^t_{n,k}(x,a)$ and suppose $(k,x,a)$ was previously visited at step $n$ episodes $t_1, \ldots, t_s < t$. Then, for any $h$ in bin $k$:
\begin{equation}
    \begin{split}
       Q_{n,k}^{t}(x, a) &- Q_{n,h}^\star(x, a) = \alpha_s^0 (H - Q_{n,h}^\star(x,a))\\
       &+ \sum_{i=1}^s \alpha_s^i \Bigg[ r_{h(t_i)}(x,a) - r_h(x,a)  \\
       &+V_{n+1,k(n+1,t_i)}^{t}\left(x_{n+1}^{t_i}\right) - V^\star_{n+1, h(n+1,t_i)}(x_{n+1}^{t_i}) \\
       &+ \underset{h' \sim \mathrm{Exp}(\lambda), x' \sim P_h(\cdot |x ,a)}{\mathbb{E}}[V^\star_{n+1, h+h'}(x')]\\
       &-\underset{h' \sim \mathrm{Exp}(\lambda), x' \sim P_{h(n, t_i)}(\cdot |x ,a)}{\mathbb{E}}[V^\star_{n+1, h+h'}(x')]  \\
      & +  \underset{h' \sim \mathrm{Exp}(\lambda), x' \sim P_{h(n, t_i)}(\cdot |x ,a)}{\mathbb{E}}[V^\star_{n+1, h+h'}(x')] \\
      &- V^\star_{n+1, h(n+1,t)}(x_{n+1}^{t_i})+b_i\Bigg]
    \end{split}
\end{equation}
    
\end{lemma}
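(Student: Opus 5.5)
This identity is essentially algebraic. The plan is to start from the unrolled weighted-average representation of $Q^t_{n,k}(x,a)$ stated just before the lemma, subtract $Q^\star_{n,h}(x,a)$, and add and subtract the appropriate intermediate terms.

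The first step uses the normalization of the learning-rate weights. From the identities noted above, $\alpha_s^0+\sum_{i=1}^s\alpha_s^i=1$ for every $s\ge 0$: if $s=0$ then $\alpha_0^0=1$, and if $s\ge1$ then $\alpha_s^0=0$ and $\sum_i\alpha_s^i=1$. Hence we can write
\[
Q^\star_{n,h}(x,a)=\alpha_s^0Q^\star_{n,h}(x,a)+\sum_{i=1}^s\alpha_s^iQ^\star_{n,h}(x,a).
\]
Subtracting this from the unrolled expression gives the $\alpha_s^0(H-Q^\star_{n,h})$ term. It also gives a sum over $i$ of $\alpha_s^i[r_{h(t_i)}(x,a)+V^{t_i}_{n+1,k(n+1,t_i)}(x^{t_i}_{n+1})+b_i-Q^\star_{n,h}(x,a)]$.

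The second step expands $Q^\star_{n,h}(x,a)$ inside each summand using the Bellman equation \eqref{eq:timebellman} for the optimal policy:
\[
Q^\star_{n,h}(x,a)=r_h(x,a)+\mathbb{E}_{h'\sim\mathrm{Exp}(\lambda),\,x'\sim P_h(\cdot|x,a)}\bigl[V^\star_{n+1,h+h'}(x')\bigr].
\]
The reward pieces combine to give $r_{h(t_i)}(x,a)-r_h(x,a)$. The remaining piece is $V^{t_i}(\cdot)-\mathbb{E}_{P_h}[V^\star]$. We then telescope it by adding and subtracting $V^\star_{n+1,h(n+1,t_i)}(x^{t_i}_{n+1})$ and $\mathbb{E}_{h',\,x'\sim P_{h(n,t_i)}}[V^\star_{n+1,h+h'}(x')]$. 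This yields three pieces:
\begin{enumerate}
\item the learner-versus-optimal value gap at the realized next state;
\item the transition-kernel mismatch between layer $h$ and the visited layer $h(n,t_i)$;
\item a martingale-type term: the expectation under the visited kernel minus the realized $V^\star$.
\end{enumerate}
After rearranging signs, this is exactly the bracket in the statement.

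The main obstacle is bookkeeping rather than mathematics. First, the displayed statement writes $V^t$ where the unrolled formula has $V^{t_i}$, and writes $h(n+1,t)$ in the last term where $h(n+1,t_i)$ appears to be intended. I would prove the version with $t_i$ in both places, since that is what the update rule produces, and note that this is the form used downstream. Second, the expectation in the last term is centered at $h$ rather than at the realized time $h(n,t_i)$. Any discrepancy there is absorbed because this is an identity: the added and subtracted terms cancel exactly no matter which time argument is chosen, provided the same choice is used consistently. I would choose arguments so that the cancellation is exact.

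Finally, the case $s=0$ is immediate, since then the sum is empty and $Q^t_{n,k}=H$.
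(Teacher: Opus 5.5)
Your route is the same as the paper's. You use $\alpha_s^0+\sum_i\alpha_s^i=1$ to write $Q^\star_{n,h}$ as a convex combination, expand each copy with the Bellman equation, add and subtract $V^\star$ at the realized next state and the expectation under $P_{h(n,t_i)}$, and subtract the result from the unrolled form of $Q^t_{n,k}$. Your two index corrections ($V^t\to V^{t_i}$ and $h(n+1,t)\to h(n+1,t_i)$) are right. The gap is the sentence ``after rearranging signs, this is exactly the bracket in the statement.'' It is false: the displayed statement has the wrong sign on the kernel-mismatch and sampling terms, and your item 3 reproduces that wrong sign.

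To see this, abbreviate $w_i:=V^{t_i}_{n+1,k(n+1,t_i)}(x^{t_i}_{n+1})$ and $v_i:=V^\star_{n+1,h(n+1,t_i)}(x^{t_i}_{n+1})$. Let $E_h$ and $E_i$ denote $\mathbb{E}_{h'\sim\mathrm{Exp}(\lambda)}[V^\star_{n+1,h+h'}(x')]$ with $x'\sim P_h(\cdot\mid x,a)$ and $x'\sim P_{h(n,t_i)}(\cdot\mid x,a)$, respectively. Your own computation gives
\[
w_i-E_h=(w_i-v_i)+(E_i-E_h)+(v_i-E_i).
\]
The displayed bracket instead contains $(w_i-v_i)+(E_h-E_i)+(E_i-v_i)=w_i-2v_i+E_h$, which is off by $2(E_h-v_i)$. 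Put differently, for $s\ge 1$ we have $\alpha_s^0=0$, so the left-hand side contains $-E_h$ (from $-Q^\star_{n,h}$). The right-hand side contains $+E_h$ and nothing else involving $P_h$, so no rearrangement reconciles them, and the identity as printed is false. The correct orientation is realized minus expected, $v_i-E_i$, not ``the expectation under the visited kernel minus the realized $V^\star$.''

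You should list the sign flip as a third correction and prove the identity with the four expectation and sample terms negated. That corrected form is what the paper's add-and-subtract actually yields. It is also the form used in the proof of Lemma~\ref{lemma:qbound_continuoustime}, where the deviation enters as $+\big[(\widehat{\mathbb{P}}^{t_i}-\mathbb{P}_h)V^\star\big](x,a)$, matching the recursion in \cite{jin2018q}. Downstream, only absolute values of these terms are used (Azuma--Hoeffding and the $L\gamma$ bias bound), so nothing later breaks. A proof of an exact identity still has to get the sign right.
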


The lemma above is a recursive formula between the estimated $Q$ value for each bin $k$ and the $Q$-value of the \emph{optimal} policy for an action in that bin. In the next lemma, we will upper bound some of the terms in the recursion above to obtain a more explicit upper bound of the $Q^k_h$ estimation of every $h$ in bin $k$.

\begin{lemma}[bound on $Q^k-Q^{\star}$]\label{lemma:qbound_continuoustime}
There exists an absolute constant $c>0$ such that, for any $p \in(0,1)$, letting $b_s=c \sqrt{H^3 \iota / s} + L\gamma $, we have $\beta_s=2 \sum_{i=1}^s \alpha_s^i b_i \leq 4 c \sqrt{H^3 \iota / s} + 4cL\gamma$ and, with probability at least $1-\delta$, the following holds simultaneously for all $(k, x, a, n, t) \in \mathcal{K} \times \mathcal{S} \times \mathcal{A} \times \mathcal{H} \times [T]$ and for every $h$ in bin $k$:
\begin{equation}
\begin{split}
   0 &\leq Q_{n,k}^t(x, a)-Q_{n,h}^{\star}(x, a)\\
   &\leq \alpha_s^0 H+\sum_{i=1}^s \alpha_s^i\left(V_{n+1, k(n+1)}^{t_i}-V_{n+1, h}^{\star}\right)\left(x_{h+1}^{t_i}\right)+\beta_s
\end{split}
\end{equation}
where $s=N_{n, k}^t(x, a)$ and $t_1, \ldots, t_s<t$ are the episodes where $(x, a)$ in bin $k$ was taken at step $n$.
\end{lemma}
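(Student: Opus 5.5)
Our plan follows the proof of Lemma 4.3 in \cite{jin2018q}, starting from the recursion of Lemma~\ref{lemma:qrecursion_time}. The bound on $\beta_s$ is deterministic. Using $\sum_{i=1}^s \alpha_s^i/\sqrt{i}\le 2/\sqrt{s}$ (Lemma 4.1 of \cite{jin2018q}) and $\sum_{i=1}^s\alpha_s^i\le 1$, we get
\[
2\sum_{i=1}^s\alpha_s^i b_i \le 4c\sqrt{H^3\iota/s}+2L\gamma ,
\]
which is at most $4c\sqrt{H^3\iota/s}+4cL\gamma$ once $c\ge 1/2$.

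For the upper bound, we group the bracketed terms of Lemma~\ref{lemma:qrecursion_time} into four pieces.
\begin{enumerate}[label=(\roman*)]
\item The reward bias $r_{h(t_i)}(x,a)-r_h(x,a)$. Both times lie in bin $k$, so Assumption~\ref{assumption:lipschitz} bounds it by $L\gamma$.
\item The transition bias $\mathbb{E}_{P_h}[V^\star]-\mathbb{E}_{P_{h(n,t_i)}}[V^\star]$. Since $\|V^\star\|_\infty\le H$, it is at most $H\|P_h-P_{h(n,t_i)}\|_1\le HL\gamma$. The shift of the time argument $h+h'$ versus $h(n,t_i)+h'$ must also be handled. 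For that we first show, by backward induction on $n$ using the Bellman equation~\eqref{eq:timebellman}, that $V^\star_{n,\cdot}$ is $(H-n)L(1+H)$-Lipschitz in $h$.
\item The martingale term $\sum_i\alpha_s^i\bigl(\mathbb{E}_{h',x'\sim P_{h(n,t_i)}}[V^\star_{n+1,\cdot}(x')]-V^\star_{n+1,h(n+1,t_i)}(x^{t_i}_{n+1})\bigr)$. Conditionally on the past up to the $n$-th jump of episode $t_i$, each summand has mean zero and magnitude at most $H$. Azuma--Hoeffding gives the bound $cH\sqrt{\iota\sum_i(\alpha_s^i)^2}\le c\sqrt{H^3\iota/s}$, using $\sum_i(\alpha_s^i)^2\le 2H/s$. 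A union bound over all $(k,x,a,n)$ and all values of $s\le T$, that is, over $SAKHT$ events, fixes the logarithm $\iota=\log(SAKT/\delta)$ up to constants.
\item The remaining term is the $V^t-V^\star$ difference. We keep it as it stands; it is the middle term of the claimed bound.
\end{enumerate}
Adding these, the bias pieces sum to at most $\sum_i\alpha_s^i\cdot O(HL\gamma)$ and the martingale to at most $\tfrac12\beta_s$. Hence the total is at most $\beta_s$, provided the Lipschitz part of the bonus dominates the bias. Here is the main obstacle. The stated bonus carries only $L\gamma$, not $HL\gamma$, so either $c$ must absorb a factor $H$ or the bias must be tracked more carefully. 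We expect to allow the constant to scale with $H$ if needed, consistent with the $H(H+1)LT\gamma$ term in Theorem~\ref{theorem:qlearning}.

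For the lower bound $Q^t_{n,k}-Q^\star_{n,h}\ge 0$, we argue by induction on $t$, and for fixed $t$ backward in $n$. At $t=1$ it holds because $Q=H\ge Q^\star$. For the inductive step, suppose $V^{t_i}_{n+1,k'}(x')\ge V^\star_{n+1,h''}(x')$ for every $h''$ in bin $k'$. This follows from the hypothesis on $Q$, because $V^{t}=\min\{H,\max_a Q^t\}$ and $V^\star\le H$. Then term (iv) is nonnegative. The bias and martingale terms are each bounded below by $-\tfrac12\beta_s$ by the same estimates, so the bracket plus $\beta_s$ stays nonnegative. One subtlety is that the recursion compares $V^{t_i}$ at $h(n+1,t_i)$ with $V^\star$ at the generic $h$ appearing in the statement. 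We reconcile these using the time-Lipschitzness of $V^\star$ within a bin, and the resulting slack is again absorbed by the $L\gamma$ part of the bonus. All of this is on the event of Lemma~\ref{lemma:horizon_M}, so that every jump lands in some bin of $[0,M]$; a final union bound gives probability $1-\mathcal{O}(\delta)$.
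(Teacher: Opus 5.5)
\textbf{Overall.} Your plan follows the paper's proof almost step for step: the same split of the recursion in Lemma~\ref{lemma:qrecursion_time} into reward bias, kernel bias, martingale and propagation term, Azuma--Hoeffding with Lemma~\ref{lemma:learningrate}(b) plus a union bound, $\beta_s$ via Lemma~\ref{lemma:learningrate}(a), and optimism by induction.

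\textbf{The gap is real, and it lies in the statement.} The obstacle you flag is genuine, and the paper does not overcome it. It bounds the kernel bias by $\int_0^\infty\lambda e^{-\lambda h'}\lVert P_h-P_{h(n,t_i)}\rVert_1\,dh'\le L\gamma$, silently dropping the factor $\lVert V^\star_{n+1}\rVert_\infty$. It treats the time shift as another $O(L\gamma)$ without proof. It then declares the coefficient of $L\gamma$ in $b_i$ ``large enough'', which is not the bonus in the statement. No finer bookkeeping can fix this: with coefficient $1$ on $L\gamma$ and an absolute $c$, the claimed upper bound is false for every $H\ge 2$.

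Here is a counterexample. Take $A=1$, $r_h(x_0)=\max\{0,\tfrac12-Lh\}$ and $P_h(x_G\mid x_0)=1-P_h(x_B\mid x_0)=\max\{0,\tfrac12-\tfrac L2 h\}$. Let $x_G,x_B$ be absorbing with constant rewards $1$ and $0$, so $V^\star_{1,\cdot}(x_G)=H-1$ and $V^\star_{1,\cdot}(x_B)=0$ at all times; Assumption~\ref{assumption:lipschitz} holds. Every episode plays $(x_0,a)$ at depth $0$ at time exactly $0$, so $s=t-1$ and the propagation terms are identical on both sides. Letting $h\uparrow\gamma$ in the claim (with $\gamma\le 1/(2L)$) forces
\[
\tfrac{H-1}{2}L\gamma+\sum_{i=1}^s\alpha_s^i\zeta_i\;\le\;c\sum_{i=1}^s\alpha_s^i\sqrt{H^3\iota/i}\;\le\;2c\sqrt{H^3\iota/s},
\qquad \zeta_i=(H-1)\bigl(\mathbb{I}[x_1^{t_i}=x_G]-\tfrac12\bigr),
\]
where the $\zeta_i$ are i.i.d.\ and centered. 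Since $\sum_i(\alpha_s^i)^2\le 2H/s$, the weighted martingale and the right-hand side both vanish as $s$ grows, while the left side tends to $\tfrac{H-1}{2}L\gamma>0$. So the bound fails with probability close to one for large $T$, also under $\gamma\propto T^{-1/3}$. What can actually be proved, by you or by the paper, is the lemma with the Lipschitz part of $b_s$ enlarged to $C_H L\gamma$, where $C_H$ covers all biases jointly. This change feeds into the $T\gamma$ term of Theorem~\ref{theorem:qlearning}.

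\textbf{Slips in your own bookkeeping.} Three of these matter once you enlarge the bonus.
\begin{itemize}
\item \emph{Size of the time-shift bias.} With your estimate $\mathrm{Lip}(V^\star_{n+1,\cdot})\le(H-n-1)(1+H)L$, the shift $h+h'$ versus $h(n,t_i)+h'$ costs up to $(H-n-1)(1+H)L\gamma=O(H^2L\gamma)$, not $O(HL\gamma)$ as your summary says. A factor $H$ in the constant therefore does not suffice, and the downstream term would be $O(H^3LT\gamma)$ rather than $H(H+1)LT\gamma$. A cheaper bound uses the exponential smoothing: for $g(h)=\int_0^\infty\lambda e^{-\lambda u}V^\star_{n+1,h+u}(x')\,du$ one has $g'=\lambda\bigl(g-V^\star_{n+1,\cdot}(x')\bigr)$. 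Hence $|g(h)-g(h')|\le\lambda(H-n-1)|h-h'|$, which trades $H^2L$ for $\lambda H$.
\item \emph{Accounting in the optimism step.} The bonus enters the recursion only as $\sum_i\alpha_s^i b_i=\tfrac12\beta_s$. So the martingale and all biases \emph{together} must be at least $-\tfrac12\beta_s$. Once the martingale uses up the $c\sqrt{H^3\iota/s}$ part, only the Lipschitz part of the bonus is left for the sum of the biases. Bounding each piece separately by $-\tfrac12\beta_s$ is not enough.
\item \emph{The final ``subtlety''.} Drop it: $h(n+1,t_i)$ and the current $h$ lie in different bins, since they differ by an $\mathrm{Exp}(\lambda)$ jump, so within-bin Lipschitzness cannot reconcile them. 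The middle term must be read at the realized time, $V^\star_{n+1,h(n+1,t_i)}(x^{t_i}_{n+1})$, which is how Theorem~\ref{theorem:qlearning} uses it as $\phi^{t_i}_{n+1}$. The only shift that needs control is the one already handled in your item (ii).
\end{itemize}
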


The inequality in Lemma~\ref{lemma:qbound_continuoustime} illustrates the role of optimism in the $Q$-value estimates when the dynamics evolve in continuous time.  
The lower bound $Q^t(x,a) \geq Q^\star(x,a)$ guarantees that, with a sufficiently large choice of $b_s$, the iterates $Q^k$ remain optimistic across all actions at a given state and time.  
This property is crucial for sustained exploration, since it prevents the algorithm from discarding actions too early based on noisy estimates of their long-run rewards.

On the right-hand side, the difference between $Q^t(x,a)$ and $Q^\star(x,a)$ decomposes into three contributions.  
The first term, $\alpha_s^0 H$, is an initialization error that fades as the visit count $s$ grows.  
The second term, $\sum_{i=1}^s \alpha_s^i \big(V^{t_i}(x^{t_i}) - V^\star(x^{t_i})\big)$, captures the propagation of value-estimation errors forward in continuous time, as inaccuracies at later points accumulate and feed back into earlier $Q$-estimates.  
Finally, the $\beta_s$ term provides a uniform buffer that accounts jointly for stochasticity in continuous-time transitions and for time discretization errors introduced when approximating integrals by sampled trajectories.

The definition of $b_s$ makes explicit the two distinct sources of uncertainty. The first component, $c\sqrt{H^3 \iota / s}$, reflects concentration error from the stochastic evolution of the process over continuous time.  
The second component, $L\gamma$, arises from discretizing the time interval $[0,M]$ into $K$ bins of width $\gamma$ and quantifies the Lipschitz approximation error caused by time "binning". The weighted sum $\beta_s$
is of the same asymptotic order as $b_s$, but ensures the optimism bound holds uniformly across all time steps.  
This additional slack is the continuous-time analogue of the \emph{cost of optimism}: by deliberately inflating $Q$-values, the algorithm guarantees more thorough exploration of the time--action space, which accelerates learning but can also increase regret.

With this lemma in hand, one can adapt the analysis of \cite{jin2018q} for continuous-time to achieve Theorem \ref{theorem:qlearning}.
We define the gaps
$$\delta^t_n := (V^t_{n} - V^{\pi_t}_n)(x^t_n), \quad  
\phi^t_n := (V^t_n - V^{\pi_t}_n)(x^t_0).$$
By Lemma \ref{lemma:qbound_continuoustime}, with probability at least $1-\mathcal{O}(\delta)$, the $Q$-estimates remain optimistic, i.e.\ $Q^t_{n,k} \geq Q^\star_{n,h}$, which further implies $V^t_{n,k} \geq V^\star_{n,h}$.  
Hence, the total regret can be reduced to bounding the cumulative error terms
\[
R_T \;\leq\; \sum_{t=1}^T \delta^t_0,
\]

The key step is to relate these errors recursively through
\begin{align*}
\delta^t_n &= (V^t_{n,k} - V^{\pi_t}_{n,h})(x^t_n)\\ 
&= \alpha_t^0 H + \sum_{i=1}^s \alpha_t^i \phi_{h+1}^{k_i} 
    + \beta_s - \phi_{h+1}^t + \delta_{h+1}^t + \xi_{h+1}^t
\end{align*}
so that estimation errors at step $h$ can be expressed in terms of errors at $h+1$ plus a controllable deviation term.  

Summing over all episodes and steps, one obtains the recursive inequality
\begin{align*}
&\sum_{t=1}^T \sum_{n=0}^{H-1} \delta^t_n \\
&\;\leq\; SAH + \Big(1+\tfrac{1}{H}\Big)\sum_{t=1}^T \sum_{n=0}^{H-1} \delta^t_{n+1} 
+ \sum_{t,n} \beta^t_n + \sum_{t,n} \xi^t_n,
\end{align*}
where $\beta^t_n$ are slack terms from optimism and $\xi^t_n$ are martingale difference terms.  

Iterating this recursion and applying concentration bounds (Azuma–Hoeffding) together with a pigeonhole argument for the slack terms, we finally obtain
\[
\sum_{t=1}^T \phi^t_0 \;\leq\; O\!\left(HSA + \sqrt{H^3 S A T} + THL\gamma \right), 
\]

and choosing $\gamma = \Theta(T^{-\frac{1}{3}})$ gives the result. The full technical details for the proof of the recursion and bounding steps can also be found in Appendix \ref{appendix:qlearning}.

\subsection{Lower bound for the fixed-jump setting}
\label{subsec:fixed_jump_lower_bound}

We next record the corresponding minimax lower bound for the fixed-jump formulation studied in Section~\ref{sec:preliminary}.  The construction shows that the $T^{2/3}$ dependence obtained by the discretization-based algorithms is unavoidable even when the transition dynamics are trivial and the only difficulty is learning a time-varying Lipschitz reward function.

\begin{theorem}[Fixed-jump lower bound]
\label{thm:fixed_jump_lower_bound}
Fix $H \geq 2$ and $A \geq 2$.  There exists a universal constant $c>0$ such that, for the class of continuous-time episodic MDPs satisfying Assumption~\ref{assumption:lipschitz} with a fixed number of $H$ Poisson decision epochs per episode, any learning algorithm satisfies
\[
    \sup_{\mathcal{M}}
    \mathbb{E}_{\mathcal{M}}[R_T]
    \;\geq\;
    c\,L^{1/3}\lambda^{-1/3} H T^{2/3},
\]
up to universal constants and logarithmic factors, in the nondegenerate parameter regime where the right-hand side is at most of order $HT$.
In particular, the $T^{2/3}$ dependence in Theorems~\ref{thm:ucrlregret} and~\ref{theorem:qlearning} is minimax optimal up to logarithmic factors.
\end{theorem}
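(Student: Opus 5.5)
We reduce the problem to a Lipschitz contextual bandit in which the context is the (random) decision time. The hard instance has trivial dynamics: a single state, $S=1$ effectively, so $P_h$ is constant and Assumption~\ref{assumption:lipschitz} holds trivially for transitions. All difficulty sits in a time-varying reward $r_h(x,a)$.

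\textbf{Construction.} The decision times $h(1),\dots,h(H)$ of an episode are partial sums of $\mathrm{Exp}(\lambda)$ variables. They lie, with constant probability, in a window $[0,W]$ with $W\asymp H/\lambda$, and their occupation density there is roughly uniform at rate $\lambda$. Partition $[0,W]$ into $m=W/\gamma$ bins of width $\gamma$. On each bin $j$, put a tent function $\phi_j(h)=\max\{0,\gamma/2-|h-c_j|\}$ of height $\epsilon = L\gamma/2$, which is $L$-Lipschitz. For each sign vector $\sigma\in\{\pm1\}^m$ (restricted to two actions), define rewards
\begin{align*}
r_h(a_1)&=\tfrac12+\sigma_j L\phi_j(h),\\
r_h(a_2)&=\tfrac12-\sigma_j L\phi_j(h)
\end{align*}
for $h$ in bin $j$, observed with Bernoulli noise. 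The optimal policy picks the action indicated by $\sigma_j$. Any policy that errs in bin $j$ at time $h$ loses $2L\phi_j(h)$, which is of order $\epsilon$ on a constant fraction of the bin.

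\textbf{Regret accounting.} The expected number of decisions landing in bin $j$ over $T$ episodes is about $T\cdot\lambda\gamma\cdot(\text{const})$, since each episode spends expected time $H/\lambda$ and visits bins at rate $\lambda$. Let $n_j \asymp \lambda\gamma T$ be this count. We then apply the standard Assouad / Bretagnolle–Huber argument bin by bin. Flipping $\sigma_j$ changes the KL divergence between observation laws by at most $O(\epsilon^2)$ per sample in that bin, so by the chain rule the total KL is $O(n_j\epsilon^2)$. Whenever $n_j\epsilon^2\lesssim 1$, the learner errs on a constant fraction of the $n_j$ rounds in that bin. Summing over the $m$ bins gives
\[
\mathbb{E}[R_T]\gtrsim m\, n_j\,\epsilon \asymp (H/\lambda\gamma)(\lambda\gamma T)(L\gamma)= HTL\gamma.
\]
This holds under the constraint $\lambda\gamma T (L\gamma)^2\lesssim 1$, i.e. $\gamma\asymp (\lambda L^2 T)^{-1/3}$. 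Substituting gives $HTL(\lambda L^2T)^{-1/3}= L^{1/3}\lambda^{-1/3}HT^{2/3}$. The requirement that this be at most $HT$ corresponds to $\epsilon\le 1/2$ and $\gamma\le W$, i.e. the stated nondegenerate regime.

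\textbf{Main obstacle.} The randomness of decision times is the delicate step. Bins are not visited a deterministic number of times, and the learner cannot choose the context. I would handle this by conditioning on the Poisson times, which are independent of $\sigma$ and of the learner's actions. This makes the bin-visit counts $N_j$ policy-independent random variables, so the per-bin KL bound becomes $\epsilon^2\mathbb{E}N_j$. I would then use the Gamma$(n,\lambda)$ distribution of $h(n)$ to show that $\mathbb{E}N_j\asymp\lambda\gamma T$ uniformly for bins in $[0,W]$, for example by restricting to the middle half of the window, where the renewal density is bounded above and below by constant multiples of $\lambda$. Decisions falling outside $[0,W]$ are simply discarded from the lower bound. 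Logarithmic slack arises only if one prefers high-probability control of $N_j$ instead of expectations.
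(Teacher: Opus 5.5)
Your proposal is correct and follows essentially the paper's argument. It uses a single-state self-looping instance with Bernoulli rewards $\tfrac12\pm(\text{triangular bump of height}\asymp L\gamma)$ on bins of width $\gamma$, inside a window of length $\asymp H/\lambda$ where the renewal density of the Poisson epochs is $\asymp\lambda$. It then applies a per-bin Bretagnolle--Huber/Assouad step with $\mathbb{E}N_j\asymp\lambda\gamma T$ (the paper's $HT/K$), and balances $L\gamma\asymp(\lambda\gamma T)^{-1/2}$, which is the paper's choice of $K$ written in terms of $\gamma$.

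Two of your remarks make explicit what the paper leaves implicit: the epoch times are independent of the actions and of $\sigma$, so they contribute no KL, and the gap is $\asymp\epsilon$ only on a constant fraction of each bin. One small inaccuracy, which the paper shares, is that $\gamma\le W$ is not implied by the right-hand side being at most $HT$. In that leftover regime, however, the right-hand side is $O(\sqrt{HT})$, and a time-constant two-armed instance already attains that bound.
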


The proof is given in Appendix~\ref{appendix:fixed_jump_lower_bound}.  The main idea is to reduce the problem to a one-dimensional Lipschitz bandit in physical time.  In the fixed-jump model, the learner observes $H$ Poisson decision epochs per episode, so across $T$ episodes it receives order $HT$ time-indexed samples.  These samples are spread over a physical-time window of length order $H/\lambda$.  Partitioning this window into $K$ bins yields approximately $HT/K$ samples per bin, while Lipschitz continuity limits the reward gap in each bin to order $L(H/\lambda)/K$.  Balancing the statistical indistinguishability constraint with the Lipschitz constraint gives the lower bound
\[
    \Omega\!\left(L^{1/3}\lambda^{-1/3}H T^{2/3}\right).
\]

\subsection{Fixed number of jumps vs.\ fixed time horizon}\label{subsec:fixed-time}
Previously, we considered the setting with a fixed number of jumps $H$ per episode,
so the Bellman recursion proceeds over the discrete jump index $n \in [H]$.
In contrast, one may instead fix a \emph{time budget} $H>0$ and let decision epochs
occur according to a Poisson process, terminating the episode once the cumulative
holding time exceeds $H$.
In this case the number of jumps per episode is random.

The key structural difference is that in the fixed-jump setting the value function
is indexed by the jump counter $n$, and when time is discretized into bins one
maintains separate copies across $(n,k)$.
In the fixed-time setting, however, by the Markov property and the memorylessness
of the exponential holding times, the value at a given time $h$ depends only on the
current state and time, and not on how many jumps were taken to reach it.
Consequently, after partitioning the interval $[0,H]$ into bins,
it suffices to maintain a \emph{single} tabular copy indexed by $(x,k)$.

\paragraph{Bellman equations for fixed-time setting.}
For a policy $\pi$, the Bellman equations for the fixed time horizon $H$ are:
\begin{align}
&Q^\pi_h(x,a) \notag \\
&= r_h(x,a) + \int_{0}^{H-h} \lambda e^{-\lambda \tau}\,
\mathbb{E}_{x' \sim P_h(\cdot\mid x,a)}
\big[ V^\pi_{h+\tau}(x') \big]\, d\tau,
\label{eq:fixedH_Q_main}
\\
&V^\pi_h(x)
= \mathbb{E}_{a\sim \pi(\cdot \mid h,x)}\big[ Q^\pi_h(x,a)\big],
\end{align}
with terminal condition $V^\pi_H(x) := 0$ for all $x\in\mathcal{S}$. Note that although \eqref{eq:fixedH_Q_main} defines $V^\pi_h$ using values $V^\pi_{h+\tau}$ at later times, it is well defined because the recursion is organized by the number of jumps remaining before time $H$, not by time itself. Let $V^{\pi,(0)}_h(x) = r_h(x,\pi)$ be the value with no jumps remaining, and let $V^{\pi,(m)}_h$ be obtained by substituting $V^{\pi,(m-1)}$ into the right-hand side of \eqref{eq:fixedH_Q_main}. Since the number of jumps before time $H$ is $\mathrm{Poisson}(\lambda H)$ and hence finite almost surely, the limit $V^\pi_h = \lim_{m\to\infty} V^{\pi,(m)}_h$ exists, with the terminal condition $V^\pi_H \equiv 0$ starting the iteration.

The regret for the fixed-time setting is defined identically to the fixed-jump setting:
\[
R_T := \sum_{t=1}^T \Big(V^\star_0(x_0) - V^{\pi_t}_0(x_0)\Big).
\]

\paragraph{High-probability bound on Poisson jumps.}
A critical ingredient for the fixed-time analysis is that the (random) number of jumps per episode is concentrated. Define $J := 2\lambda H + 2\ln(T/\delta)$. Then, by Lemma~\ref{lemma:jumpcount_bound} in Appendix~\ref{sec:FixedH}, with probability at least $1-\delta$, simultaneously for all $t \in [T]$, the number of jumps $N^{(t)}(H) \leq J$. On this event, one may take $\bar{H} := J$ as an \emph{effective} horizon and apply the Q-learning analysis with bins indexed by $(x,k)$ and horizon $\bar{H}$.

\paragraph{Algorithm.}
The fixed-time Q-learning algorithm operates identically to Algorithm~\ref{alg:ctqlearning}, except: (i) value estimates are indexed by $(x,k)$ rather than $(n,k,x)$; (ii) an episode terminates when the cumulative Poisson time exceeds $H$ rather than after a fixed number of jumps; and (iii) the effective horizon $\bar{H}$ replaces $H$ in the bonus formula $b_s = c\sqrt{\bar{H}^3\iota/s} + L\gamma$. The regret analysis then proceeds analogously, with $\bar{H}$ replacing $H$ in all bounds. The complete details are in Appendix~\ref{sec:FixedH}.

\subsection{Lower bound for the fixed time-horizon setting}

The $T^{2/3}$ upper bound obtained for the fixed time-horizon model is
information-theoretically optimal.
Indeed, using a standard reduction to a Lipschitz contextual bandit over time,
one can construct a family of continuous-time MDPs in which the optimal action
varies across $K$ time bins while rewards remain $L$-Lipschitz in time.
Since Poisson decision epochs yield only $\Theta(\lambda H T / K)$ samples per bin
over $T$ episodes, distinguishing between instances that differ in a single bin
requires $\Omega(\sqrt{K/(\lambda H T)})$ signal strength.
Balancing this statistical constraint with the Lipschitz approximation constraint
implies that any algorithm must incur regret at least
\[
\Omega\!\big(S\,L^{1/3}(\lambda H T)^{2/3}\big)
\]
(up to logarithmic factors).
Thus, the discretization-based algorithm achieves the optimal dependence on $T$
in the fixed time-horizon continuous setting.

\paragraph{Lower bound sketch.}
We overview the construction and proof here and defer the details to Appendix~\ref{sec:lower_bound_construction}.
Partition $[0,H]$ into $K$ equal bins of width $\gamma = H/K$.
For each $\theta \in \{-1,+1\}^K$, define a one-state MDP with two actions whose rewards are
\[
r_h(1) = \tfrac12 + \Delta \sum_j \theta_j g_j(h), \qquad r_h(2) = \tfrac12 - \Delta \sum_j \theta_j g_j(h),
\]
where $g_j$ is a triangular bump of height 1 supported on bin $j$, and $\Delta > 0$ is a gap.
The function $h \mapsto r_h(a)$ is $L$-Lipschitz provided $\Delta \leq L\gamma/c_g$.
By Poisson thinning, only $\lambda H T / K$ samples land in each bin across all episodes.
A Bretagnolle--Huber argument shows that if the KL divergence between two instances
differing in bin $j$ satisfies $c\Delta^2 \cdot \lambda H T / K \leq 1/8$,
then any learner must take the wrong action in that bin a constant fraction of the time,
yielding per-bin regret $c'\Delta \cdot \lambda H T / K$.
Summing over $K$ bins yields $\mathbb{E}[R_T] \geq c' \Delta \lambda H T$.
Optimizing $K$ under the constraints $\Delta \leq L\gamma/c_g$ and $\Delta \leq c_4\sqrt{K/(\lambda H T)}$ gives $K = \Theta\left( (L^2\lambda H^4 T)^{1/3}\right)$ and $\mathbb{E}[R_T] \geq c'' L^{1/3}(\lambda H T)^{2/3}$.
The $S$-state bound follows by taking $S$ independent copies.

We now compare this bound to the fixed-jump lower bound of Section~\ref{subsec:fixed_jump_lower_bound}, $\Omega(L^{1/3}\lambda^{-1/3}HT^{2/3})$, recalling that there $H$ denotes the fixed number of jumps per episode, whereas here $H$ denotes the fixed time budget. The two bounds differ through the effective number of decisions accumulated over $T$ episodes, together with an additional $S$ factor in the fixed-time bound: exactly $H$ decisions per episode in the fixed-jump setting, versus an expected $\lambda H$ decisions per episode here, since a time budget of length $H$ yields only $\lambda H$ decisions on average. Replacing $H$ by $\lambda H$ in the fixed-jump rate recovers the same $(\lambda H T)^{2/3}$ dependence obtained above.

\section{Conclusion}
This paper extends classical UCRL and Q-learning algorithms to a continuous-time tabular MDP setting driven by Poisson jump processes, achieving regret bounds of order $\mathcal{O}(T^{\frac{2}{3}})$. By introducing discretization over a high-probability bounded time horizon and leveraging Lipschitz continuity assumptions on rewards and transitions, this work establishes that both model-based and model-free methods can be adapted to continuous-time domains without sacrificing theoretical guarantees. A matching $\widetilde\Omega(T^{2/3})$ minimax lower bound confirms that the rate is optimal.

Looking forward, several open directions remain. First, a natural extension is to relax the assumption of a known Lipschitz constant $L$. If the learner uses an upper bound $\bar{L} \ge L$, then the optimism arguments remain valid and the regret guarantee holds with $\bar{L}$ replacing $L$ in the constants; overestimating $L$ by a factor $c$ worsens the leading constant by $c^{1/3}$. In contrast, underestimating $L$ can make the confidence sets too narrow, so the true model may no longer lie in the optimistic confidence set. Adaptive discretization schemes analogous to those in Lipschitz bandits \cite{podimata2021adaptive,bubeck2011lipschitz} offer a route to handling unknown $L$ without prior knowledge, and a full adaptive-$L$ regret analysis in the Poisson-time setting is left for future work.

Second, our algorithms assume a known, homogeneous Poisson rate $\lambda$. If $\lambda$ is unknown but homogeneous, it can be estimated from inter-arrival times as $\hat\lambda = N / \sum_i \tau_i$ using standard concentration bounds, and conservative truncation and jump-count bounds can be derived from the resulting confidence intervals. Inhomogeneous or state-dependent intensities are a more challenging direction that we leave for future work.

Finally, robust continuous-time RL algorithms under adversarial rewards and corruptions, integration with preference-based or delayed feedback models, and extension to continuous state-action spaces under structured function approximation offer fertile ground for future research.

\newpage
\bibliography{references}
\newpage
\onecolumn
\appendix
\section*{Appendix Contents}
\noindent
\begin{tabular}{@{}lp{0.7\linewidth}r@{}}
\ref{sec:related} & \nameref{sec:related} & \pageref{sec:related} \\
\ref{appendix:ucrl} & \nameref{appendix:ucrl} & \pageref{appendix:ucrl} \\
\ref{appendix:qlearning} & \nameref{appendix:qlearning} & \pageref{appendix:qlearning} \\
\ref{appendix:fixed_jump_lower_bound} & \nameref{appendix:fixed_jump_lower_bound} & \pageref{appendix:fixed_jump_lower_bound} \\
\ref{sec:FixedH} & \nameref{sec:FixedH} & \pageref{sec:FixedH} \\
\ref{sec:lower_bound_construction} & \nameref{sec:lower_bound_construction} & \pageref{sec:lower_bound_construction} \\
\end{tabular}
\section{Related Work}\label{sec:related}

In this section, we review the most relevant lines of research that inform our work. We organize the discussion into several themes: reinforcement learning with indirect feedback models, advances in linear MDPs, connections to randomization and active learning techniques for exploration, and an overview of Lipschitz and continuous reinforcement learning.

\paragraph{Reinforcement Learning with Indirect Feedback}

Recent work has explored reinforcement learning under indirect forms of supervision, where agents cannot rely on standard per-step reward signals. One example is aggregate bandit feedback (ABF), in which only trajectory-level information is available. \cite{efroni2021confidence} introduced RL-ABF in tabular MDPs with regret guarantees, while \cite{chatterji2021learning} studied binary feedback under stronger assumptions. Extensions include adversarial settings tackled via global mirror descent \cite{cohen2021bandit} and offline learning with aggregate feedback \cite{xu2022offline}. These highlight the central challenge of credit assignment under aggregated signals. Related work on delayed feedback considers rewards that arrive only after a lag \cite{jin2022delayed,lancewicki2023delayed,howson2023delayed}. While superficially similar to ABF, delayed models still assume per-step rewards are eventually revealed, thus sidestepping the core credit assignment issue.  

A complementary direction is reinforcement learning from preference-based feedback (PbRL), often studied within the broader paradigm of reinforcement learning from human feedback (RLHF) \cite{wirth2017survey}. Instead of numeric rewards, the agent receives preferences over trajectory pairs. While early theoretical work established sublinear regret guarantees \cite{saha2023dueling,chen2022preference,zhan2023preference}, these approaches are computationally intractable even in tabular settings. Recent progress introduced randomized least-squares value iteration \cite{wu2024efficient}, enabling the first efficient no-regret algorithms for preference feedback in linear MDPs and extending to nonlinear approximation with Thompson sampling. Additional contributions include PAC-style analyses \cite{wang2023rlhf} and posterior sampling approaches \cite{novoseller2020dueling}, with large-scale systems such as InstructGPT \cite{ouyang2022training} underscoring the practical relevance of RLHF. Together, these lines of work emphasize the importance of designing algorithms that can efficiently learn from aggregated, delayed, or preference-based signals.  

\paragraph{Linear MDPs, Function Approximation, and Exploration}

The linear MDP framework of \cite{jin2020provably} has become central to analyzing reinforcement learning with function approximation. Optimistic methods such as LSVI-UCB \cite{jin2020provably} and UCBVI \cite{azar2017minimax} achieve nearly optimal regret under standard reward feedback, while follow-up work has explored policy optimization approaches \cite{cai2020provably,shani2020adaptive,luo2021policy}, global regularization techniques \cite{zimin2013online,rosenberg2019online,zanette2020frequentist}, and reward-free exploration strategies \cite{sherman2023rewardfree,wagenmaker2022rewardfree}. Building on this foundation, recent advances extend linear MDPs to weaker feedback models: \cite{cassel2024aggregate} studied aggregate bandit feedback with ensemble-based algorithms for value-based and policy methods, and \cite{wu2024efficient} developed randomized procedures for preference feedback that achieve computational efficiency, sublinear regret, and near-optimal query complexity. These contributions demonstrate the adaptability of the linear MDP abstraction to diverse supervision modalities beyond per-step rewards.  

Randomization has long been a tool for exploration, with examples including randomized least-squares value iteration (RLSVI) \cite{osband2016rlsvi,zanette2020frequentist,agrawal2021optimistic} and Bayesian methods based on Thompson sampling \cite{osband2013ts,osband2014ts,gopalan2015thompson,agrawal2017posterior,efroni2021confidence,zhong2022posterior,agarwal2022ts}. Both \cite{cassel2024aggregate} and \cite{wu2024efficient} employ Gaussian randomization in designing efficient algorithms for aggregate and preference feedback, respectively. A complementary perspective is active learning, where query efficiency is paramount. Classical techniques \cite{cesa2005active,dekel2012selective,agarwal2013selective,hanneke2015theory,hanneke2021foundations,zhu2022active,sekhari2023contextual} rely on version spaces or confidence bounds, often at significant computational cost. In contrast, \cite{wu2024efficient} proposed a variance-based query condition tailored to preference learning, establishing near-optimal regret-query tradeoffs, while empirical studies \cite{lightman2023active} further suggest that active query selection substantially improves sample efficiency in RLHF.

\paragraph{Lipschitz Bandits and Continuous RL}  

The study of Lipschitz bandits originates from the broader literature on stochastic bandits, where rewards are sampled from an unknown distribution associated with each arm. Early works focused on discrete arms, employing classical algorithms such as Thompson sampling, Gittins index, $\epsilon$-greedy strategies, and UCB methods \cite{auer2002finite,kaufmann2012thompson,gittins1979bandit}. More recent research has considered continuous or infinite arm sets, where the expected reward is assumed to satisfy structural conditions such as linearity, Gaussian process smoothness, or Lipschitz continuity \cite{kleinberg2008multi,slivkins2014contextual,bubeck2011x,magureanu2014lipschitz}.

Lipschitz bandits offer a principled framework for handling continuous or structured action spaces. A straightforward approach is uniform discretization \cite{magureanu2014lipschitz}, which achieves minimax-optimal rates in the worst case. More refined adaptive strategies, such as Zooming \cite{kleinberg2008multi}, HOO \cite{bubeck2011x}, and contextual generalizations \cite{slivkins2014contextual}, focus exploration more efficiently by exploiting local structure. Additional extensions include full-feedback models \cite{locatelli2016optimal}, hierarchical taxonomies \cite{baransi2014sub}, and tree-based schemes connected to Gaussian process optimization \cite{wang2020towards}, which have shown strong empirical success in applications like hyperparameter tuning. Recent advances also incorporate robustness, with algorithms designed to tolerate adversarial corruptions while retaining near-optimal performance guarantees \cite{kang2023robust}.

Continuous reinforcement learning has been explored in both continuous- and discrete-time regimes. In continuous time, prior work has studied phased exploration in convex linear dynamics \cite{szpruch2026exploration}, statistical analysis of LSTD methods for diffusion processes \cite{mou2025statistical}, Q-learning through a "q-function'' formulation \cite{jia2023q}, and connections between linear stochastic dynamics and linear MDPs \cite{makdah2025linear}. These contributions primarily focus on linear or diffusion-based models. In the discrete-time setting, efficiency results under linear or Lipschitz assumptions provide a bridge between tabular RL formulations and problems defined in continuous domains.

\paragraph{Recent continuous-time RL.}
\cite{treven2023efficient} studies model-based continuous-time RL in which the system dynamics are represented by nonlinear ordinary differential equations and epistemic uncertainty is captured using well-calibrated probabilistic models, including Gaussian processes. Their regret analysis emphasizes the role of measurement-selection strategies, since the learner must decide not only how to act but also when to observe the system. A follow-up line of work \cite{treven2024sense} studies time-adaptive sensing and control, where the policy chooses both the control action and the duration for which the action is applied. \cite{zhao2025sample} considers continuous-time RL with general function approximation and derive sample and computational efficiency guarantees using optimism-based confidence sets and complexity measures such as distributional Eluder dimension. Our setting is complementary to these works. We focus on finite state and action spaces, but allow the reward and transition kernels to vary Lipschitz-continuously with time while decision epochs are generated by an exogenous homogeneous Poisson process. This allows us to isolate the statistical role of time discretization: the regret bound arises from balancing estimation error within time bins against Lipschitz discretization bias.

\section{UCRL Proofs (Theorem \ref{thm:ucrlregret})}\label{appendix:ucrl}

We make use of the following algebraic lemma:
\begin{lemma}\label{lemma:algebra}
For any sequence $m_1, \ldots, m_k$ that satisfies $m_1 + \cdots + m_k \geq 0$:
\[
\sum_{k=1}^K \frac{m_k}{\sqrt{1 \vee (m_1 + \cdots + m_k)}}
\;\;\leq\;\; 2 \sqrt{m_1 + \cdots + m_k}.
\]
\end{lemma}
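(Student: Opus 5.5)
I will read the statement in the sense in which it is used later in the regret analysis. The $m_k$ are nonnegative visit counts, and the sum and the right-hand side both run up to the final index $K$. Write $Z_k := m_1+\cdots+m_k$ and $Z_0 := 0$, so that $0 = Z_0 \le Z_1 \le \cdots \le Z_K$ and $m_k = Z_k - Z_{k-1}$. The claim to prove is then
\[
\sum_{k=1}^K \frac{m_k}{\sqrt{1\vee Z_k}} \le 2\sqrt{Z_K}.
\]
If some $m_k$ were negative the statement can fail; for instance $m_1 = 1$, $m_2 = -1$ gives a sum of $0$ against a right-hand side of $0$, which is fine, but other sign patterns break the telescoping below. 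So I will state the nonnegativity assumption explicitly. The plan is a term-by-term telescoping bound: I will show that each summand satisfies
\[
\frac{m_k}{\sqrt{1\vee Z_k}} \le 2\bigl(\sqrt{Z_k}-\sqrt{Z_{k-1}}\bigr).
\]
Summing over $k$ then gives $2(\sqrt{Z_K}-\sqrt{Z_0}) = 2\sqrt{Z_K}$.

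The key identity is
\[
m_k = Z_k - Z_{k-1} = \bigl(\sqrt{Z_k}-\sqrt{Z_{k-1}}\bigr)\bigl(\sqrt{Z_k}+\sqrt{Z_{k-1}}\bigr),
\]
together with $\sqrt{Z_k}+\sqrt{Z_{k-1}} \le 2\sqrt{Z_k}$, which holds because $Z_{k-1}\le Z_k$. I will split into two cases according to the value of $1\vee Z_k$.

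\textbf{Case $Z_k \ge 1$.} Here the denominator is $\sqrt{Z_k}$. Dividing the identity by $\sqrt{Z_k}$ gives
\[
\frac{m_k}{\sqrt{Z_k}} \le 2\bigl(\sqrt{Z_k}-\sqrt{Z_{k-1}}\bigr).
\]

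\textbf{Case $Z_k < 1$.} Here the denominator is $1$, so the summand equals $m_k$. Since $\sqrt{Z_k}+\sqrt{Z_{k-1}} \le 2\sqrt{Z_k} < 2$, the identity gives
\[
m_k \le 2\bigl(\sqrt{Z_k}-\sqrt{Z_{k-1}}\bigr).
\]
The degenerate situation $Z_k = 0$ forces $m_k = 0$ and is trivially covered.

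Once the per-term inequality holds in both cases, summing telescopes to $2\sqrt{Z_K}$ and finishes the proof. There is no real obstacle here. The only delicate points are the $1\vee$ truncation, which is handled by the case split, and the nonnegativity of the $m_k$, which the monotonicity of $Z_k$ and the telescoping both rely on. I will note that in the application to Theorem~\ref{thm:ucrlregret} the $m_k$ are per-episode visit counts of a fixed bin-state-action triple, so they are automatically nonnegative.
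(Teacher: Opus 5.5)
The paper states this lemma without proof, so there is no argument to compare yours against. Your per-term bound $m_k/\sqrt{1\vee Z_k}\le 2(\sqrt{Z_k}-\sqrt{Z_{k-1}})$, followed by telescoping, is the standard argument. It is correct for nonnegative $m_k$, which is all the UCRL analysis needs, since there the $m_k$ are the per-episode visit counts $M_t(k,s,a)$.

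Your side remark about signs is wrong, though. The statement does not fail for signed sequences, and your own example satisfies it. What fails is only your telescoping: a downward step from $Z_{k-1}=1/2$ to $Z_k=1/4$ violates your per-term bound, and $\sqrt{Z_k}$ is undefined once a partial sum is negative.

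The literal hypothesis, nonnegativity of the total $Z_K$ only, suffices if you telescope a different function. Take $G(z):=\int_0^z (1\vee u)^{-1/2}\,du$, i.e.\ $G(z)=z$ for $z\le 1$ and $G(z)=2\sqrt{z}-1$ for $z\ge 1$. The map $u\mapsto (1\vee u)^{-1/2}$ is nonincreasing, and the summand evaluates it at the new partial sum $Z_k$. Hence $(Z_k-Z_{k-1})(1\vee Z_k)^{-1/2}\le G(Z_k)-G(Z_{k-1})$ holds for both kinds of step:
\begin{itemize}
\item for upward steps, because the integrand is smallest at $Z_k$;
\item for downward steps, because the integrand is largest at $Z_k$.
\end{itemize}
The sum is therefore at most $G(Z_K)\le 2\sqrt{Z_K}$ whenever $Z_K\ge 0$. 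This also covers your nonnegative case, with the slightly better bound $2\sqrt{Z_K}-1$ when $Z_K\ge 1$.

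Finally, your reading is not quite "the sense in which it is used later." The UCRL proof actually applies the lemma with a lagged denominator $1\vee(M_1+\cdots+M_{t-1})$. Bridging to the inclusive version you prove needs one more observation: since $M_t\le H$, once $Z_{t-1}\ge H$ we have $Z_t\le 2Z_{t-1}$. This costs only a factor $\sqrt{2}$, and the earlier terms are absorbed into a burn-in of at most $2H$ per triple.
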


\begin{proof} (of Theorem \ref{thm:ucrlregret})
    We begin by assuming we are operating under the good events (i) $P^\star \in \cap_{t} C_{t, \delta}$, (ii) the true rewards $r$ lie within their respective confidence bounds, and (iii) for all $t$, all $H$ jumps lie within $[0, M]$ as defined before, together which all hold with probability at least $1-\mathcal{O}(\delta)$. Let $V^\pi_P$ to be the value function following policy $\pi$ and assuming transitions $P$. We decompose the regret incurred on some episode $t$ as follows:
    \begin{equation}
         V^\star_{P^\star}(s_0) - V^{\pi_t}_{P^\star}(s_0) =  \underbrace{V^\star_{P^\star}(s_0) - \widetilde{V}^\star_{P^\star}(s_0)}_{\text{Term I}} + \underbrace{\widetilde{V}^\star_{P^\star}(s_0) -\widetilde{V}^\star_{\widetilde{P}_t}(s_0)}_{\text{Term II}} + \underbrace{\widetilde{V}^{\pi_t}_{\widetilde{P}_t} (s_0) - V^{\pi_t}_{P^\star} (s_0)}_{\text{Term III}}
    \end{equation}
    where we note that $\widetilde{V}^\star_{\widetilde{P}_t}(s_0)= \widetilde{V}^{\pi_t}_{\widetilde{P}_t} (s_0) $ because $\pi_t$ is chosen to be an optimal policy for the optimistic model with $\widetilde{r}_t$ and $\widetilde{P}_t$. By how UCRL picks the optimistic model $\widetilde{P}_t$ from $C_{t, \delta}$ and the fact that under the good event, $P^\star \in C_{t, \delta}$, we have that \textbf{Term II} $\leq 0$. For a similar reason, because $\widetilde{V}$ is computed using the optimistic reward estimates, under the good event, we also have \textbf{Term I} $\leq 0$. 

    \textbf{Term III} captures the regret incurred between the optimistic transitions and rewards model and the true model. We define for jumps $n = 0, \ldots, H-1$,
    \begin{equation}
        \delta^t_n := \widetilde{V}^{\pi_t}_{\widetilde{P}_t}(s_{h(n)}) - V^{\pi_t}_{P^\star}(s_{h(n)})
    \end{equation}
    to be the regret from Term III incurred on episode $t$, starting at the $h$ of the $n$-th jump, $h(n)$. Thus, we find $\delta^t_0$. Let $\mathcal{F}_{n, t}$ contain all observation data up to episode $t$ and jump $n$, including $s^t_{h(n)}$. Let $\tau \sim \mathrm{Exp}(\lambda)$ be the jump length and so $h(n+1) = h(n) + \tau$. By the Poisson Bellman equation defined in \ref{eq:timebellman}, we can write: 
    
    \begin{align}
        \delta^t_n &= \mathbb{E}_{a\sim \pi_t(\cdot|s_{h(n)})} \left( \widetilde{r}_t(s_{h(n)},a) + \int_0^\infty \lambda e^{-\lambda \tau}\mathbb{E}_{{s_{h(n+1)}} \sim \widetilde{P}_t(\cdot |s_{h(n)} ,a)}[\widetilde{V}_{\widetilde{P}_t}^{\pi_t} (s_{h(n+1)})]d\tau\right) \\
        &\quad - \mathbb{E}_{a\sim \pi_t(\cdot|s_{h(n)})} \left( r(s_{h(n)},a) + \int_0^\infty \lambda e^{-\lambda \tau}\mathbb{E}_{{s_{h(n+1)}} \sim P^\star(\cdot |s_{h(n)} ,a)}[V_{P^\star}^{\pi_t} (s_{h(n+1)})]d\tau \right)\\
        &\leq \int_0^\infty \lambda e^{-\lambda \tau} \left(\mathbb{E}_{{s_{h(n+1)}} \sim \widetilde{P}_t(\cdot |s_{h(n)} ,a_{h(n)})}[\widetilde{V}_{\widetilde{P}_t}^{\pi_t} (s_{h(n+1)})] - \mathbb{E}_{{s_{h(n+1)}} \sim P^\star(\cdot |s_{h(n)} ,a_{h(n)})}[V_{P^\star}^{\pi_t} (s_{h(n+1)})] \right)d\tau \\
        &\quad + 2\Delta_\delta(N_t(k_{h(n)}, s_{h(n)}, a_{h(n)})
    \end{align}
    
    where $a_{h(n)}$ is the action chosen at state $s_{h(n)}$ by the \emph{deterministic} policy $\pi_t$, $\Delta_\delta(N) = \sqrt{\frac{2}{N}\log\frac{1}{\delta}} + L\gamma$, and the inequality follows from our reward confidence set defined in \ref{lemma:confsetrewards} and the triangle inequality:
    \begin{align}
    &|\widetilde{r}_t(s_{h(n)}, a_{h(n)}) - r(s_{h(n)}, a_{h(n)}) | \\
    &\leq |\widetilde{r}_t(s_{h(n)}, a_{h(n)}) - \widehat{r}(s_{h(n)}, a_{h(n)}) | + |\widehat{r}_t(s_{h(n)}, a_{h(n)}) - r(s_{h(n)}, a_{h(n)}) | \\
    &\leq 2\Delta_\delta(N_t(k_{h(n)}, s_{h(n)}, a_{h(n)}).
    \end{align}
    
    Adding and subtracting the intermediate term $\int_0^\infty \lambda e^{-\lambda\tau} \mathbb{E}_{s_{h(n+1)} \sim P^\star(\cdot|s_{h(n)}, a_{h(n)})}[\widetilde{V}^{\pi_t}_{\widetilde{P}_t}(s_{h(n+1)})] d\tau$, we get:
    
    \begin{align}
        \delta^t_n &\leq \int_0^\infty \lambda e^{-\lambda\tau} \biggl[ \underbrace{\left( \mathbb{E}_{{s_{h(n+1)}} \sim \widetilde{P}_t}[\widetilde{V}_{\widetilde{P}_t}^{\pi_t} (s_{h(n+1)})] - \mathbb{E}_{s_{h(n+1)} \sim P^\star}[\widetilde{V}^{\pi_t}_{\widetilde{P}_t}(s_{h(n+1)})]\right)}_{\text{Model (transitions) difference}} \\
        &\quad + \underbrace{\left(\mathbb{E}_{s_{h(n+1)} \sim P^\star}[\widetilde{V}^{\pi_t}_{\widetilde{P}_t}(s_{h(n+1)})] - \mathbb{E}_{{s_{h(n+1)}} \sim P^\star}[V_{P^\star}^{\pi_t} (s_{h(n+1)})] \right)}_{\text{Value difference}} \biggr] d\tau + 2\Delta_\delta(N_t(k_{h(n)}, s_{h(n)}, a_{h(n)}) \\
        &= \int_0^\infty \lambda e^{-\lambda\tau} \biggl[\mathbb{E}_{{s_{h(n+1)}} \sim \widetilde{P}_t}[\widetilde{V}_{\widetilde{P}_t}^{\pi_t} (s_{h(n+1)})] - \mathbb{E}_{s_{h(n+1)} \sim P^\star}[\widetilde{V}^{\pi_t}_{\widetilde{P}_t}(s_{h(n+1)})] \\
        &\quad + \delta^t_{n+1} + \underbrace{\left(\mathbb{E}_{{s_{h(n+1)}} \sim P^\star}[\delta^t_{n+1}|\mathcal{F}_{n,t}] - \delta^t_{n+1} \right)}_{=:\xi^t_{n+1}}\biggr] d\tau + 2\Delta_\delta(N_t(k_{h(n)}, s_{h(n)}, a_{h(n)}) \\
        &\leq \int_0^\infty \lambda e^{-\lambda \tau} \left( H \cdot ||\widetilde{P}_t(\cdot|s_{h(n)}, a_{h(n)}) - P^\star(\cdot|s_{h(n)}, a_{h(n)}) ||_1 + \delta^t_{n+1} + \xi^t_{n+1} \right) d\tau + 2\Delta_\delta(N_t(k_{h(n)}, s_{h(n)}, a_{h(n)})\\
        &\leq \int_0^\infty \lambda e^{-\lambda \tau} \left( 2H \cdot B_\delta(N_t(k_{h(n)}, s_{h(n)}, a_{h(n)}) + \delta^t_{n+1} + \xi^t_{n+1}\right) d\tau + 2\Delta_\delta(N_t(k_{h(n)}, s_{h(n)}, a_{h(n)})
    \end{align}
    where in the second to last inequality, we used H\"older's inequality and the fact that the value function is bounded above as $||\widetilde{V}_{\widetilde{P}_t}^{\pi_t} (s_{h(n+1)})||_\infty \leq H$. In the final inequality, we used our confidence set defined in \ref{lemma:confidenceset} with the triangle inequality:
    \begin{align}
    &||\widetilde{P}_t(\cdot|s_{h(n)}, a_{h(n)}) - P^\star(\cdot|s_{h(n)}, a_{h(n)}) ||_1 \\
    &\leq ||\widetilde{P}_t(\cdot|s_{h(n)}, a_{h(n)}) - \widehat{P}_t(\cdot|s_{h(n)}, a_{h(n)}) ||_1 + ||\widehat{P}_t(\cdot|s_{h(n)}, a_{h(n)}) - P^\star(\cdot|s_{h(n)}, a_{h(n)}) ||_1 \\
    &\leq 2B_\delta(N_t(k_{h(n)}, s_{h(n)}, a_{h(n)}).
    \end{align}

    Recall that $\delta_H^t = 2\Delta_\delta(N_t(t_{h(H)}, s_{h(H)}, a_{h(H)})$. Thus, recursively summing and using linearity, we compute:
    \begin{equation}
        \delta_0^t \leq \int_0^\infty \lambda e^{-\lambda\tau} \biggl( \sum_{n=1}^{H-1} \xi^t_{n} + 2H\underbrace{\sum_{n=0}^{H-1}B_\delta(N_t(k_{h(n)}, s_{h(n)}, a_{h(n)})}_{(\text{a})} + 2\underbrace{\sum_{n=0}^H \Delta_\delta(N_t(k_{h(n)}, s_{h(n)}, a_{h(n)})}_{(\text{b})} \biggr) d\tau
    \end{equation}

    Fortunately, $(\xi^t_n)_{1\leq n \leq H-1}$ is a martingale difference sequence bounded as $|\xi^t_n| \leq H$, so applying Azuma-Hoeffding, we have with probability $1-\delta$:
    \begin{equation}
        \int_0^\infty \lambda e^{-\lambda \tau} \left( \sum_{t=1}^T \sum_{n=1}^{H-1} \xi^t_n\right) d\tau \leq  2H\sqrt{\frac{HT}{2}\log(1/\delta)} \cdot \int_0^\infty \lambda e^{-\lambda \tau}  d\tau = 2H\sqrt{\frac{HT}{2}\log(1/\delta)}
    \end{equation}
    To bound (a), we let $\alpha_\delta = \sqrt{2S^2\log(1/\delta)}$ and let $M_t(k, s, a) = \sum_{n=0}^{H-1} \mathbb{I}(k^t_{h(n)} = k, s^t_{h(n)} = s, a^t_{h(n)} = a)$ so that $N_{t'}(k,s,a) = \sum_{t=1}^{t'} M_t(k,s,a).$ Recall $\mathcal{K}$ denotes the set of bins with cardinality $\frac{H}{\lambda \gamma}\log\frac{HT}{\delta}$. We write:
    
    \begin{align}
        &\sum_{t=1}^T \sum_{n=0}^{H-1} B_\delta(N_t(k_{h(n)}, s_{h(n)}, a_{h(n)}) \\
        &\leq \alpha_\delta \cdot \sum_{\substack{k \in \mathcal{K} \\ s \in \mathcal{S} \\ a \in \mathcal{A}}} \sum_{t=1}^T \sum_{n=0}^{H-1} \frac{\mathbb{I}(k^t_{h(n)} = k, s^t_{h(n)} = s, a^t_{h(n)} = a)}{\sqrt{\max\{1, N_t(k,s,a)\}}} + THL\gamma \\
        &\leq \alpha_\delta \cdot \sum_{\substack{k \in \mathcal{K} \\ s \in \mathcal{S} \\ a \in \mathcal{A}}} \sum_{t=1}^T \frac{M_t(k,s,a)}{\sqrt{\max\{1,(M_1 + \ldots + M_{t-1})\}}} + THL\gamma \\
        &\leq \alpha_\delta \cdot \sum_{\substack{k \in \mathcal{K} \\ s \in \mathcal{S} \\ a \in \mathcal{A}}} \sum_{t=1}^T \frac{M_t(k,s,a) \cdot \mathbb{I}(M_1 + \ldots + M_{t-1} > H)}{\sqrt{M_1 + \ldots + M_{t-1} - H}} + \alpha_\delta \frac{H}{\lambda \gamma}\log\left(\frac{HT}{\delta}\right) HSA + THL\gamma \\
        &\leq 2\alpha_\delta \cdot \sum_{\substack{k \in \mathcal{K} \\ s \in \mathcal{S} \\ a \in \mathcal{A}}} \sqrt{N_t(k,s,a)} + \alpha_\delta \frac{H}{\lambda \gamma}\log\left(\frac{HT}{\delta}\right) HSA + THL\gamma \\
        &\leq 2\alpha_\delta KSA \cdot \sqrt{\sum_{\substack{k \in \mathcal{K} \\ s \in \mathcal{S} \\ a \in \mathcal{A}}} \sqrt{N_t(k,s,a)}/KSA} + \alpha_\delta \frac{H}{\lambda \gamma}\log\left(\frac{HT}{\delta}\right) HSA + THL\gamma \quad \text{(Jensen's Inequality)}\\
        &= 2\alpha_\delta \sqrt{SAHT\frac{H}{\lambda \gamma}\log\left(\frac{HT}{\delta}\right)} + c_\delta \frac{H}{\lambda \gamma}\log\left(\frac{HT}{\delta}\right) HSA + THL\gamma
    \end{align}

    where (43) comes from Lemma \ref{lemma:algebra}. 
    
    We bound (b) similarly, letting $\beta_\delta = \sqrt{2\log(1/\delta)}$ and redefining $M_t(k,s,a) = \sum_{n=0}^H \mathbb{I}(k^t_{h(n)} = k, s^t_{h(n)} = s, a^t_{h(n)} = a)$. We write:

     \begin{align}
        &\sum_{t=1}^T \sum_{n=0}^{H} \Delta_\delta(N_t(k_{h(n)}, s_{h(n)}, a_{h(n)}) \\
        &\leq \beta_\delta \cdot \sum_{\substack{k \in \mathcal{K} \\ s \in \mathcal{S} \\ a \in \mathcal{A}}} \sum_{t=1}^T \sum_{n=0}^{H} \frac{\mathbb{I}(k^t_{h(n)} = k, s^t_{h(n)} = s, a^t_{h(n)} = a)}{\sqrt{\max\{1, N_t(k,s,a)\}}} + THL\gamma \\
        &\leq 2\beta_\delta \sqrt{SAHT\frac{H}{\lambda \gamma}\log\left(\frac{HT}{\delta}\right)} + \beta_\delta \frac{H}{\lambda \gamma}\log\left(\frac{HT}{\delta}\right) HSA + THL\gamma
    \end{align}
    
    Chaining results and using $\int_0^\infty \lambda e^{-\lambda \tau}  d\tau = 1$, we get that:

    \begin{align}
        \underbrace{\sum_t \delta_0^t}_{\text{Term III}} &\leq 2H\sqrt{\frac{HT}{2}\log(\frac{1}{\delta})} + 4(H\alpha_\delta+ \beta_\delta) \sqrt{SAHT\frac{H}{\lambda \gamma}\log\left(\frac{HT}{\delta}\right)} \\
        &\quad   + 2(H\alpha_\delta + \beta_\delta) \frac{H}{\lambda \gamma}\log\left(\frac{HT}{\delta}\right) HSA + 2(H+1)THL\gamma
    \end{align}

    Writing $C_\delta := H\alpha_\delta + \beta_\delta = \widetilde{\mathcal{O}}(HS+1)$, the two terms that dominate for large $T$ are the $\sqrt{T/\gamma}$ concentration term $4C_\delta\sqrt{SAHT\cdot\frac{H}{\lambda\gamma}\log(HT/\delta)}$ and the Lipschitz term $2(H+1)THL\gamma$. Balancing them gives
    \[
    \gamma
    =
    \widetilde{\Theta}\!\left(
    \frac{C_\delta^{2/3}(SA)^{1/3}}{(H+1)^{2/3}L^{2/3}\lambda^{1/3}T^{1/3}}
    \right),
    \]
    so that with probability at least $1-\mathcal{O}(\delta)$ (suppressing logarithmic factors),
    \[
    R_T
    =
    \widetilde{\mathcal{O}}\!\left(
    C_\delta^{2/3}(SA)^{1/3}H(H+1)^{1/3}L^{1/3}\lambda^{-1/3}T^{2/3}
    \right)
    =
    \widetilde{\mathcal{O}}\!\left(
    S A^{1/3} H^2 L^{1/3}\lambda^{-1/3}T^{2/3}
    \right).
    \]
    In particular the regret \emph{increases} with the Lipschitz constant $L$, as expected, matching the statement of Theorem~\ref{thm:ucrlregret}.
\end{proof}

\section{Q-Learning Proofs (Theorem \ref{theorem:qlearning})}\label{appendix:qlearning}
Before the main proofs, we state properties of the learning-rate weights
$\alpha_t^0,\alpha_t^i$ that are used repeatedly below. These are standard and are
established in \cite{jin2018q}, so we restate them here in a form that matches our notation.

\begin{lemma}[learning-rate weights, {\citealp[Lemma~4.1]{jin2018q}}]\label{lemma:learningrate}
Fix a horizon parameter $\eta\ge 1$, let $\alpha_j=\frac{\eta+1}{\eta+j}$, and define the
induced weights $\alpha_t^0=\prod_{j=1}^t(1-\alpha_j)$ and
$\alpha_t^i=\alpha_i\prod_{j=i+1}^t(1-\alpha_j)$ for $1\le i\le t$. Then:
\begin{enumerate}[label=(\alph*)]
    \item $\displaystyle \frac{1}{\sqrt{t}}\le \sum_{i=1}^t \frac{\alpha_t^i}{\sqrt{i}}\le \frac{2}{\sqrt{t}}$ for every $t\ge 1$;
    \item $\displaystyle \max_{i\in[t]}\alpha_t^i\le \frac{2\eta}{t}$ and $\displaystyle \sum_{i=1}^t (\alpha_t^i)^2\le \frac{2\eta}{t}$ for every $t\ge 1$;
    \item $\displaystyle \sum_{t=i}^\infty \alpha_t^i=1+\frac{1}{\eta}$ for every $i\ge 1$.
\end{enumerate}
We apply this with $\eta=H$ in the fixed-jump analysis and with $\eta=\bar H$ in the
fixed-time analysis of Appendix~\ref{sec:FixedH}.
\end{lemma}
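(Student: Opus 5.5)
The plan is to prove all three parts from the explicit form of the weights and one recursion between consecutive $t$, by direct computation and induction on $t$, following the argument of \citealp[Lemma~4.1]{jin2018q}.

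\textbf{Closed form and basic facts.} Since $1-\alpha_j=\frac{j-1}{\eta+j}$, the $j=1$ factor vanishes, so $\alpha_t^0=0$ for $t\ge1$. The weights satisfy
\[
\alpha_t^t=\alpha_t,\qquad \alpha_t^i=(1-\alpha_t)\,\alpha_{t-1}^i \quad (i<t).
\]
By induction on $t$ this gives $\alpha_t^0+\sum_{i=1}^t\alpha_t^i=1$, hence $\sum_{i=1}^t\alpha_t^i=1$ for $t\ge1$. I will also use the ratio
\[
\frac{\alpha_t^{i+1}}{\alpha_t^i}=\frac{\alpha_{i+1}}{\alpha_i(1-\alpha_{i+1})}=\frac{\eta+i}{i}\ge1,
\]
so $i\mapsto\alpha_t^i$ is nondecreasing.

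\textbf{Part (a).} Set $S_t=\sum_{i=1}^t\alpha_t^i/\sqrt i$. The recursion gives $S_t=\alpha_t/\sqrt t+(1-\alpha_t)S_{t-1}$, with $S_1=1$, and I induct on $t$.
\begin{itemize}
\item Lower bound: $S_{t-1}\ge 1/\sqrt{t-1}\ge1/\sqrt t$, so $S_t$ is at least a convex combination of quantities $\ge 1/\sqrt t$.
\item Upper bound: it suffices that $\alpha_t/\sqrt t+(1-\alpha_t)\,2/\sqrt{t-1}\le 2/\sqrt t$. Substituting $\alpha_t$ and $1-\alpha_t$ and multiplying by $(\eta+t)\sqrt t$, this becomes $2\sqrt{t-1}\,(\sqrt t-\sqrt{t-1})\le \eta+1$. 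The left side equals $2\sqrt{t-1}/(\sqrt t+\sqrt{t-1})\le1\le\eta+1$.
\end{itemize}

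\textbf{Part (b).} By monotonicity, $\max_i\alpha_t^i=\alpha_t^t=\frac{\eta+1}{\eta+t}$. This is $\le \frac{2\eta}{t}$ because $(\eta+1)t-2\eta(\eta+t)=(1-\eta)t-2\eta^2\le0$ for $\eta\ge1$. Then $\sum_i(\alpha_t^i)^2\le \max_i\alpha_t^i\cdot\sum_i\alpha_t^i\le 2\eta/t$, using $\sum_i\alpha_t^i=1$.

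\textbf{Part (c).} Write $\alpha_t^i=\alpha_i\,p_t$ with $p_t=\prod_{j=i+1}^t\frac{j-1}{\eta+j}$, so $p_i=1$ and $p_{t+1}/p_t=\frac{t}{\eta+t+1}$. Set $q_t=(\eta+t)p_t/\eta$. Then
\[
q_t-q_{t+1}=\frac{p_t}{\eta}\Bigl[(\eta+t)-t\Bigr]=p_t,
\]
so the sum telescopes:
\[
\sum_{t=i}^N p_t=\frac{(\eta+i)-(\eta+N+1)p_{N+1}}{\eta}.
\]
I expect controlling the tail $(\eta+N+1)p_{N+1}$ to be the main step. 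Using $\log p_N=\sum_j\log\bigl(1-\frac{\eta+1}{\eta+j}\bigr)\le-(\eta+1)\log N+O(1)$, I get $p_N=O(N^{-(\eta+1)})$. Hence $(\eta+N+1)p_{N+1}\to0$ because $\eta>0$. Therefore
\[
\sum_{t\ge i}\alpha_t^i=\alpha_i\,\frac{\eta+i}{\eta}=\frac{\eta+1}{\eta}=1+\frac1\eta.
\]
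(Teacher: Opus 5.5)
Your proof is correct and takes essentially the same route as Lemma~4.1 of \cite{jin2018q}, which the paper cites without giving its own proof: induction on $t$ through $\alpha_t^i=(1-\alpha_t)\alpha_{t-1}^i$ for (a); the maximum attained at $i=t$, so $\max_i\alpha_t^i=\frac{\eta+1}{\eta+t}\le\frac{2\eta}{t}$, combined with $\sum_i\alpha_t^i=1$ for (b); and direct evaluation of $\alpha_i\sum_{t\ge i}\prod_{j=i+1}^t\frac{j-1}{\eta+j}$ for (c). Your telescoping with $q_t=(\eta+t)p_t/\eta$, together with $p_N=O(N^{-(\eta+1)})$, is a clean way to establish the series value $(\eta+i)/\eta$ and makes explicit that the tail term vanishes.
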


We now present the proof for Lemma \ref{lemma:qrecursion_time}.

\begin{proof} 
We start from the Bellman optimality equation for $Q_h^\star(x, a)$ for any $h \in B(k)$:
\begin{equation}
Q_{n, h}^\star (x, a)=r_h(x,a) + \int_0^\infty \lambda e^{-\lambda h'}\mathbb{E}_{ x' \sim P(\cdot |x ,a)}[V^\star_{n+1, h+h'}(x')]dh'
\end{equation}

We can write $Q_{n, h}^\star (x, a)$ as a convex combination using the learning rate weights $\alpha_t^0, \alpha_t^i$:
\begin{equation}
Q_{n, h}^\star (x, a) = \alpha_t^0 Q_{n, h}^\star (x, a) + \sum_{i=1}^t \alpha_t^i Q_{n, h}^\star (x, a)
\end{equation}

Now, we substitute the Bellman equation into each $Q_h^\star(x, a)$ in the summation:
\begin{align}
Q_h^\star(x, a)
&= \alpha_t^0 Q_h^\star(x, a) + \sum_{i=1}^t \alpha_t^i \left[ r_h(x,a) + \mathbb{E}_{h' \sim \mathrm{Exp}(\lambda), x' \sim P_h(\cdot |x ,a)}[V^\star_{n+1, h+h'}(x')]\right]
\end{align}

In the $i$-th term, we decompose the expectation $\mathbb{E}_{ x' \sim P(\cdot |x ,a)}[V^\star_{n+1, h+h'}(x')]$ into its empirical sample and "deviation":
\begin{align}
&\mathbb{E}_{h' \sim \mathrm{Exp}(\lambda), x' \sim P_h(\cdot |x ,a)}[V^\star_{n+1, h+h'}(x')] \notag \\
&\quad = V^\star_{n+1, h(n+1,t_i)}(x_{n+1}^{t_i})
+ \mathbb{E}_{h' \sim \mathrm{Exp}(\lambda), x' \sim P_h(\cdot |x ,a)}[V^\star_{n+1, h+h'}(x')]
- V^\star_{n+1, h(n+1,t)}(x_{n+1}^{t_i})
\end{align}

Note that deviation is in quotes since $x_{n+1}^{t_i}$ is sampled from $P_{h(n, t_i)}(\cdot|x,a)$ not $P_h(\cdot|x,a)$. To reflect this, we rewrite the above as 
\begin{equation}
    \begin{split}
      & \mathbb{E}_{h' \sim \mathrm{Exp}(\lambda), x' \sim P_h(\cdot |x ,a)}[V^\star_{n+1, h+h'}(x')]
= V^\star_{n+1, h(n+1,t_i)}(x_{n+1}^{t_i}) + \mathbb{E}_{h' \sim \mathrm{Exp}(\lambda), x' \sim P_{h(n, t_i)}(\cdot |x ,a)}[V^\star_{n+1, h+h'}(x')]  \\
&- \mathbb{E}_{h' \sim \mathrm{Exp}(\lambda), x' \sim P_{h(n, t_i)}(\cdot |x ,a)}[V^\star_{n+1, h+h'}(x')] +  \mathbb{E}_{h' \sim \mathrm{Exp}(\lambda), x' \sim P_h(\cdot |x ,a)}[V^\star_{n+1, h+h'}(x')]- V^\star_{n+1, h(n+1,t)}(x_{n+1}^{t_i})
    \end{split}
\end{equation}

Plugging this back into the previous expression yields:
\begin{align*}
Q_h^\star(x, a)
&= \alpha_t^0 Q_h^\star(x, a)
+ \sum_{i=1}^t \alpha_t^i \Big[
    r_h(x, a)
    + V^\star_{n+1, h(n+1,t_i)}(x_{n+1}^{t_i}) \\
&\qquad + \mathbb{E}_{h' \sim \mathrm{Exp}(\lambda), x' \sim P_{h(n, t_i)}(\cdot |x_{h(n,t_i)}^{t_i} ,a)}[V^\star_{n+1, h+h'}(x')] \\
&\qquad - \mathbb{E}_{h' \sim \mathrm{Exp}(\lambda), x' \sim P_{h(n, t_i)}(\cdot |x_{h(n,t_i)}^{t_i} ,a)}[V^\star_{n+1, h+h'}(x')] \\
&\qquad + \mathbb{E}_{h' \sim \mathrm{Exp}(\lambda), x' \sim P_h(\cdot |x ,a)}[V^\star_{n+1, h+h'}(x')]
- V^\star_{n+1, h(n+1,t)}(x_{n+1}^{t_i})\Big]
\end{align*}

 Subtracting $Q_{n,k}^{t}(x, a) = \alpha_{t}^{0}H + \sum_{i = 1}^{t} \alpha_{t}^{i}\left[r_{h(n)}^{t_i}(x, a) + V_{n+1,k(n+1,t_i)}^{k}\left(x_{n+1}^{t_i}\right) + b_i\right]$ from this equation, we obtain Lemma \ref{lemma:qrecursion_time}.
\end{proof}
 
 We now present the proof for Lemma \ref{lemma:qbound_continuoustime}.
\begin{proof}

Fix $(k,x,a,n)$ and a layer $h$ in bin $k$, set $s=N^t_{n,k}(x,a)$, and let
$t_1,\dots,t_s<t$ be the episodes in which $(x,a)$ was taken at step $n$ while
the process lay in bin $k$. We abbreviate the post-jump Bellman operator applied
to the optimal value by
\[
[\mathbb{P}_h V^\star](x,a)
:=\underset{\substack{h'\sim\mathrm{Exp}(\lambda)\\ x'\sim P_h(\cdot\mid x,a)}}{\mathbb{E}}
\big[V^\star_{n+1,\,h+h'}(x')\big],
\]
and we write $[\widehat{\mathbb{P}}^{t_i}V^\star](x,a):=V^\star_{n+1,\,h(n+1,t_i)}\!\big(x_{n+1}^{t_i}\big)$
for the single realized post-jump sample of the $i$-th visit; here
$x_{n+1}^{t_i}\sim P_{h(n,t_i)}(\cdot\mid x,a)$ and $h(n+1,t_i)$ is its realized
layer. The Bellman optimality equation then reads
$Q^\star_{n,h}(x,a)=r_h(x,a)+[\mathbb{P}_h V^\star](x,a)$, and we set
\[
\big[(\widehat{\mathbb{P}}^{t_i}-\mathbb{P}_h)V^\star\big](x,a)
:=[\widehat{\mathbb{P}}^{t_i}V^\star](x,a)-[\mathbb{P}_h V^\star](x,a).
\]

\paragraph{Discretization bias.}
Since $h$ and $h(n,t_i)$ lie in the same bin, $|h-h(n,t_i)|\le\gamma$, so
Assumption~\ref{assumption:lipschitz} gives
\begin{align*}
\Big|[\mathbb{P}_h V^\star](x,a)-[\mathbb{P}_{h(n,t_i)}V^\star](x,a)\Big|
&\le \int_0^\infty \lambda e^{-\lambda h'}
\big\|P_h(\cdot\mid x,a)-P_{h(n,t_i)}(\cdot\mid x,a)\big\|_1\,dh'\\
&\le \int_0^\infty \lambda e^{-\lambda h'}\,L\gamma\,dh' \;=\; L\gamma,
\end{align*}
and likewise $|r_{h(t_i)}(x,a)-r_h(x,a)|\le L\gamma$ for the reward.

\paragraph{Convex-combination form of $Q^\star$.}
Using $\sum_{i=1}^s\alpha_s^i=1$ together with the Bellman equation and the
identity $[\mathbb{P}_h V^\star]=[\widehat{\mathbb{P}}^{t_i}V^\star]-[(\widehat{\mathbb{P}}^{t_i}-\mathbb{P}_h)V^\star]$,
\begin{align*}
Q^\star_{n,h}(x,a)
&=\alpha_s^0\,Q^\star_{n,h}(x,a)
+\sum_{i=1}^s \alpha_s^i\Big[r_h(x,a)
+ V^\star_{n+1,\,h(n+1,t_i)}\!\big(x_{n+1}^{t_i}\big)
-\big[(\widehat{\mathbb{P}}^{t_i}-\mathbb{P}_h)V^\star\big](x,a)\Big].
\end{align*}
Subtracting this from the empirical weighted average
$Q^t_{n,k}(x,a)=\alpha_s^0 H+\sum_{i=1}^s\alpha_s^i\big[r_{h(t_i)}(x,a)
+V^{t_i}_{n+1,k(n+1,t_i)}(x_{n+1}^{t_i})+b_i\big]$ gives
\begin{align*}
\big(Q^t_{n,k}-Q^\star_{n,h}\big)(x,a)
&=\alpha_s^0\big(H-Q^\star_{n,h}(x,a)\big)\\
&\quad+\sum_{i=1}^s\alpha_s^i\Big[\big(r_{h(t_i)}-r_h\big)(x,a)
+\big(V^{t_i}_{n+1,k(n+1,t_i)}-V^\star_{n+1,\,h(n+1,t_i)}\big)\!\big(x_{n+1}^{t_i}\big)\\
&\hspace{4.2em}
+\big[(\widehat{\mathbb{P}}^{t_i}-\mathbb{P}_h)V^\star\big](x,a)+b_i\Big].
\end{align*}

\paragraph{Controlling the deviation term.}
Split the deviation into a martingale part and a discretization part:
\[
\big[(\widehat{\mathbb{P}}^{t_i}-\mathbb{P}_h)V^\star\big](x,a)
=\underbrace{\Big([\widehat{\mathbb{P}}^{t_i}V^\star](x,a)-[\mathbb{P}_{h(n,t_i)}V^\star](x,a)\Big)}_{=:\,\zeta_i}
+\underbrace{\Big([\mathbb{P}_{h(n,t_i)}V^\star](x,a)-[\mathbb{P}_h V^\star](x,a)\Big)}_{|\cdot|\le L\gamma}.
\]
Fix $(x,a,h)$, set $t_0=0$, and let
\[
t_i=\min\Big(\big\{k\in[K]\mid k>t_{i-1}\wedge (x^k_h,a^k_h)=(x,a)\big\}\cup\{K+1\}\Big),
\]
so $t_i$ is the (stopping-time) episode of the $i$-th visit, and let $\mathcal{F}_i$
be the $\sigma$-field generated by everything up to episode $t_i$, step $h$.
Since $x_{n+1}^{t_i}\sim P_{h(n,t_i)}(\cdot\mid x,a)$, we have
$\mathbb{E}[\zeta_i\mid\mathcal{F}_{i-1}]=0$ and $|\zeta_i|\le\lVert V^\star\rVert_\infty\le H$,
so $\big(\alpha_\tau^i\,\mathbb{I}[t_i\le K]\,\zeta_i\big)_{i=1}^\tau$ is a martingale
difference sequence. By Azuma--Hoeffding and a union bound over $(x,a,h)$
(absorbed into $\iota=\log(SAKT/\delta)$), with probability at least $1-p$,
simultaneously for all $\tau\in[K]$,
\begin{equation}\label{eq:qbound_azuma}
\Big|\sum_{i=1}^\tau \alpha_\tau^i\,\mathbb{I}[t_i\le K]\,\zeta_i\Big|
\;\le\; cH\sqrt{\iota\sum_{i=1}^\tau (\alpha_\tau^i)^2}
\;\le\; c\sqrt{\frac{H^3\iota}{\tau}},
\end{equation}
where the last step uses Lemma~\ref{lemma:learningrate}(b) (with $\eta=H$),
i.e.\ $\sum_{i=1}^\tau(\alpha_\tau^i)^2\le 2H/\tau$, and absorbs $\sqrt2$ into $c$.
Adding the discretization part, whose weighted sum is at
most $L\gamma$ because $\sum_i\alpha_\tau^i=1$, and evaluating at the random
$\tau=s=N^t_{n,k}(x,a)\le K$ (for which $\mathbb{I}[t_i\le K]=1$ whenever
$i\le s$), we get, with probability at least $1-p$, simultaneously for all
$(k,x,a,n,t)$ and all $h$ in bin $k$,
\begin{equation}\label{eq:qbound_deviation}
\Big|\sum_{i=1}^s\alpha_s^i\big[(\widehat{\mathbb{P}}^{t_i}-\mathbb{P}_h)V^\star\big](x,a)\Big|
\;\le\; c\sqrt{\frac{H^3\iota}{s}}+L\gamma.
\end{equation}

\paragraph{Optimism and the stated bound.}
With $b_i=c\sqrt{H^3\iota/i}+L\gamma$ and $\sum_{i=1}^s\alpha_s^i=1$,
Lemma~\ref{lemma:learningrate}(a)(with $\eta=H$), i.e.\
$1/\sqrt{s}\le\sum_{i=1}^s\alpha_s^i/\sqrt{i}\le 2/\sqrt{s}$, gives
\[
\tfrac{1}{2}\beta_s=\sum_{i=1}^s\alpha_s^i b_i
\in\Big[c\sqrt{H^3\iota/s}+L\gamma,\;\;2c\sqrt{H^3\iota/s}+L\gamma\Big].
\]
The lower endpoint dominates the right-hand side of \eqref{eq:qbound_deviation}
together with the reward bias $|r_{h(t_i)}-r_h|\le L\gamma$ and the $O(L\gamma)$
bias from replacing $h(n+1,t_i)$ by $h$ in $V^\star$ (both within bin $k$); the
constant of the $L\gamma$ term of $b_i$ is taken large enough to cover these
biases jointly, consistent with the single $L\gamma$ term used in the main text.
Hence each weighted bonus $\alpha_s^i b_i$ covers the corresponding deviation and
discretization error, and on the event of \eqref{eq:qbound_deviation},
\[
0\;\le\;\big(Q^t_{n,k}-Q^\star_{n,h}\big)(x,a)
\;\le\;\alpha_s^0 H
+\sum_{i=1}^s\alpha_s^i\big(V^{t_i}_{n+1,k(n+1)}-V^\star_{n+1,h}\big)\!\big(x_{h+1}^{t_i}\big)
+\beta_s.
\]
Both bounds follow by induction on $n=H,H-1,\dots,0$: at the base step
$V^t_{H,\cdot}=V^\star_{H,\cdot}=0$; in the inductive step the propagation term
$\big(V^{t_i}_{n+1}-V^\star_{n+1}\big)\ge0$ by optimism at step $n+1$, while
$\beta_s$ absorbs the stochastic term of \eqref{eq:qbound_deviation} and the
Lipschitz biases. The $\sqrt{H^3\iota/s}$ part of $\beta_s$ accounts for the
martingale concentration and the $L\gamma$ part for the temporal discretization.
\end{proof}

Now, we present the proof of Theorem \ref{theorem:qlearning}.

\begin{proof}
Throughout the proof, work on the high-probability event of
Lemma~\ref{lemma:horizon_M}, under which all $T$ episodes terminate
within $[0,M]$, where
\[
    M=\frac{H}{\lambda}\log\!\left(\frac{HT}{\delta}\right).
\]
Partition $[0,M]$ into bins of width $\gamma$, and let
\[
    K=\frac{M}{\gamma}
    =
    \frac{H}{\lambda\gamma}
    \log\!\left(\frac{HT}{\delta}\right).
\]
For a physical time $h\in[0,M]$, let $k(h)\in[K]$ denote the bin containing $h$.

The key point is that the Bellman recursion is indexed by the jump counter
$n\in\{0,\ldots,H\}$, while the bin index $k(h)$ is only part of the augmented
state representation.  Thus the algorithm maintains tables indexed by
$(n,k,x,a)$, but value recursion always proceeds from $n$ to $n+1$, not from
$k$ to $k+1$.

Let
\[
    h_n^t
\]
be the physical time of the $n$-th decision epoch in episode $t$, and write
\[
    k_n^t := k(h_n^t).
\]
Let $x_n^t,a_n^t$ be the state and action at jump depth $n$ in episode $t$.
The next physical time is
\[
    h_{n+1}^t=h_n^t+\tau_n^t,
    \qquad
    \tau_n^t\sim \mathrm{Exp}(\lambda),
\]
and the next bin is $k_{n+1}^t:=k(h_{n+1}^t)$.

Define
\[
    V_{H,k}^t(x)=V_{H,h}^{\pi_t}(x)=V_{H,h}^{\star}(x)=0
    \qquad
    \text{for all } k,h,x.
\]

For each episode $t$ and jump depth $n$, define the two error quantities
\[
    \delta_n^t
    :=
    \bigl(
        V_{n,k_n^t}^t
        -
        V_{n,h_n^t}^{\pi_t}
    \bigr)(x_n^t),
\]
and
\[
    \phi_n^t
    :=
    \bigl(
        V_{n,k_n^t}^t
        -
        V_{n,h_n^t}^{\star}
    \bigr)(x_n^t).
\]
By Lemma~\ref{lemma:qbound_continuoustime}, on an event of probability at least
$1-O(\delta)$, the $Q$-estimates are optimistic:
\[
    Q_{n,k}^t(x,a)\ge Q_{n,h}^{\star}(x,a)
    \qquad
    \text{for every } n,k,x,a,t
    \text{ and every } h \text{ in bin } k.
\]
Hence
\[
    V_{n,k}^t(x)\ge V_{n,h}^{\star}(x)\ge V_{n,h}^{\pi_t}(x),
\]
and therefore
\[
    0\le \phi_n^t \le \delta_n^t.
\]
Since every episode starts from the same initial state at time $h=0$, we have
\[
    R_T
    =
    \sum_{t=1}^T
    \Bigl(
        V_{0,0}^{\star}(x_0)
        -
        V_{0,0}^{\pi_t}(x_0)
    \Bigr)
    \le
    \sum_{t=1}^T
    \delta_0^t.
\]
Thus it remains to bound $\sum_t \delta_0^t$.

Fix an episode $t$ and a jump depth $n\in\{0,\ldots,H-1\}$.
Let
\[
    k=k_n^t,
    \qquad
    x=x_n^t,
    \qquad
    a=a_n^t.
\]
Let
\[
    s=N_{n,k}^t(x,a)
\]
be the number of visits to $(n,k,x,a)$ before episode $t$.
Let
\[
    t_1,\ldots,t_s<t
\]
be the previous episodes in which the same quadruple $(n,k,x,a)$ was visited.
That is, for each $i\in[s]$,
\[
    k_n^{t_i}=k,
    \qquad
    x_n^{t_i}=x,
    \qquad
    a_n^{t_i}=a.
\]

By greediness of the policy with respect to $Q^t$,
\[
\begin{aligned}
    \delta_n^t
    &=
    \bigl(
        V_{n,k}^t
        -
        V_{n,h_n^t}^{\pi_t}
    \bigr)(x)
    \\
    &\le
    \bigl(
        Q_{n,k}^t
        -
        Q_{n,h_n^t}^{\pi_t}
    \bigr)(x,a)
    \\
    &=
    \bigl(
        Q_{n,k}^t
        -
        Q_{n,h_n^t}^{\star}
    \bigr)(x,a)
    +
    \bigl(
        Q_{n,h_n^t}^{\star}
        -
        Q_{n,h_n^t}^{\pi_t}
    \bigr)(x,a).
\end{aligned}
\]
Applying Lemma~\ref{lemma:qbound_continuoustime} to the first term gives
\[
\begin{aligned}
    \bigl(
        Q_{n,k}^t
        -
        Q_{n,h_n^t}^{\star}
    \bigr)(x,a)
    \le
    \alpha_s^0 H
    +
    \sum_{i=1}^s
    \alpha_s^i
    \bigl(
        V_{n+1,k_{n+1}^{t_i}}^{t_i}
        -
        V_{n+1,h_{n+1}^{t_i}}^{\star}
    \bigr)(x_{n+1}^{t_i})
    +
    \beta_s.
\end{aligned}
\]
By definition, the middle term is exactly
\[
    \sum_{i=1}^s \alpha_s^i \phi_{n+1}^{t_i}.
\]
Hence
\[
    \bigl(
        Q_{n,k}^t
        -
        Q_{n,h_n^t}^{\star}
    \bigr)(x,a)
    \le
    \alpha_s^0 H
    +
    \sum_{i=1}^s
    \alpha_s^i
    \phi_{n+1}^{t_i}
    +
    \beta_s.
\]

Now consider the second term.  By the Bellman equations,
\[
\begin{aligned}
    &
    \bigl(
        Q_{n,h_n^t}^{\star}
        -
        Q_{n,h_n^t}^{\pi_t}
    \bigr)(x,a)
    \\
    &\qquad
    =
    \mathbb{E}
    \left[
        \bigl(
            V_{n+1,h_n^t+\tau}^{\star}
            -
            V_{n+1,h_n^t+\tau}^{\pi_t}
        \bigr)(X')
        \,\middle|\,
        h_n^t,x,a
    \right],
\end{aligned}
\]
where $\tau\sim\mathrm{Exp}(\lambda)$ and
$X'\sim P_{h_n^t}(\cdot\mid x,a)$.
Let
\[
    \xi_{n+1}^t
    :=
    \mathbb{E}
    \left[
        \bigl(
            V_{n+1,h_n^t+\tau}^{\star}
            -
            V_{n+1,h_n^t+\tau}^{\pi_t}
        \bigr)(X')
        \,\middle|\,
        h_n^t,x,a
    \right]
    -
    \bigl(
        V_{n+1,h_{n+1}^t}^{\star}
        -
        V_{n+1,h_{n+1}^t}^{\pi_t}
    \bigr)(x_{n+1}^t).
\]
Then $\xi_{n+1}^t$ is a martingale difference term with respect to the
trajectory filtration, and $|\xi_{n+1}^t|\le H$.

Using the definitions of $\delta_{n+1}^t$ and $\phi_{n+1}^t$,
\[
\begin{aligned}
    &
    \bigl(
        V_{n+1,h_{n+1}^t}^{\star}
        -
        V_{n+1,h_{n+1}^t}^{\pi_t}
    \bigr)(x_{n+1}^t)
    \\
    &\qquad
    =
    \bigl(
        V_{n+1,k_{n+1}^t}^t
        -
        V_{n+1,h_{n+1}^t}^{\pi_t}
    \bigr)(x_{n+1}^t)
    -
    \bigl(
        V_{n+1,k_{n+1}^t}^t
        -
        V_{n+1,h_{n+1}^t}^{\star}
    \bigr)(x_{n+1}^t)
    \\
    &\qquad
    =
    \delta_{n+1}^t-\phi_{n+1}^t.
\end{aligned}
\]
Therefore,
\[
    \bigl(
        Q_{n,h_n^t}^{\star}
        -
        Q_{n,h_n^t}^{\pi_t}
    \bigr)(x,a)
    =
    \delta_{n+1}^t
    -
    \phi_{n+1}^t
    +
    \xi_{n+1}^t.
\]
Combining the two pieces yields the one-step recursion
\begin{equation}
\label{eq:qlearning_fixed_jump_delta_recursion}
    \delta_n^t
    \le
    \alpha_s^0 H
    +
    \sum_{i=1}^s
    \alpha_s^i
    \phi_{n+1}^{t_i}
    +
    \beta_s
    -
    \phi_{n+1}^t
    +
    \delta_{n+1}^t
    +
    \xi_{n+1}^t.
\end{equation}
This is the desired recursion: it advances from $n$ to $n+1$.
The bin indices appearing in the terms are $k_n^t$ and $k_{n+1}^t$; there is no
recursion from bin $k$ to bin $k+1$.

We now sum \eqref{eq:qlearning_fixed_jump_delta_recursion} over episodes.
For a fixed jump depth $n$, define
\[
    D_n:=\sum_{t=1}^T \delta_n^t,
    \qquad
    \Phi_n:=\sum_{t=1}^T \phi_n^t.
\]
Since $0\le \phi_n^t\le \delta_n^t$, we have $\Phi_n\le D_n$.

Summing \eqref{eq:qlearning_fixed_jump_delta_recursion} over $t\in[T]$ gives
\[
\begin{aligned}
    D_n
    &\le
    \sum_{t=1}^T \alpha_{s_{n,t}}^0 H
    +
    \sum_{t=1}^T
    \sum_{i=1}^{s_{n,t}}
    \alpha_{s_{n,t}}^i
    \phi_{n+1}^{t_i}
    +
    \sum_{t=1}^T \beta_{s_{n,t}}
    -
    \Phi_{n+1}
    +
    D_{n+1}
    +
    \sum_{t=1}^T \xi_{n+1}^t,
\end{aligned}
\]
where $s_{n,t}=N_{n,k_n^t}^t(x_n^t,a_n^t)$.

We next regroup the weighted historical terms.  Fix a quadruple
\[
    z=(n,k,x,a).
\]
Let
\[
    \tau_{z,1}<\tau_{z,2}<\cdots<\tau_{z,N_z}
\]
be the episodes in which this quadruple is visited.  The contribution of this
quadruple to the weighted historical sum is
\[
    \sum_{j=1}^{N_z}
    \sum_{i=1}^{j-1}
    \alpha_{j-1}^i
    \phi_{n+1}^{\tau_{z,i}}.
\]
Exchanging the order of summation and using
Lemma~\ref{lemma:learningrate}(c),
\[
\begin{aligned}
    \sum_{j=1}^{N_z}
    \sum_{i=1}^{j-1}
    \alpha_{j-1}^i
    \phi_{n+1}^{\tau_{z,i}}
    &=
    \sum_{i=1}^{N_z}
    \phi_{n+1}^{\tau_{z,i}}
    \sum_{j=i+1}^{N_z}
    \alpha_{j-1}^i
    \\
    &\le
    \left(1+\frac{1}{H}\right)
    \sum_{i=1}^{N_z}
    \phi_{n+1}^{\tau_{z,i}}.
\end{aligned}
\]
Summing over all $z=(n,k,x,a)$ gives
\[
    \sum_{t=1}^T
    \sum_{i=1}^{s_{n,t}}
    \alpha_{s_{n,t}}^i
    \phi_{n+1}^{t_i}
    \le
    \left(1+\frac{1}{H}\right)\Phi_{n+1}.
\]
Hence
\[
\begin{aligned}
    D_n
    &\le
    \sum_{t=1}^T \alpha_{s_{n,t}}^0 H
    +
    \left(1+\frac{1}{H}\right)\Phi_{n+1}
    -
    \Phi_{n+1}
    +
    D_{n+1}
    +
    \sum_{t=1}^T \beta_{s_{n,t}}
    +
    \sum_{t=1}^T \xi_{n+1}^t
    \\
    &\le
    \sum_{t=1}^T \alpha_{s_{n,t}}^0 H
    +
    \left(1+\frac{1}{H}\right)D_{n+1}
    +
    \sum_{t=1}^T \beta_{s_{n,t}}
    +
    \sum_{t=1}^T \xi_{n+1}^t.
\end{aligned}
\]
The last inequality uses $\Phi_{n+1}\le D_{n+1}$.

Since $D_H=0$, iterating the previous inequality over
$n=H-1,H-2,\ldots,0$ and using
\[
    \left(1+\frac{1}{H}\right)^H\le e
\]
gives
\[
    D_0
    \le
    e
    \sum_{n=0}^{H-1}
    \left[
        \sum_{t=1}^T \alpha_{s_{n,t}}^0 H
        +
        \sum_{t=1}^T \beta_{s_{n,t}}
        +
        \sum_{t=1}^T \xi_{n+1}^t
    \right].
\]
Since $R_T\le D_0$, it remains to bound the three sums.

First, $\alpha_s^0=0$ whenever $s\ge1$, while $\alpha_0^0=1$.
Thus an initialization contribution appears only on the first visit to a tuple
$(n,k,x,a)$.  There are $H K S A$ such tuples, and each contributes at most $H$.
Therefore
\[
    \sum_{n=0}^{H-1}
    \sum_{t=1}^T
    \alpha_{s_{n,t}}^0 H
    \le
    H^2 K S A.
\]

Second, by Lemma~\ref{lemma:qbound_continuoustime}, for
\[
    b_s=c\sqrt{\frac{H^3\iota}{s}}+L\gamma,
    \qquad
    \iota=\log\!\left(\frac{SAKHT}{\delta}\right),
\]
we have
\[
    \beta_s
    =
    2\sum_{i=1}^s\alpha_s^i b_i
    \le
    C\sqrt{\frac{H^3\iota}{s}}
    +
    C L\gamma
\]
for a universal constant $C>0$.

Fix $n$.  For each tuple $(k,x,a)$ at depth $n$, let $N_{n,k,x,a}$ be its total
number of visits over all $T$ episodes.  Then
\[
    \sum_{k,x,a} N_{n,k,x,a}=T.
\]
Using the standard pigeonhole bound
\[
    \sum_{j=1}^N \frac{1}{\sqrt{j}}
    \le 2\sqrt{N},
\]
we obtain
\[
\begin{aligned}
    \sum_{t=1}^T \beta_{s_{n,t}}
    &\le
    C\sqrt{H^3\iota}
    \sum_{k,x,a}
    \sum_{j=1}^{N_{n,k,x,a}}
    \frac{1}{\sqrt{j}}
    +
    C T L\gamma
    \\
    &\le
    2C\sqrt{H^3\iota}
    \sum_{k,x,a}
    \sqrt{N_{n,k,x,a}}
    +
    C T L\gamma
    \\
    &\le
    2C\sqrt{H^3\iota}
    \sqrt{KSA\sum_{k,x,a}N_{n,k,x,a}}
    +
    C T L\gamma
    \\
    &=
    2C\sqrt{H^3 KSA T\iota}
    +
    C T L\gamma.
\end{aligned}
\]
Summing over $n=0,\ldots,H-1$ yields
\[
    \sum_{n=0}^{H-1}
    \sum_{t=1}^T \beta_{s_{n,t}}
    \le
    C H\sqrt{H^3 KSA T\iota}
    +
    C H T L\gamma.
\]
The $L\gamma$ term can be written as $C H(H+1)T L\gamma$ after enlarging the
constant, to account for the accumulated Lipschitz bias over the remaining
horizon.

Third, the martingale terms satisfy $|\xi_{n+1}^t|\le H$.
By Azuma--Hoeffding, with probability at least $1-\delta$,
\[
    \left|
    \sum_{n=0}^{H-1}
    \sum_{t=1}^T
    \xi_{n+1}^t
    \right|
    \le
    C H\sqrt{H T\log(1/\delta)}.
\]
This term is dominated by the preceding bonus term after adjusting constants and
logarithmic factors.

Combining the three bounds, we obtain
\[
    R_T
    \le
    C
    \left(
        H^2KSA
        +
        H\sqrt{H^3 KSA T\iota}
        +
        H(H+1)T L\gamma
    \right),
\]
with probability at least $1-O(\delta)$.

Equivalently, suppressing logarithmic factors,
\[
    R_T
    \le
    \widetilde{\mathcal{O}}
    \left(
        H^2KSA
        +
        \sqrt{H^5 KSA T}
        +
        H(H+1)T L\gamma
    \right).
\]

Substituting
\[
    K
    =
    \frac{H}{\lambda\gamma}
    \log\!\left(\frac{HT}{\delta}\right),
\]
we get
\[
    R_T
    \le
    \widetilde{\mathcal{O}}
    \left(
        \frac{H^3SA}{\lambda\gamma}
        +
        \sqrt{
            \frac{H^6SA T}{\lambda\gamma}
        }
        +
        H(H+1)T L\gamma
    \right).
\]
The first term is lower order in $T$ for the optimized choice of $\gamma$.
Balancing the leading estimation term
\[
    \sqrt{
        \frac{H^6SA T}{\lambda\gamma}
    }
\]
with the Lipschitz discretization term
\[
    H(H+1)T L\gamma
\]
gives
\[
    \gamma
    =
    \Theta\left(
    \left(
        \frac{
            H^6SA T/\lambda
        }{
            H^2(H+1)^2L^2T^2
        }
    \right)^{1/3}
    \right)
    =
    \Theta\left(
    \left(
        \frac{
            H^4SA
        }{
            \lambda (H+1)^2L^2T
        }
    \right)^{1/3}
    \right).
\]
With this choice,
\[
    R_T
    \leq
    \widetilde{\mathcal{O}}
    \left(
        H^{7/3}(H+1)^{1/3}
        (SA)^{1/3}
        L^{1/3}
        \lambda^{-1/3}
        T^{2/3}
    \right),
\]
up to lower-order terms in $T$.

In particular, suppressing problem-dependent factors and logarithms,
\[
    R_T\leq\widetilde{\mathcal{O}}(T^{2/3}).
\]
\end{proof}

\section{Lower Bound for the Fixed-Jump Setting (Theorem \ref{thm:fixed_jump_lower_bound})}
\label{appendix:fixed_jump_lower_bound}

In this appendix, we prove Theorem~\ref{thm:fixed_jump_lower_bound}.  The proof reduces the fixed-jump continuous-time MDP when $H \geq 2$ to a one-dimensional Lipschitz bandit indexed by physical time.  The construction is deliberately simple: the transition kernel is deterministic, the state space can be taken to be a singleton, and the only source of hardness is the time-varying reward function.

\paragraph{Renewal measure of the Poisson decision epochs.}
Let
\[
    h(n) := \sum_{i=1}^n \tau_i,
    \qquad
    \tau_i \overset{\mathrm{i.i.d.}}{\sim} \mathrm{Exp}(\lambda),
\]
denote the physical time of the $n$-th Poisson decision epoch.  In the fixed-jump model, an episode contains decision epochs $n=0,1,\dots,H-1$.  The first decision occurs at deterministic time $h(0)=0$, while the remaining $H-1$ decision epochs are random.

Let $\mu_H$ be the renewal measure of the nonzero decision epochs:
\[
    \mu_H(B)
    :=
    \sum_{n=1}^{H-1}
    \mathbb{P}\!\left(h(n)\in B\right),
    \qquad
    B \subseteq [0,\infty).
\]
Its density is
\[
    m_H(t)
    =
    \sum_{n=1}^{H-1}
    \frac{\lambda^n t^{n-1}e^{-\lambda t}}{(n-1)!}
    =
    \lambda\,\mathbb{P}\!\left(\mathrm{Poisson}(\lambda t)\leq H-2\right).
\]
Therefore there exist universal constants $c_0,c_1,c_2>0$ and an interval
\[
    I_H \subset [0,\infty),
    \qquad
    |I_H| \geq c_0 H/\lambda,
\]
such that
\[
    c_1\lambda
    \;\leq\;
    m_H(t)
    \;\leq\;
    c_2\lambda,
    \qquad
    \forall t\in I_H.
\]
Consequently, over $T$ episodes, the expected number of random decision epochs falling in a measurable set $B\subseteq I_H$ is of order $T\lambda |B|$.  In particular, if $I_H$ is partitioned into $K$ equal bins, then each bin receives order $HT/K$ samples in expectation.

\paragraph{Hard family.}
Partition $I_H$ into $K$ intervals $I_1,\dots,I_K$ of equal width
\[
    \gamma := |I_H|/K = \Theta\left( H/(\lambda K) \right).
\]
For each bin $I_j$, let $g_j:I_H\to[0,1]$ be a triangular bump supported on $I_j$, with height $1$, and with Lipschitz constant at most $c_g/\gamma$ for a universal constant $c_g>0$.  For each sign vector $\theta\in\{-1,+1\}^K$, define a one-state, two-action MDP as follows.  The transition kernel is deterministic and self-looping.  The reward distributions are Bernoulli with means
\[
    r_h^\theta(1)
    =
    \frac12
    +
    \Delta\sum_{j=1}^K \theta_j g_j(h),
    \qquad
    r_h^\theta(2)
    =
    \frac12
    -
    \Delta\sum_{j=1}^K \theta_j g_j(h),
\]
for $h\in I_H$, and both actions have mean $1/2$ outside $I_H$.  The first deterministic decision at $h(0)=0$ is made uninformative by assigning the same mean reward to both actions there.

The reward means are $L$-Lipschitz in $h$ whenever
\[
    \Delta \leq c\,L\gamma
\]
for a sufficiently small universal constant $c>0$.  The transition kernels are identical across time, so the transition part of Assumption~\ref{assumption:lipschitz} is automatically satisfied.

\paragraph{Information-theoretic indistinguishability.}
Fix a bin $I_j$ and let $\theta^{(j)}$ denote the sign vector obtained from $\theta$ by flipping only the $j$-th coordinate.  Under the two MDPs indexed by $\theta$ and $\theta^{(j)}$, the induced trajectory distributions differ only through rewards collected at decision epochs whose physical time lies in $I_j$.

Let $N_j$ be the total number of nonzero decision epochs across all $T$ episodes whose physical time lies in $I_j$.  Since the Bernoulli reward means differ by order $\Delta$ in bin $j$, standard KL bounds for Bernoulli distributions give
\[
    \mathrm{KL}\!\left(
        \mathbb{P}_{\theta},
        \mathbb{P}_{\theta^{(j)}}
    \right)
    \leq
    C\Delta^2\,\mathbb{E}_{\theta}[N_j]
\]
for a universal constant $C>0$.  By the renewal-measure bound above,
\[
    \mathbb{E}_{\theta}[N_j]
    \leq
    C'\frac{HT}{K}.
\]
Hence, if
\[
    \Delta
    \leq
    c'\sqrt{\frac{K}{HT}},
\]
then the two instances $\theta$ and $\theta^{(j)}$ are statistically indistinguishable in bin $j$ in the sense required by the Bretagnolle--Huber or Le Cam two-point inequality.  Therefore any learner must select the suboptimal action on a constant fraction of the visits to that bin for at least one of the two instances.

Summing over bins yields
\[
    \sup_{\theta\in\{-1,+1\}^K}
    \mathbb{E}_{\theta}[R_T]
    \geq
    c'' \Delta HT
\]
for a universal constant $c''>0$.

\paragraph{Optimizing the bin width.}
The gap $\Delta$ must satisfy both the Lipschitz constraint and the statistical indistinguishability constraint:
\[
    \Delta
    \leq
    c\,L\gamma
    =
    \Theta\left( c\,\frac{LH}{\lambda K} \right),
    \qquad
    \Delta
    \leq
    c'\sqrt{\frac{K}{HT}}.
\]
Balancing the two constraints gives
\[
    K
    =
    \Theta\left( \left(\frac{L^2H^3T}{\lambda^2}\right)^{1/3} \right),
    \qquad
    \Delta
    =
    \Theta\left( L^{1/3}\lambda^{-1/3}T^{-1/3} \right).
\]
Substituting into the regret lower bound gives
\[
    \sup_{\mathcal{M}}
    \mathbb{E}_{\mathcal{M}}[R_T]
    \geq
    c''' L^{1/3}\lambda^{-1/3}H T^{2/3},
\]
for a universal constant $c'''>0$, as claimed.

\paragraph{Remark on the case $H=1$.}
When $H=1$, the only decision epoch is the deterministic initial time $h(0)=0$.  Thus the learner does not observe random physical-time contexts, and the Lipschitz-in-time bandit difficulty used in the construction disappears.  This is why the lower bound above requires $H\geq 2$.

\section{Fixed Horizon, Random Number of Jumps (Poisson Stopping)}\label{sec:FixedH}

In the Section \ref{sec:preliminary}, we studied the setting in which each episode consists of a fixed number of jumps
$H$ while the terminal layer is random due to Poisson-distributed inter-jump times.
In Section \ref{subsec:fixed-time}, we considered the complementary setting: each episode has a \emph{fixed} layer horizon $H>0$,
while the number of jumps is random.
In particular, the learner experiences Poisson jump times, and the episode terminates as soon as the
cumulative jump time exceeds $H$.

A key simplification relative to the fixed-jump setting is that, by the Markov property and the memorylessness
of the exponential distribution, the optimal value at a given layer bin depends only on the current layer and state,
and not on how many jumps were taken to reach that layer. Consequently, when discretizing the layer axis into bins,
we require only a single copy of value estimates indexed by the bin (rather than copies indexed by jump count).

\paragraph{Problem setting.}
Let $\mathcal{S}$ and $\mathcal{A}$ be finite state and action spaces.
Layers range over $h \in [0,H]$ for a fixed horizon $H>0$.
At layer $h$ in state $x$, the learner chooses an action $a$, receives reward $r_h(x,a)\in[0,1]$,
and then a jump length $\tau$ is drawn according to a homogeneous Poisson process:
\[
\tau \sim \mathrm{Exp}(\lambda),
\]
for a known rate $\lambda>0$.
If $h+\tau < H$, then the learner transitions to a next state
\[
x' \sim P_h(\cdot \mid x,a)
\]
and continues at the new layer $h' = h+\tau$.
If $h+\tau \ge H$, the episode terminates (and no further reward is collected).
A (possibly non-stationary) policy is a mapping $\pi: [0,H]\times \mathcal{S}\to \Delta(\mathcal{A})$.

\paragraph{Bellman equations.}
For a policy $\pi$, define the value function $V^\pi_h(x)$ and action-value function $Q^\pi_h(x,a)$.
The Poisson Bellman equations for the fixed-horizon setting are:
\begin{align}
Q^\pi_h(x,a)
&= r_h(x,a)
+ \mathbb{E}_{\tau \sim \mathrm{Exp}(\lambda)}
\Big[ \mathbf{1}\{h+\tau < H\}\;
\mathbb{E}_{x' \sim P_h(\cdot\mid x,a)}
\big[ V^\pi_{h+\tau}(x') \big] \Big]
\label{eq:fixedH_Q_bellman_expectation}
\\
&= r_h(x,a)
+ \int_{0}^{H-h} \lambda e^{-\lambda \tau}\;
\mathbb{E}_{x' \sim P_h(\cdot\mid x,a)}
\big[ V^\pi_{h+\tau}(x') \big]\; d\tau,
\label{eq:fixedH_Q_bellman_integral}
\\
V^\pi_h(x)
&= \mathbb{E}_{a\sim \pi(\cdot \mid h,x)}\big[ Q^\pi_h(x,a)\big],
\label{eq:fixedH_V_bellman}
\end{align}
with terminal condition
\[
V^\pi_H(x) := 0 \qquad \forall x\in\mathcal{S}.
\]

Similarly, the optimal value functions satisfy:
\begin{align}
Q^\star_h(x,a)
&= r_h(x,a)
+ \int_{0}^{H-h} \lambda e^{-\lambda \tau}\;
\mathbb{E}_{x' \sim P_h(\cdot\mid x,a)}
\big[ V^\star_{h+\tau}(x') \big]\; d\tau,
\label{eq:fixedH_Qstar}
\\
V^\star_h(x)
&= \max_{a\in\mathcal{A}} Q^\star_h(x,a),
\qquad
V^\star_H(x)=0.
\label{eq:fixedH_Vstar}
\end{align}

\begin{lemma}[High-probability bound on the number of jumps up to time $H$]
\label{lemma:jumpcount_bound}
Let $N^{(t)}(H)$ denote the number of Poisson jumps that occur in episode $t$
up to (and including) time $H$, so that $N^{(t)}(H)\sim \mathrm{Poisson}(\lambda H)$.
Fix $\delta\in(0,1)$ and define
\[
J \;:=\; 2\lambda H \;+\; 2\ln\!\Big(\frac{T}{\delta}\Big).
\]
Then with probability at least $1-\delta$, simultaneously for all episodes $t\in[T]$,
\[
N^{(t)}(H) \;\le\; J.
\]
In particular, if rewards satisfy $r_h(x,a)\in[0,1]$, then with probability at least $1-\delta$,
\[
\sup_{t\in[T]}\ \sup_{h\in[0,H],\,x\in\mathcal S} V^{\star,(t)}_h(x)
\;\le\; J,
\]
so one may take $\bar H := J$ on this event.
\end{lemma}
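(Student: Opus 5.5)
The plan is a Chernoff bound on each Poisson count, a union bound over the $T$ episodes, and then the observation that rewards lie in $[0,1]$.

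\emph{Step 1: one episode.} Fix $t$ and let $N=N^{(t)}(H)\sim\mathrm{Poisson}(\mu)$ with $\mu=\lambda H$. For $\theta>0$, Markov's inequality applied to $e^{\theta N}$ gives
\[
\mathbb{P}(N\ge J)\le \exp\bigl(\mu(e^\theta-1)-\theta J\bigr).
\]
We take $\theta=1$, since this avoids optimizing $\theta$. Then $e-1<2$ gives $\mu(e-1)-J\le 2\mu-J=-2\ln(T/\delta)$. Hence
\[
\mathbb{P}(N\ge J)\le (\delta/T)^2\le \delta/T.
\]

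\emph{Step 2: all episodes.} The counts $N^{(t)}(H)$ are identically distributed across episodes, because the Poisson clock is exogenous and does not depend on the policy. A union bound over $t\in[T]$ therefore gives $\mathbb{P}(\exists t:\ N^{(t)}(H)>J)\le\delta$. Independence across episodes is not needed, only the common marginal law.

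\emph{Step 3: the value bound.} On this event, each episode collects one reward at time $0$ plus one reward per jump before $H$. Every reward lies in $[0,1]$, so the realized return is at most $N^{(t)}(H)+1$.

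\emph{Main obstacle.} The value $V^\star_h(x)$ is an expectation over future trajectories, not a realized return, so the high-probability event does not bound it directly. I will handle this as follows.
\begin{itemize}
\item A crude bound is $V^\star_h(x)\le 1+\mathbb{E}[\mathrm{Poisson}(\lambda(H-h))]\le 1+\lambda H$, using memorylessness.
\item This is at most $J$ whenever $\lambda H\ge 1$ or $2\ln(T/\delta)\ge 1$.
\item The latter holds for $T\ge 2$ or for small $\delta$. If needed, I will absorb the extra $+1$ by noting $1\le 2\ln(T/\delta)$ in the nondegenerate regime.
\end{itemize}
So the sup bound on $V^\star$ holds deterministically once the constants are checked. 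The event of Steps 1--2 is what justifies using $\bar H=J$ as the effective trajectory length in the Q-learning analysis.
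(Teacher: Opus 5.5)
Your proof is correct, and it differs from the paper's in two places.

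\textbf{Tail bound.} The paper quotes the Bernstein-type Poisson inequality $\Pr(N\ge \mu+u)\le \exp(-u^2/(2(\mu+u)))$ with $u=\mu+2\ln(T/\delta)$. It then checks $u^2/(2(\mu+u))\ge \ln(T/\delta)$, which reduces to $\mu^2\ge 0$. You instead evaluate the Poisson MGF at the fixed point $\theta=1$ and use $e-1<2$. That is self-contained and even gives the stronger per-episode bound $(\delta/T)^2$.

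\textbf{Value bound.} This is the more substantive difference. The paper argues pathwise: on the event, ``the total return in any episode is at most the number of jumps before time $H$,'' hence $V^\star_h(x)\le J$. That step treats a realized return as if it were the expectation defining $V^\star$; a high-probability cap on paths does not directly bound the mean. It also drops the reward collected at time $0$. Your route fixes both problems. It uses the deterministic bound $V^\star_h(x)\le 1+\lambda(H-h)$, which can also be checked directly from the Bellman equation, since $\int_0^s\lambda e^{-\lambda\tau}\bigl(1+\lambda(s-\tau)\bigr)\,d\tau=\lambda s$. It also separates the roles correctly: the value bound needs no event, and the event is only used to cap trajectory length in the Q-learning analysis.

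\textbf{Caveat.} The exact condition your comparison needs is $\lambda H+2\ln(T/\delta)\ge 1$. Outside it, the second claim of the lemma is false. Take $r\equiv 1$, $T=1$, $\delta=0.9$ and $\lambda H=0.1$. Then $V^\star_0=1.1$ while $J\approx 0.41$, and every realized return is at least $1$. So the extra $+1$ cannot be ``absorbed'' in that regime. State the condition as a hypothesis, or use $J+1$ in the value bound; $1+\lambda H\le J+1$ always holds.
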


\begin{lemma}[recursion on $Q$ for fixed horizon $H$]
\label{lemma:qrecursion_fixedH}
For any $(k,x,a)\in \mathcal{K}\times \mathcal{S}\times \mathcal{A}$ and episode $t\in[T]$,
set $s=N_k^t(x,a)$ and suppose $(k,x,a)$ was previously visited (i.e., action $a$ was taken in state $x$
while the current layer belonged to bin $k$) in episodes $t_1,\ldots,t_s<t$.
Then, for any $h$ in bin $k$, the following holds:
\begin{equation}
\label{eq:qrecursion_fixedH}
\begin{split}
Q_k^{t}(x,a) - Q_h^\star(x,a)
&=
\alpha_s^0 \big(\bar H - Q_h^\star(x,a)\big)
\\
&\quad
+\sum_{i=1}^s \alpha_s^i
\Bigg[
r_{h(t_i)}(x,a) - r_h(x,a)
\\
&\qquad\qquad
+\Big(V_{k(h(t_i)+\tau^{t_i})}^{t_i} - V^\star_{h(t_i)+\tau^{t_i}}\Big)\!\Big(x^{t_i}_{h(t_i)+\tau^{t_i}}\Big)
\\
&\qquad\qquad
+
\mathbb{E}_{\tau \sim \mathrm{Exp}(\lambda),\, x' \sim P_h(\cdot \mid x,a)}
\Big[ \mathbf{1}\{h+\tau < H\}\, V^\star_{h+\tau}(x') \Big]
\\
&\qquad\qquad
-
\mathbb{E}_{\tau \sim \mathrm{Exp}(\lambda),\, x' \sim P_{h(t_i)}(\cdot \mid x,a)}
\Big[ \mathbf{1}\{h+\tau < H\}\, V^\star_{h+\tau}(x') \Big]
\\
&\qquad\qquad
+
\mathbb{E}_{\tau \sim \mathrm{Exp}(\lambda),\, x' \sim P_{h(t_i)}(\cdot \mid x,a)}
\Big[ \mathbf{1}\{h(t_i)+\tau < H\}\, V^\star_{h(t_i)+\tau}(x') \Big]
\\
&\qquad\qquad
-
\mathbf{1}\{h(t_i)+\tau^{t_i} < H\}\, V^\star_{h(t_i)+\tau^{t_i}}\!\Big(x^{t_i}_{h(t_i)+\tau^{t_i}}\Big)
\;+\; b_i
\Bigg],
\end{split}
\end{equation}
where $h(t_i)$ denotes the (continuous) layer at which $(x,a)$ is selected in episode $t_i$,
$\tau^{t_i}\sim \mathrm{Exp}(\lambda)$ is the realized jump length after taking $(x,a)$ in episode $t_i$,
and $x^{t_i}_{h(t_i)+\tau^{t_i}}$ is the resulting post-jump state when $h(t_i)+\tau^{t_i}<H$
(otherwise the episode terminates). We use the terminal convention $V^\star_H(\cdot)=0$
(and likewise $V^t_{k(H)}(\cdot)=0$).
\end{lemma}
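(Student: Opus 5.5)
This is an algebraic identity, so I will mirror the proof of Lemma~\ref{lemma:qrecursion_time}, adapted to the fixed-horizon Bellman equation \eqref{eq:fixedH_Q_bellman_expectation}.

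\textbf{Unrolled form of $Q^t_k$.} First I unroll the update \eqref{eq:qlearningupdate} in its single-table form. The initialization is $\bar H$ instead of $H$, and the terminal convention is $V^t_{k(H)}=0$. The update at visit $i$ uses the realized next value $\mathbf{1}\{h(t_i)+\tau^{t_i}<H\}\,V^{t_i}_{k(h(t_i)+\tau^{t_i})}(x^{t_i}_{h(t_i)+\tau^{t_i}})$, which vanishes by the convention when the episode terminates. The definition of the weights $\alpha^0_s,\alpha^i_s$ then gives
\[
Q^t_k(x,a)=\alpha_s^0\bar H+\sum_{i=1}^s\alpha_s^i\Big[r_{h(t_i)}(x,a)+V^{t_i}_{k(h(t_i)+\tau^{t_i})}\big(x^{t_i}_{h(t_i)+\tau^{t_i}}\big)+b_i\Big].
\]
This follows by induction on $s$, using $\alpha^0_s=(1-\alpha_s)\alpha^0_{s-1}$ and $\alpha^i_s=(1-\alpha_s)\alpha^i_{s-1}$ for $i<s$, with $\alpha^s_s=\alpha_s$.

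\textbf{Convex-combination form of $Q^\star_h$.} Next, for any $h$ in bin $k$, I write $Q^\star_h=\alpha_s^0Q^\star_h+\sum_i\alpha_s^iQ^\star_h$, using $\alpha^0_s+\sum_i\alpha^i_s=1$ (valid for $s=0$ and $s\ge1$ alike). In each summand I substitute the Bellman optimality equation \eqref{eq:fixedH_Qstar}, written in expectation form with the indicator $\mathbf{1}\{h+\tau<H\}$.

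\textbf{Telescoping insertion.} Inside the $i$-th bracket I add and subtract three quantities:
\begin{itemize}
\item the expectation under $P_{h(t_i)}$ with evaluation point $h+\tau$;
\item the expectation under $P_{h(t_i)}$ with evaluation point shifted to $h(t_i)+\tau$;
\item the realized sample $\mathbf{1}\{h(t_i)+\tau^{t_i}<H\}V^\star_{h(t_i)+\tau^{t_i}}(x^{t_i}_{h(t_i)+\tau^{t_i}})$.
\end{itemize}
This writes $\mathbb{E}_{P_h}[\cdot]$ as the realized sample plus three differences. They are the transition-kernel shift $P_h\to P_{h(t_i)}$, the evaluation-time shift $h\to h(t_i)$, and the martingale deviation of the sample from its conditional mean given the history up to the $i$-th visit.

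\textbf{Subtraction.} Subtracting this expression from the unrolled $Q^t_k$ term by term gives the result. The $\alpha^0_s$ terms give $\alpha_s^0(\bar H-Q^\star_h)$. The rewards give $r_{h(t_i)}-r_h$. The realized $V^{t_i}$ and $V^\star$ terms combine into the $(V^{t_i}-V^\star)$ term. On termination the indicators make both sides zero, consistent with the convention $V^t_{k(H)}=V^\star_H=0$, so the indicator can be dropped from the $(V^{t_i}-V^\star)$ term.

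\textbf{Main obstacle: reconciling with the display.} The stated display pairs $\mathbb{E}_{P_h}[\mathbf{1}\{h+\tau<H\}V^\star_{h+\tau}]-\mathbb{E}_{P_{h(t_i)}}[\mathbf{1}\{h+\tau<H\}V^\star_{h+\tau}]$ with $+\mathbb{E}_{P_{h(t_i)}}[\mathbf{1}\{h(t_i)+\tau<H\}V^\star_{h(t_i)+\tau}]-(\text{sample})$. This is only an identity if the evaluation-time shift term is also present, or is absorbed. I will make the telescoping explicit and exhibit the three differences separately, matching the stated form. If the display must hold exactly as written, I will note that the missing difference
\[
\mathbb{E}_{P_{h(t_i)}}\big[\mathbf{1}\{h+\tau<H\}V^\star_{h+\tau}-\mathbf{1}\{h(t_i)+\tau<H\}V^\star_{h(t_i)+\tau}\big]
\]
is an $O(\bar H\lambda\gamma+L\gamma)$ bias, to be handled downstream via $b_i$. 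The other care point is sign bookkeeping, mirroring Lemma~\ref{lemma:qrecursion_time}.
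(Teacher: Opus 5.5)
Your route (unroll $Q^t_k$ from the initialization $\bar H$, write $Q^\star_h$ as the $\alpha$-weighted convex combination of Bellman equations, insert intermediate terms, subtract) is the same as the paper's Step~2. You also correctly spot that the display omits the evaluation-time shift $\mathbb{E}_{P_{h(t_i)}}[\mathbf{1}\{h+\tau<H\}V^\star_{h+\tau}]-\mathbb{E}_{P_{h(t_i)}}[\mathbf{1}\{h(t_i)+\tau<H\}V^\star_{h(t_i)+\tau}]$. The paper's proof never confronts that term because it writes $V^\star_{h+\tau^{t_i}}$, conflating $h$ with $h(t_i)$; it also drops $r_{h(t_i)}-r_h$. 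The gap is your claim that subtracting term by term then ``gives the result'' and matches the stated form. It does not, because the display has a sign error that you did not catch.

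Carry out your own subtraction. The $i$-th bracket of $Q^t_k-Q^\star_h$ contains $V^{t_i}(\text{sample})-\mathbb{E}_{P_h}[\mathbf{1}\{h+\tau<H\}V^\star_{h+\tau}]$. Since you expand $\mathbb{E}_{P_h}[\cdot]$ as the sample plus your three differences, each difference enters with a \emph{minus} sign. The deviation is therefore sample minus expectation, as in the correctly oriented term $[(\widehat P^{t_i}_h-P_h)V^\star_{h+\tau}](x,a)$ that the paper's Step~2 actually reaches. The display has all four $V^\star$-terms with the opposite signs.

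Concretely, take $s=1$ and $h=h(t_1)$, so both shift terms vanish, $\alpha^0_1=0$ and $\alpha^1_1=1$. Then the display minus the true value of $Q^t_k-Q^\star_h$ equals $2\big(\mathbb{E}_{P_h}[\mathbf{1}\{h+\tau<H\}V^\star_{h+\tau}]-\mathbf{1}\{h+\tau^{t_1}<H\}V^\star_{h+\tau^{t_1}}(x^{t_1}_{h+\tau^{t_1}})\big)$. This is twice a full martingale increment and is generically nonzero. So the displayed equality is false as written. This is not an $O(\gamma)$ bias that $b_i$ could absorb, and absorbing anything into $b_i$ would turn the identity into an inequality anyway. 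Mirroring Lemma~\ref{lemma:qrecursion_time} for the sign bookkeeping would reproduce the error, since that display has the same expectation-minus-sample orientation.

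What your argument does prove, and what you should state, is the identity with those four terms sign-flipped and your shift term included with a minus sign. Downstream only absolute values of these pieces are used: two-sided Azuma--Hoeffding, the Lipschitz kernel-shift bound, and your $O(\bar H\lambda\gamma+L\gamma)$ time-shift bound. So the later analysis is unaffected.
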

\begin{proof}[of Lemma \ref{lemma:jumpcount_bound}]
Fix an episode $t\in[T]$. For $N\sim \mathrm{Poisson}(\mu)$ with $\mu=\lambda H$,
a standard Poisson tail bound (e.g., via Chernoff) states that for any $u\ge 0$,
\[
\Pr\!\left(N \ge \mu + u\right)
\le
\exp\!\left(-\frac{u^2}{2(\mu+u)}\right).
\]
Take $u := \mu + 2\ln(T/\delta)$, so that $\mu+u = 2\mu + 2\ln(T/\delta)=J$ and
\[
\frac{u^2}{2(\mu+u)}
=
\frac{(\mu + 2\ln(T/\delta))^2}{2(2\mu + 2\ln(T/\delta))}
\;\ge\;
\ln\!\Big(\frac{T}{\delta}\Big),
\]
which implies
\[
\Pr\!\left(N^{(t)}(H) \ge J\right)
\le
\exp\!\Big(-\ln(T/\delta)\Big)
=
\frac{\delta}{T}.
\]
Applying a union bound over $t=1,\dots,T$ yields
\[
\Pr\!\left(\exists\, t\in[T]: N^{(t)}(H) \ge J\right)
\le
T\cdot \frac{\delta}{T}
=
\delta,
\]
which proves the first claim.

On the event $\{\forall t\in[T]: N^{(t)}(H)\le J\}$ and since each jump yields at most unit reward,
the total return in any episode is at most the number of jumps before time $H$, hence bounded by $J$.
Therefore $V^\star_h(x)\le J$ for all $h\in[0,H]$ and $x\in\mathcal S$ on this event.
\end{proof}
\begin{proof}[of Lemma \ref{lemma:qrecursion_fixedH}]

\paragraph{Step 1: Lipschitz control of the transition kernel.}

Since the episode terminates once $h+\tau \ge H$, all Bellman integrals
are truncated to $[0,H-h]$. For any $h$ in bin $k$, we have
\begin{align*}
\Bigg|
\mathbb{E}_{\tau \sim \mathrm{Exp}(\lambda),\, x' \sim P_h(\cdot \mid x,a)}
\big[ V^\star_{h+\tau}(x') \big]
-
\mathbb{E}_{\tau \sim \mathrm{Exp}(\lambda),\, x' \sim P_{h(k,t_i)}(\cdot \mid x,a)}
\big[ V^\star_{h+\tau}(x') \big]
\Bigg|
\\
\le
\int_0^{H-h}
\lambda e^{-\lambda \tau}
\sum_{x'}
\big|
P_{h+\tau}(x' \mid x,a)
-
P_{h(k,t_i)}(x' \mid x,a)
\big|
\, d\tau.
\end{align*}

By the $L$-Lipschitz continuity of $P_h$ in $h$ and the bin width $\gamma$,
\[
\sum_{x'}
|P_{h+\tau}(x' \mid x,a) - P_{h(k,t_i)}(x' \mid x,a)|
\le L\gamma,
\]
so
\[
\le
\int_0^{H-h}
\lambda e^{-\lambda \tau}
L\gamma
\, d\tau
\le
L\gamma.
\]

\paragraph{Step 2: Bellman optimality equation under fixed horizon.}

From the fixed-horizon Bellman optimality equation,
\[
Q^\star_h(x,a)
=
r_h(x,a)
+
\int_0^{H-h}
\lambda e^{-\lambda \tau}
\mathbb{E}_{x' \sim P_h(\cdot \mid x,a)}
\big[
\max_{a'} Q^\star_{h+\tau}(x',a')
\big]
\, d\tau,
\]
and since $\sum_{i=0}^t \alpha_t^i = 1$, we may write
\begin{align*}
Q^\star_h(x,a)
=
\alpha_t^0 Q^\star_h(x,a)
+
\sum_{i=1}^t \alpha_t^i
\Bigg[
r_h(x,a)
+
\int_0^{H-h}
\lambda e^{-\lambda \tau}
\Big(
\mathbb{E}_{x' \sim P_h(\cdot \mid x,a)}
[V^\star_{h+\tau}(x')]
\\
-
\mathbb{E}_{x' \sim \widehat P^{t_i}_h(\cdot \mid x,a)}
[V^\star_{h+\tau}(x')]
+
V^\star_{h+\tau}(x^{t_i}_{h+\tau})
\Big)
d\tau
\Bigg].
\end{align*}

Subtracting the corresponding update equation for $Q^t_k(x,a)$
(the discretized bin version), we obtain
\begin{align*}
(Q^t_k - Q^\star_h)(x,a)
=
\alpha_t^0(\bar H - Q^\star_h(x,a))
+
\sum_{i=1}^t \alpha_t^i
\Big[
(V^{t_i}_{k(h+\tau^{t_i})}
-
V^\star_{h+\tau^{t_i}})
(x^{t_i}_{h+\tau^{t_i}})
\\
+
\big[(\widehat P_h^{t_i}-P_h)V^\star_{h+\tau}\big](x,a)
+
b_i
\Big],
\end{align*}
where $\tau^{t_i}\sim \mathrm{Exp}(\lambda)$ and is truncated at $H-h$,
and $V^\star_H=0$ by definition.

\paragraph{Step 3: Bounding the martingale term.}

Define stopping times $t_i$ as the episodes in which $(x,a)$
in bin $k$ is selected for the $i$-th time.
Let $\mathcal{F}_i$ denote the filtration generated by all randomness
up to episode $t_i$ at layer $h$.

Then
\[
\left(
\mathbb{I}[t_i \le K]
\cdot
[(\widehat P_h^{t_i}-P_h)V^\star_{h+\tau}](x,a)
\right)_{i=1}^\tau
\]
is a martingale difference sequence with respect to $\{\mathcal F_i\}$.

Since $0 \le V^\star_{h'} \le \bar H$ for all $h' \in [0,H]$,
Azuma--Hoeffding yields that with probability at least
$1 - p/(SAK)$,
\[
\forall \tau \in [K]:
\quad
\left|
\sum_{i=1}^\tau
\alpha_\tau^i
\mathbb{I}[t_i \le K]
[(\widehat P_h^{t_i}-P_h)V^\star_{h+\tau}](x,a)
\right|
\le
c \bar H
\sqrt{
\sum_{i=1}^\tau (\alpha_\tau^i)^2 \iota
}.
\]

Using the same bound on the weighted square sum of step sizes as in
Lemma~\ref{lemma:learningrate}(b) (with $\eta=\bar H$), this is upper bounded by
\[
c\left(
\sqrt{\frac{\bar H^3 \iota}{t}}
+
L\gamma
\right).
\]

Since the inequality holds uniformly for all fixed $\tau$,
it also holds for the random variable $\tau=t=N_k^t(x,a)$.

\paragraph{Step 4: Final bound.}

On the high-probability event in Lemma~\ref{lemma:jumpcount_bound}, we may take
\[
\bar H \;:=\; 2\lambda H \;+\; 2\ln\!\Big(\frac{T}{\delta}\Big),
\]
so that $0 \le V^\star_h(x) \le \bar H$ holds uniformly for all $h\in[0,H]$ and $x\in\mathcal S$.

Choosing the bonus (consistent with the main text)
\[
b_t \;=\; c \sqrt{\frac{\bar H^3 \iota}{t}} + L\gamma
\;=\;
c \sqrt{\frac{\left(2\lambda H + 2\ln\!\left(\frac{T}{\delta}\right)\right)^3 \iota}{t}} + L\gamma,
\]
and using $\sum_{i=1}^t \alpha_t^i = 1$ together with Lemma~\ref{lemma:learningrate}(a) (with $\eta=\bar H$), we have
\[
\beta_t
=
2\sum_{i=1}^t \alpha_t^i b_i
\in
\left[
c \sqrt{\frac{\bar H^3 \iota}{t}} + 2L\gamma,
\,
2c \sqrt{\frac{\bar H^3 \iota}{t}} + 2L\gamma
\right],
\]
where the $L\gamma$ term again carries the Lipschitz discretization bias.

Combining all bounds and applying a union bound over $(x,a,k)$ yields that, on the event of
Lemma~\ref{lemma:jumpcount_bound} and with overall probability at least $1-p$,
\[
(Q^t_k - Q^\star_h)(x,a)
\le
\alpha_t^0 \bar H
+
\sum_{i=1}^t \alpha_t^i
\Big(
V^{t_i}_{k(h+\tau^{t_i})}
-
V^\star_{h+\tau^{t_i}}
\Big)\!\Big(x^{t_i}_{h+\tau^{t_i}}\Big)
+
\beta_t,
\]
for all $(x,a,k)$ and all $h$ in bin $k$ (with the convention $V^\star_H(\cdot)=0$).

The proof then proceeds by backward induction on the layer $h$ from $H$ to $0$,
using the terminal condition $V^\star_H = 0$.
\end{proof}

% =========================
% Appendix C: Fixed horizon H, random number of jumps
% (continuous-time episodic MDP with Poisson decision epochs)
% =========================

\subsection{Regret bound for the fixed-horizon (random-jump) setting}
In this appendix, we adapt the regret argument from the main text to the
\emph{fixed time horizon} setting, where each episode runs until continuous time
exceeds a prescribed horizon $H>0$, and the number of decision epochs (jumps)
within an episode is random.
The key difference from the fixed-$H$-jumps setting is that the Bellman recursion
and the regret decomposition run over the (random) jump index $m$ rather than a
fixed layer index $n\in[H]$.
To recover a finite-depth recursion, we first upper bound the maximum number of
jumps per episode with high probability (Lemma~\ref{lemma:jumpcount_bound}),
and then apply the same weighted-averaging and regrouping steps as in the
finite-horizon discrete-time proof, using the recursion in Lemma~\ref{lemma:qrecursion_fixedH}.

Let $N^{(t)}(H)$ denote the number of Poisson jumps that occur in episode $t$
up to (and including) the time at which the cumulative holding time first exceeds $H$.
Equivalently, if $\{\tau^{(t)}_j\}_{j\ge 1}$ are i.i.d.\ $\mathrm{Exp}(\lambda)$ holding times,
then $N^{(t)}(H) := \min\{m\ge 1:\sum_{j=1}^m \tau^{(t)}_j > H\}$.

\begin{theorem}[fixed horizon $H$, random number of jumps]
\label{thm:fixedH_randomjumps_regret}
Consider the continuous-time episodic tabular MDP with time horizon $H>0$ per episode,
Poisson decision epochs with rate $\lambda$, and $(L,\gamma)$-Lipschitz reward/transition
structure in time as in the main text.
Let $[0,H]$ be uniformly partitioned into bins of width $\gamma$ so that
$K=\lceil H/\gamma\rceil$, and run the (tabular) Q-learning update over augmented indices
$(x,k)$ with Hoeffding-style bonuses
\[
b_s = c_0\sqrt{\frac{\bar H^3\,\iota}{s}} + L\gamma,
\qquad
\iota=\log\!\Big(\frac{SAKT}{\delta}\Big),
\]
where $\bar H$ is the high-probability upper bound on the number of jumps per episode
from Lemma~\ref{lemma:jumpcount_bound}.
Then with probability at least $1-O(\delta)$,
\[
R_T \leq \widetilde{\mathcal{O}}\left(
KSA\,\bar H
+
\sqrt{\bar H^3\,\iota\cdot KSA\cdot T}
+
T\bar H\,L\gamma\right).
\]
In particular, choosing $\gamma = \Theta\left( T^{-1/3}\right)$ (up to problem-dependent constants)
yields a regret rate of order $\widetilde{\mathcal{O}}(T^{2/3})$.
\end{theorem}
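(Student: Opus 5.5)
The plan is to transplant the proof of Theorem~\ref{theorem:qlearning} onto the fixed-time setting. The random jump count is replaced by the effective depth $\bar H$ of Lemma~\ref{lemma:jumpcount_bound}, and the optimism bound used is Lemma~\ref{lemma:qrecursion_fixedH}. The whole argument runs on the intersection of three events, each holding with probability $1-O(\delta)$: the jump-count event $\{N^{(t)}(H)\le \bar H\ \forall t\}$, the Azuma event behind Lemma~\ref{lemma:qrecursion_fixedH}, and a final Azuma event for the trajectory martingale. On the first event, every value function and every estimate $Q^t_k$ lies in $[0,\bar H]$, because we initialize at $\bar H$. Azuma--Hoeffding with range $\bar H$ together with Lemma~\ref{lemma:learningrate} (taking $\eta=\bar H$) then gives $Q^t_k(x,a)\ge Q^\star_h(x,a)$ for every $h$ in bin $k$. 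This lower bound is proved by backward induction over the jump index remaining before $H$, exactly as in the proof of Lemma~\ref{lemma:qbound_continuoustime}. The same argument gives the upper bound
$Q^t_k-Q^\star_h\le \alpha^0_s\bar H+\sum_i\alpha^i_s\phi^{t_i}+\beta_s$, with $\beta_s\le C\sqrt{\bar H^3\iota/s}+CL\gamma$.

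Next, index the decision epochs of episode $t$ by $m=0,1,\dots$, stopping at the random termination time. Define $\delta^t_m=(V^t_{k^t_m}-V^{\pi_t}_{h^t_m})(x^t_m)$ and $\phi^t_m=(V^t_{k^t_m}-V^\star_{h^t_m})(x^t_m)$, and set both to zero once $m$ exceeds the number of jumps. This zero convention is consistent with $V_H\equiv 0$. Optimism gives $R_T\le\sum_t\delta^t_0$. Greediness combined with the Bellman equation \eqref{eq:fixedH_Qstar} then yields the one-step recursion
\[
\delta^t_m\le \alpha^0_s\bar H+\sum_{i=1}^s\alpha^i_s\phi^{t_i}_{(i)}+\beta_s-\phi^t_{m+1}+\delta^t_{m+1}+\xi^t_{m+1}.
\]
Here $\xi^t_{m+1}$ is a martingale difference bounded by $\bar H$; it comes from sampling the jump length $\tau$, the survival indicator $\mathbf 1\{h+\tau<H\}$, and the next state $x'$.

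\textbf{The main obstacle} is regrouping the historical terms. There is now a single table over $(k,x,a)$ rather than one table per $(n,k,x,a)$. As a result, the successor values $\phi^{t_i}$ credited to a visit at depth $m$ may sit at any depth, so the clean depth-by-depth recursion $D_m\le(1+1/\bar H)D_{m+1}+\cdots$ is not available. I would instead sum the recursion over all $(t,m)$ with $m\le\bar H$ simultaneously. Exchanging the order of summation and using Lemma~\ref{lemma:learningrate}(c) shows that each successor $\phi$ is counted with total weight at most $1+1/\bar H$. This gives $\sum_{t,m}\delta^t_m\le\sum_{t,m}[\alpha^0\bar H+\beta+\xi]+(1+1/\bar H)\sum_{t,m}\delta^t_{m+1}-\sum_{t,m}\phi^t_{m+1}+\cdots$. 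To close this, I would either prove an unrolled per-episode bound over at most $\bar H$ steps, using $(1+1/\bar H)^{\bar H}\le e$, or fall back to a coarser bound on $\sum_t\delta^t_0$ that loses at most a factor $\bar H$. The coarser version is acceptable, since the theorem tolerates the factor $T\bar H L\gamma$.

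Finally, bound the three sums. The initialization term contributes at most $KSA\bar H$, because $\alpha^0_s\ne 0$ only on first visits and there are $KSA$ tuples. For the bonus term, at most $T\bar H$ visits are distributed over the $KSA$ tuples, so pigeonhole plus Cauchy--Schwarz give $\sqrt{\bar H^3\iota\,KSA\,T\bar H}$; absorbing the extra $\bar H$ into $\widetilde{\mathcal O}$ gives the stated form. The bias term contributes $T\bar H L\gamma$. Azuma's inequality bounds the martingale sum by $\bar H\sqrt{T\bar H\iota}$, which is of lower order. With $K=H/\gamma$, balancing $\sqrt{T/\gamma}$ against $T\gamma$ gives $\gamma=\Theta(T^{-1/3})$ and hence regret $\widetilde{\mathcal O}(T^{2/3})$.
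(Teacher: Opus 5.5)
Your plan matches the paper's: optimism from Lemma~\ref{lemma:qrecursion_fixedH} with $\eta=\bar H$, a one-step recursion in the jump index, regrouping via Lemma~\ref{lemma:learningrate}(c), then bounding the initialization, bonus and martingale sums. You also correctly locate the real difficulty, which the paper's proof does not resolve either. With one table over $(k,x,a)$, the earlier visits $t_1,\dots,t_s$ to a tuple can occur at any jump depth. The paper nonetheless writes them as $\phi^{t_i}_{m+1}$ and then ``iterates'' an inequality that is already summed over all $m$.

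\textbf{The gap: you do not close this step.} Let $\Sigma:=\sum_{t,m}[\alpha^0_s\bar H+\beta_s+\xi^t_{m+1}]$. Summing over all $(t,m)$, regrouping, and telescoping gives
\[
\sum_t\delta^t_0\;\le\;\Sigma+\frac{1}{\bar H}\sum_{t,m}\phi^t_{m+1}.
\]
The last term has only the trivial bound $\frac{1}{\bar H}\cdot T\bar H\cdot\bar H=T\bar H$, which is linear in $T$.

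Neither of your two closures works. Option (i) reproduces this same inequality. The estimate $(1+1/\bar H)^{\bar H}\le e$ can enter only through a per-depth recursion $D_m\le(1+1/\bar H)D_{m+1}+\cdots$ with $D_m:=\sum_t\delta^t_m$, and you have just observed that this recursion is unavailable. Option (ii) needs every $D_j$ to be of the target order. But if you run the argument from depth $j$, the successors of visits at depth $<j$ are no longer subtracted, even though those visits share tuples with deeper visits. You then get
\[
D_j\;\le\;\Sigma_{\ge j}+\Big(1+\frac{1}{\bar H}\Big)\sum_{i=1}^{j}D_i+\frac{1}{\bar H}\sum_{i>j}D_i ,
\]
where $D_j$ reappears on the right with coefficient larger than $1$.

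One way out is to index the table by $(m,k,x,a)$ with $m<\bar H$. This restores the per-depth structure, and the fixed-jump argument then goes through with $H\to\bar H$. However, it changes the algorithm, and it yields $\bar H^2KSA+\sqrt{\bar H^5KSAT\iota}$ in place of the first two terms of the displayed bound. For the $(x,k)$-indexed algorithm in the statement, a genuinely new idea is needed.

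Two smaller points. First, your pigeonhole count for a single table, $\sqrt{\bar H^3\iota}\cdot\sqrt{KSA\cdot T\bar H}=\bar H^2\sqrt{\iota KSAT}$, is the right one. The paper's $\sqrt{\bar H^3\iota KSAT}$ ignores that there are $T\bar H$ visits. Still, $\bar H=2\lambda H+2\ln(T/\delta)$ is not logarithmic, so the extra $\sqrt{\bar H}$ cannot be absorbed into $\widetilde{\mathcal O}$, and neither can the factor $\bar H$ you would lose in option (ii). Such losses keep the $T^{2/3}$ rate but do not give the displayed bound. Second, each table entry is shared across depths, so optimism cannot be proved by backward induction over the jump index. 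It has to be proved by induction over the chronological order of updates, as in \cite{jin2018q}.
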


\begin{proof}
    Denote by
\[
\delta^t_m := \big(V^t_{m} - V^{\pi_t}_m\big)\big(x^t_m\big),
\qquad
\phi^t_m := \big(V^t_{m} - V^{\pi_t}_m\big)\big(x^t_0\big),
\]
where $m$ indexes the jump/decision epoch within episode $t$,
$x_m^t$ is the state at the $m$-th decision epoch,
and $V_m^{\pi_t}$ is the value-to-go under the executed policy $\pi_t$
starting from the $m$-th decision epoch (and similarly for $V_m^t$).

By the optimism lemma for this setting (proved from Lemma~\ref{lemma:qrecursion_fixedH} plus concentration),
with probability at least $1-O(\delta)$ we have $Q^t \ge Q^\star$ entrywise,
hence $V^t \ge V^\star$ and therefore
\[
R_T
:=
\sum_{t=1}^T \big(V^\star_0 - V^{\pi_t}_0\big)(x^t_0)
\;\le\;
\sum_{t=1}^T \big(V^t_0 - V^{\pi_t}_0\big)(x^t_0)
~=~
\sum_{t=1}^T \phi^t_0.
\]

We now relate $\sum_t \delta_m^t$ to $\sum_t \delta_{m+1}^t$.
Fix an episode $t$ and jump index $m$.
Let $h_m^t\in[0,H]$ be the continuous time of the $m$-th decision epoch in episode $t$,
and let $k_m^t\in\mathcal K$ denote the time-bin index of $h_m^t$ under a uniform
partition of $[0,H]$ into bins of width $\gamma$; hence $|\mathcal K| = K := \lceil H/\gamma\rceil$.
Write $N^t_{k}(x,a)$ for the number of times up to episode $t$ that action $a$ was taken in state $x$
when the decision epoch time landed in bin $k$.

Let $s := N^t_{k_m^t}(x_m^t,a_m^t)$ and let $t_1,\dots,t_s<t$ be the episodes in which the same
triple $(k_m^t,x_m^t,a_m^t)$ occurred previously at some jump index.
Applying Lemma~\ref{lemma:qrecursion_fixedH} (the fixed-horizon/random-jump recursion on $Q$) and the Bellman equation
gives the analogue of the standard one-step inequality:
\begin{align}
\delta_m^t
&=
\big(V_m^t - V_m^{\pi_t}\big)(x_m^t)
\;\le\;
\big(Q_m^t - Q_m^{\pi_t}\big)(x_m^t,a_m^t)
\notag\\
&=
\big(Q_m^t - Q_m^\star\big)(x_m^t,a_m^t)
+
\big(Q_m^\star - Q_m^{\pi_t}\big)(x_m^t,a_m^t)
\notag\\
&\le
\alpha_s^0\,\bar H
+
\sum_{i=1}^s \alpha_s^i\,\phi^{t_i}_{m+1}
+
\beta_s
+
\big[\mathbb P_{h_m^t}\big(V_{m+1}^\star - V_{m+1}^{\pi_t}\big)\big](x_m^t,a_m^t)
\notag\\
&=
\alpha_s^0\,\bar H
+
\sum_{i=1}^s \alpha_s^i\,\phi^{t_i}_{m+1}
+
\beta_s
-
\phi^t_{m+1}
+
\delta^t_{m+1}
+
\xi^t_{m+1},
\label{eq:delta_bound_fixedH}
\end{align}
where $\xi^t_{m+1}$ is a martingale difference term of the form
\[
\xi^t_{m+1}
:=
\Big[\big(\mathbb P_{h_m^t}-\widehat{\mathbb P}^{\,t}_{h_m^t}\big)\big(V^\star_{m+1}-V^t_{m+1}\big)\Big](x_m^t,a_m^t),
\]
and $\beta_s := 2\sum_{i=1}^s \alpha_s^i b_i$.
For Hoeffding-style bonuses we take
\[
b_s \;=\; c_0\sqrt{\frac{\bar H^3\,\iota}{s}} + L\gamma,
\qquad
\iota := \log\!\Big(\frac{SAK T}{\delta}\Big),
\]
so that $\beta_s \leq \widetilde{\mathcal{O}}\left( \sqrt{\bar H^3\iota/s} + L\gamma \right)$ uniformly.

\paragraph{Summing and regrouping.}
Work on the event of Lemma~\ref{lemma:jumpcount_bound} so that every episode has at most $\bar H$ jumps.
Summing \eqref{eq:delta_bound_fixedH} over all episodes $t\in[T]$ and jump indices
$m\in\{0,1,\dots,N^{(t)}(H)-1\}$ (and upper bounding by summing $m=0$ to $\bar H-1$) yields
\[
\sum_{t=1}^T\sum_{m=0}^{\bar H-1} \delta_m^t
\;\le\;
\underbrace{\sum_{t=1}^T\sum_{m=0}^{\bar H-1}\alpha_s^0\,\bar H}_{\text{initialization term}}
+
\underbrace{\sum_{t=1}^T\sum_{m=0}^{\bar H-1}\sum_{i=1}^s \alpha_s^i\,\phi^{t_i}_{m+1}}_{\text{regroup}}
+
\sum_{t=1}^T\sum_{m=0}^{\bar H-1}\beta_s
-
\sum_{t=1}^T\sum_{m=0}^{\bar H-1}\phi_{m+1}^t
+
\sum_{t=1}^T\sum_{m=0}^{\bar H-1}\delta_{m+1}^t
+
\sum_{t=1}^T\sum_{m=0}^{\bar H-1}\xi_{m+1}^t.
\]

We bound the initialization term by counting distinct tabular entries.
The indicator $\alpha_s^0$ is nonzero only on the first visit to a given
$(k,x,a)$, hence
\[
\sum_{t=1}^T\sum_{m=0}^{\bar H-1}\alpha_s^0\,\bar H
\;\le\;
|\mathcal K|\cdot S\cdot A \cdot \bar H
\;=\;
KSA\,\bar H.
\]
This is the place where the effective number of "states'' is $S|\mathcal K| = SK$,
since we are learning tabular values indexed by $(x,k)$.

The regrouping term is handled exactly as in the standard Q-learning proof:
each future error term $\phi^{t'}_{m+1}$ can be charged to later visits to the same
$(k,x,a)$, and one obtains (after the usual rearrangement) a factor
$\left(1+\frac{1}{\bar H}\right)$ in front of the shifted sum, using
Lemma~\ref{lemma:learningrate}(c) (with $\eta=\bar H$). Concretely,
\[
\sum_{t=1}^T\sum_{m=0}^{\bar H-1}\sum_{i=1}^s \alpha_s^i\,\phi^{t_i}_{m+1}
\;\le\;
\left(1+\frac{1}{\bar H}\right)
\sum_{t=1}^T\sum_{m=0}^{\bar H-1}\delta_{m+1}^t,
\]
and therefore we arrive at the recursion
\begin{equation}
\sum_{t=1}^T\sum_{m=0}^{\bar H-1}\delta_m^t
\;\le\;
KSA\,\bar H
+
\left(1+\frac{1}{\bar H}\right)
\sum_{t=1}^T\sum_{m=0}^{\bar H-1}\delta_{m+1}^t
+
\sum_{t,m}\beta_{t,m}
+
\sum_{t,m}\xi_{t,m}.
\label{eq:master_recursion_fixedH}
\end{equation}

\paragraph{Controlling $\sum\beta$ and $\sum\xi$.}
Using the same pigeonhole/AM--GM argument as in the discrete-time proof
(Lemma~\ref{lemma:algebra}),
\[
\sum_{t=1}^T\sum_{m=0}^{\bar H-1}\beta_{t,m}
\;\leq\;
\widetilde{\mathcal{O}}\left(
\sqrt{\bar H^3\,\iota\cdot KSA\cdot T}
\;+\;
T\bar H\,L\gamma
\right),
\]
where the second term is the Lipschitz discretization error accumulated over at most
$\bar H$ jumps per episode.
Moreover, by Azuma--Hoeffding (and a union bound over $(k,x,a)$ if needed),
with probability at least $1-\delta$,
\[
\left|\sum_{t=1}^T\sum_{m=0}^{\bar H-1}\xi_{t,m}\right|
\;\leq\;
\widetilde{\mathcal{O}}\left( \bar H\sqrt{T\iota} \right).
\]

Finally, iterating \eqref{eq:master_recursion_fixedH} for $m=0,1,\dots,\bar H-1$ and using
the terminal condition $\delta^t_{m}=0$ for $m\ge N^{(t)}(H)$ gives
\[
\sum_{t=1}^T \phi_0^t
\;\leq\;
\widetilde{\mathcal{O}}\left(
KSA\,\bar H
+
\sqrt{\bar H^3\,\iota\cdot KSA\cdot T}
+
T\bar H\,L\gamma
\right),
\]
and hence the same bound holds for $R_T$.
\end{proof}

\section{Lower Bound Construction}
\label{sec:lower_bound_construction}

In this section we derive a minimax $\Omega(T^{2/3})$ regret lower bound for the
fixed time-horizon continuous-time MDP setting with Poisson decision epochs.
The construction follows the standard information-theoretic technique used in
\cite{jin2018q}: we build a family of hard instances
indexed by $\theta\in\{\pm 1\}^K$ such that (i) the optimal action differs across
time bins, (ii) rewards are $L$-Lipschitz in time, and (iii) distinguishing
between two instances that differ in a single bin has small KL divergence unless
the learner visits that bin many times. Balancing the KL (estimation) constraint
with the Lipschitz (approximation) constraint yields a $T^{2/3}$ rate.

\subsection{Problem class}
We consider episodic continuous-time MDPs with the following structure:
each episode runs until continuous time exceeds a fixed budget $H>0$; decision
epochs occur according to a homogeneous Poisson process of rate $\lambda>0$
(i.e., holding times are i.i.d.\ $\mathrm{Exp}(\lambda)$). Rewards are bounded in $[0,1]$
and may depend on time, state, and action. We assume the reward functions are
$L$-Lipschitz in time (and transitions may also be $L$-Lipschitz, although the
lower bound uses deterministic self-loop transitions).

Let $N^{(t)}(H)$ be the number of decision epochs in episode $t$ up to time $H$.
Then $\mathbb{E}[N^{(t)}(H)] = \lambda H$, and over $T$ episodes the expected total
number of decisions is
\[
\mathbb{E}[N] \;=\; \mathbb{E}\!\left[\sum_{t=1}^T N^{(t)}(H)\right] \;=\; \lambda H T.
\]
The regret is defined as
\[
R_T \;:=\; \sum_{t=1}^T \Big(V^\star_0 - V^{\pi_t}_0\Big)(x^t_0),
\]
where $\pi_t$ is the learner's (possibly history-dependent) policy in episode $t$.

\subsection{Hard instance family}
Fix an integer $K\ge 1$ and partition $[0,H]$ into $K$ disjoint bins
\[
I_j := \left[(j-1)\gamma,\, j\gamma\right), \qquad \gamma := \frac{H}{K},
\qquad j=1,\dots,K.
\]
Let $g_j:[0,H]\to[0,1]$ be a "bump'' function supported on $I_j$ such that
\begin{enumerate}
\item $0\le g_j(h)\le 1$ for all $h\in[0,H]$,
\item $g_j(h)=0$ for $h\notin I_j$,
\item $g_j$ is $\frac{c_g}{\gamma}$-Lipschitz for an absolute constant $c_g>0$,
\item $\int_{0}^{H} g_j(h)\,dh \ge c_0 \gamma$ for an absolute constant $c_0>0$.
\end{enumerate}
(Such a function can be taken as a triangular bump on $I_j$.)

We build a family of instances indexed by $\theta\in\{\pm 1\}^K$.
Each instance has a single state (or $S$ independent copies; see below) and two
actions $\mathcal{A}=\{1,2\}$. The transition is a deterministic self-loop, so the
only difficulty is learning the time-dependent rewards.
For $h\in[0,H]$, define the mean rewards
\begin{equation}
\label{eq:reward_family}
r_h(1) \;=\; \frac12 + \Delta \sum_{j=1}^K \theta_j g_j(h),
\qquad
r_h(2) \;=\; \frac12 - \Delta \sum_{j=1}^K \theta_j g_j(h),
\end{equation}
where $\Delta>0$ is a gap parameter chosen below, and realized rewards are Bernoulli
with these means. (Any bounded noise model with sub-Gaussian tails works.)

\paragraph{Lipschitz feasibility.}
Since each $g_j$ is $\frac{c_g}{\gamma}$-Lipschitz and the bumps have disjoint support,
the function $h\mapsto r_h(a)$ is $L$-Lipschitz provided
\begin{equation}
\label{eq:Delta_Lipschitz}
\Delta \;\le\; \frac{L\gamma}{c_g}.
\end{equation}

\paragraph{Embedding $S$ states.}
To obtain an $S$ factor, one may take a disjoint union of $S$ independent copies of
the above one-state construction (each copy has its own state but the same time
axis and reward structure). Starting states are chosen so that each episode begins
in a uniformly random copy, or the environment cycles through copies; either way,
standard arguments yield an $S$-fold increase in regret lower bounds. For simplicity,
we state the theorem with the $S$ factor.

\subsection{Lower bound theorem}
\begin{theorem}[Minimax $T^{2/3}$ lower bound]
\label{thm:lower_bound_T23}
Fix $\lambda>0$, $H>0$, and $L>0$. Consider the class of fixed-horizon continuous-time
episodic MDPs with Poisson decision epochs of rate $\lambda$, time budget $H$ per episode,
bounded rewards in $[0,1]$, and $L$-Lipschitz dependence on the continuous time variable.
Assume $A\ge 2$.

There exists an absolute constant $c>0$ such that for every learning algorithm,
there exists an MDP in this class with $S$ states for which the expected regret after
$T$ episodes satisfies
\[
\mathbb{E}[R_T]
\;\ge\;
c\, S\, L^{1/3}\,(\lambda H T)^{2/3},
\]
up to universal constant factors (and ignoring logarithmic terms).
\end{theorem}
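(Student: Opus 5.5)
The plan is to extend the one-state construction of the sketch to an Assouad-type argument over the hypercube $\{\pm1\}^{S\times K}$, and then sum per-coordinate regret. The one-state piece is the family \eqref{eq:reward_family} with $\Delta\le L\gamma/c_g$ from \eqref{eq:Delta_Lipschitz}.

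\textbf{One-state core.} Fix a bin $I_j$ and let $N_j$ be the number of decision epochs landing in $I_j$ over $T$ episodes. The Poisson process restricted to $[0,H]$ has intensity $\lambda$, so by thinning $\mathbb{E}[N_j]=\lambda\gamma T=\lambda HT/K$, independently of the learner's actions. Take $\theta$ and $\theta^{(j)}$ differing only in coordinate $j$. By the chain rule for KL over the interaction protocol, and since the Bernoulli means differ by at most $2\Delta$ and lie in $[1/4,3/4]$,
\[
\mathrm{KL}(\mathbb{P}_\theta,\mathbb{P}_{\theta^{(j)}})\le C\Delta^2\,\lambda HT/K .
\]

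\textbf{Assouad step.} Condition~4 of the bump says $\int g_j\ge c_0\gamma$. Hence every decision in $I_j$ at which the learner picks the action disfavored by $\theta_j$ costs $2\Delta g_j(h)$, and the expected such cost per bin is $\gtrsim \Delta\lambda HT/K$ times the probability of a wrong choice. The regret decomposes additively over bins, because the transitions are trivial and $V^\star-V^{\pi_t}$ is the expected sum of per-epoch reward gaps. Assouad's lemma (equivalently, Bretagnolle--Huber applied coordinatewise and averaged over uniform $\theta$) then gives
\[
\sup_\theta\mathbb{E}_\theta[R_T]\ge c\,\Delta\lambda HT
\quad\text{whenever}\quad C\Delta^2\lambda HT/K\le 1/8 .
\]
Setting $\Delta=\min\{L\gamma/c_g,\;c_4\sqrt{K/(\lambda HT)}\}$ with $\gamma=H/K$ and balancing yields $K=\Theta((L^2\lambda H^4T)^{1/3})$ and the bound $c\,L^{1/3}(\lambda HT)^{2/3}$. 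This is the $S=1$ case of the theorem.

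\textbf{$S$ states.} I would index hard coordinates by pairs $(x,j)\in\mathcal{S}\times[K]$, with independent sign vectors $\theta^{x}\in\{\pm1\}^K$ per state. The aim is for the regret contributions of distinct states to add while each state keeps the \emph{full} one-state difficulty. This step is the main obstacle. If the $\lambda HT$ decisions are merely split among the $S$ copies, whether by uniform random starts, cycling, or uniform transitions, each $(x,j)$ receives only $\lambda HT/(SK)$ samples. Rebalancing then gives $S\cdot L^{1/3}(\lambda HT/S)^{2/3}=S^{1/3}L^{1/3}(\lambda HT)^{2/3}$, which is not linear in $S$. Moreover, the total regret is trivially at most $\mathbb{E}[N]=\lambda HT$. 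So a genuinely linear factor is only consistent in the regime where $S L^{1/3}(\lambda HT)^{2/3}\lesssim\lambda HT$, which should be imposed as the nondegeneracy condition, analogous to Theorem~\ref{thm:fixed_jump_lower_bound}.

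Within that regime, I would first try to keep every state's per-bin sample count at order $\lambda HT/K$ and pay the $S$ factor through the gap rather than through the sample count. Concretely, I would use deterministic state transitions that route the trajectory so that rewards observed in one state are uninformative about the signs of the others. I would then check whether the per-coordinate regret can be charged at rate $\Delta\lambda HT/K$ for each of the $SK$ coordinates without exceeding the $\lambda HT$ budget. If this accounting fails, the honest output of the argument is the $S^{1/3}$ version, and the stated linear-in-$S$ factor would need either a larger reward range per episode or a separate argument.
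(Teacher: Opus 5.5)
Your one-state core coincides with the paper's Steps 1--5: the same bump family, the thinning count $\lambda HT/K$ per bin, the Bernoulli KL bound, Bretagnolle--Huber per bin, and summation over bins. At the $S$-state step, the paper does exactly what you argue fails. It asserts, with no further argument, that a disjoint union of $S$ independent copies multiplies the regret by $S$. Your diagnosis is correct, and it exposes a genuine flaw in the paper's proof rather than in your reading of it.

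The copies share a single decision budget, $\sum_{x,j}N_{x,j}=N$ with $\mathbb{E}[N]\approx\lambda HT$. Each copy is a Lipschitz contextual bandit with $N_x$ decisions, so it can contribute at most order $N_x^{2/3}$ (times parameter factors). Since $\sum_x N_x^{2/3}\le S^{1/3}N^{2/3}$ by concavity, whatever the split, this construction yields only an $S^{1/3}$ factor.

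The repair you sketch cannot work for the same reason. The per-coordinate counts sum to $N$, so they cannot all be of order $\lambda HT/K$. Also, $\Delta$ is capped by $L\gamma/c_g$ independently of $S$. A cleverer construction would not help either. The paper's own fixed-time upper bound (Theorem~\ref{thm:fixedH_randomjumps_regret}) becomes $\widetilde{\mathcal{O}}((SA)^{1/3}T^{2/3})$ after optimizing $\gamma$, with the remaining parameters in the constant. So for $S$ large relative to $A,\lambda,H,L$, the claimed $S\,L^{1/3}(\lambda HT)^{2/3}$ would exceed it. This happens even when $T\gtrsim S^3L/(\lambda H)$ places you in your nondegenerate regime. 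That condition is therefore necessary but not sufficient. The missing step cannot be supplied, and the $S^{1/3}$ version you fall back on is the correct conclusion.

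A second problem, also inherited from the paper, is the balancing of $K$. Solving $LH/K\asymp\sqrt{K/(\lambda HT)}$ gives $K^3\asymp L^2\lambda H^3T$, not $L^2\lambda H^4T$. Hence $\Delta\lambda HT\asymp(LH)^{1/3}(\lambda HT)^{2/3}=L^{1/3}\lambda^{2/3}HT^{2/3}$. With your choice $K\asymp(L^2\lambda H^4T)^{1/3}$ and $\Delta$ the minimum of the two caps, the Lipschitz cap is active only when $H\gtrsim 1$. For $H<1$ the statistical cap binds, and you obtain only $L^{1/3}\lambda^{2/3}H^{7/6}T^{2/3}$.

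This is not cosmetic. Rescaling time by $h\mapsto h/c$ maps $(\lambda,H,L)$ to $(c\lambda,H/c,cL)$ and leaves the problem and its minimax regret unchanged. Meanwhile $L^{1/3}(\lambda HT)^{2/3}$ is multiplied by $c^{1/3}$. So already for $S=1$, the stated bound fails for suitable parameters satisfying $L^{1/3}(\lambda HT)^{2/3}\le\lambda HT$.

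What your Assouad argument over $\{\pm1\}^{S\times K}$ actually establishes is $\mathbb{E}[R_T]\gtrsim S^{1/3}(LH)^{1/3}(\lambda HT)^{2/3}$. This holds in the regime where each copy has $K\ge 1$ and $\Delta\le 1/4$.
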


\begin{proof}
We prove the lower bound for the one-state construction; the $S$-state version follows
by taking $S$ independent copies and summing regrets.

\paragraph{Step 1: Reduction to a contextual bandit over time bins.}
Since transitions are deterministic self-loops, the only decision is which action to take
at each decision epoch time $h\in[0,H]$. The time $h$ therefore plays the role of a context.
Let $N := \sum_{t=1}^T N^{(t)}(H)$ be the total number of decision epochs across $T$ episodes.
We have $\mathbb{E}[N]=\lambda H T$.

Let $N_j$ be the number of decision epochs whose time $h$ lies in bin $I_j$. By symmetry of the
Poisson process on $[0,H]$,
\begin{equation}
\label{eq:Nj_expectation}
\mathbb{E}[N_j] \;=\; \frac{\gamma}{H}\,\mathbb{E}[N] \;=\; \frac{\lambda H T}{K}.
\end{equation}

\paragraph{Step 2: Pairwise KL bound for flipping one bin.}
Fix a bin $j\in[K]$ and two instances $\theta,\theta^{(j)}$ that differ only in coordinate $j$.
Under both instances, all randomness is identical except for the reward distributions at decision
epochs whose times fall in $I_j$.
For Bernoulli rewards with means in $[1/4,3/4]$ (ensured by choosing $\Delta$ small enough),
the one-step KL between the two reward distributions is $\mathrm{kl}(p,q)\le c_1 (p-q)^2$ for
an absolute constant $c_1>0$.
In bin $I_j$, the mean gap between instances is $|r_h^\theta(a)-r_h^{\theta^{(j)}}(a)| \le 2\Delta$,
so each such observation contributes at most $c_2 \Delta^2$ KL.
Therefore, for any learning algorithm,
\begin{equation}
\label{eq:KL_bound}
\mathrm{KL}\!\left(\mathbb{P}_\theta \,\middle\|\, \mathbb{P}_{\theta^{(j)}}\right)
\;\le\;
c_2 \Delta^2 \,\mathbb{E}_\theta[N_j],
\end{equation}
where $\mathbb{P}_\theta$ denotes the law of the entire interaction under instance $\theta$.

\paragraph{Step 3: Testing lower bound implies mistakes in bin $j$.}
By Bretagnolle--Huber (or Le Cam's two-point method), for any event $E$ measurable with respect
to the interaction history,
\[
\mathbb{P}_\theta(E)+\mathbb{P}_{\theta^{(j)}}(E^c)
\;\ge\;
\frac12 \exp\!\Big(-\mathrm{KL}(\mathbb{P}_\theta\|\mathbb{P}_{\theta^{(j)}})\Big).
\]
In particular, if $\mathrm{KL}(\mathbb{P}_\theta\|\mathbb{P}_{\theta^{(j)}})\le 1/8$, then any algorithm
cannot correctly identify the sign $\theta_j$ with probability much larger than $1/2$, and consequently
must choose the suboptimal action in bin $j$ a constant fraction of the times it visits that bin.
Hence there exists an absolute constant $c_3>0$ such that, whenever
\begin{equation}
\label{eq:hardness_condition}
c_2 \Delta^2 \mathbb{E}_\theta[N_j] \;\le\; \frac18,
\end{equation}
we have
\begin{equation}
\label{eq:bin_regret}
\mathbb{E}_\theta[R_j] \;\ge\; c_3\,\Delta\,\mathbb{E}_\theta[N_j],
\end{equation}
where $R_j$ denotes the contribution to regret coming from decision epochs with $h\in I_j$.

\paragraph{Step 4: Summing over bins.}
Summing \eqref{eq:bin_regret} over $j=1,\dots,K$ and using \eqref{eq:Nj_expectation} yields
\begin{equation}
\label{eq:total_regret_preopt}
\mathbb{E}_\theta[R_T]
\;\ge\;
\sum_{j=1}^K c_3 \Delta \mathbb{E}_\theta[N_j]
\;=\;
c_3 \Delta \,\mathbb{E}[N]
\;=\;
c_3 \Delta\,\lambda H T,
\end{equation}
provided \eqref{eq:hardness_condition} holds for all $j$.

By \eqref{eq:Nj_expectation}, the hardness condition \eqref{eq:hardness_condition} is ensured if
\begin{equation}
\label{eq:Delta_stat}
\Delta \;\le\; c_4 \sqrt{\frac{K}{\lambda H T}},
\end{equation}
for a suitable absolute constant $c_4>0$.

\paragraph{Step 5: Optimize $K$ under Lipschitz and statistical constraints.}
We must choose $\Delta$ to satisfy both the Lipschitz feasibility \eqref{eq:Delta_Lipschitz} and
the statistical hardness \eqref{eq:Delta_stat}.
Set $\Delta$ to be the largest value satisfying both, i.e.,
\[
\Delta
\;=\;
\min\left\{\frac{L\gamma}{c_g},\; c_4\sqrt{\frac{K}{\lambda H T}}\right\}
\;=\;
\min\left\{\frac{LH}{c_g K},\; c_4\sqrt{\frac{K}{\lambda H T}}\right\}.
\]
Choose $K$ so that the two terms balance:
\[
\frac{LH}{K}
\;=\;
\Theta\left( \sqrt{\frac{K}{\lambda H T}} \right)
\quad\Longrightarrow\quad
K^3 \;=\; \Theta\left( L^2 \lambda H^3 T \cdot H \right)
\;=\; \Theta\left( L^2 \lambda H^4 T \right).
\]
Thus we take
\begin{equation}
\label{eq:K_star}
K \;=\; \Theta\left( (L^2 \lambda H^4 T)^{1/3} \right),
\qquad
\Delta \;=\; \Theta\left( \frac{LH}{K} \right) \;=\; \Theta\left( L^{1/3}(\lambda H T)^{-1/3} \right).
\end{equation}
Plugging into \eqref{eq:total_regret_preopt} yields
\[
\mathbb{E}_\theta[R_T]
\;\ge\;
c\, L^{1/3}(\lambda H T)^{2/3},
\]
for an absolute constant $c>0$.

Finally, the $S$-state extension (disjoint union of $S$ independent copies) yields an
additional multiplicative factor of $S$ in the regret lower bound, completing the proof.
\end{proof}

\end{document}